\tikzset{node distance=3.5cm, 
every state/.style={ 
semithick,
fill=gray!10},
initial text={}, 
double distance=2pt, 
every edge/.style={ 
                draw,
->,>=stealth, 
auto,
semithick}}
\newtheorem{problem}{Problem}
\newtheorem{corollary}{Corollary}
\newtheorem{lemma}{Lemma}
\newtheorem{definition}{Definition}
\newtheorem{example}{Example}
\newtheorem{remark}{Remark}
\newtheorem{theorem}{Theorem}
\title{Joint Learning of Reward Machines and Policies in Environments with Partially Known Semantics}
\author{Christos K. Verginis, Cevahir Koprulu, Sandeep Chinchali, Ufuk Topcu
}
\date{}
\begin{document}
	\maketitle

		\begin{abstract}
			We study the problem of reinforcement learning 
			for a task encoded by a reward machine.  
			The task 
			is defined over a set of properties in the environment, called atomic propositions, and represented by Boolean variables. 
			One unrealistic assumption commonly used in the literature is that the truth values of these propositions are accurately known. In real situations, however, these truth values are uncertain since they come from sensors that suffer from imperfections.  
			At the same time, reward machines can be difficult to model explicitly, especially when they encode complicated tasks.  
			We develop a reinforcement-learning algorithm that infers a reward machine that encodes the 		 	underlying task while learning how to execute it, despite the uncertainties
			of the propositions' truth values. In order to address such uncertainties, the algorithm maintains a probabilistic estimate about the truth value of the atomic propositions; it updates this estimate according to new sensory measurements that arrive from \textcolor{blue}{exploration} of the environment. 
			Additionally, the algorithm maintains a hypothesis reward machine, which acts as an estimate of the reward machine that encodes the task to be learned. As the agent explores the environment, the algorithm updates the hypothesis reward machine according to the obtained rewards and the estimate of the atomic propositions' truth value. Finally, the algorithm uses a Q-learning procedure for the states of the hypothesis reward machine to determine \textcolor{blue}{an optimal policy that accomplishes the task.} 			
			We prove that the algorithm successfully infers the reward machine and asymptotically learns a policy that accomplishes the respective task.			
		\end{abstract}
		
		
	
	
	\section{Introduction}
	Reinforcement learning (RL) studies the problem of learning an optimal behavior for agents with unknown dynamics in potentially unknown environments. 
	\textcolor{blue}{When the task assigned to the agent admits a certain form of compositionality, such as decomposition to temporally dependent sub-tasks, a variety of methods incorporate high-level knowledge that can help the agent explore the environment more efficiently \cite{taylor2007cross}. }
	This high-level knowledge is usually expressed through abstractions of the environment and sub-task sequences 
\cite{singh1992reinforcement,kulkarni2016hierarchical,abel2018state} or in linear temporal logic \cite{li2017reinforcement,aksaray2016q}.
\textcolor{blue}{Intuitively, such methodologies inform the agent about the compositionality of the task, including its decomposition to sub-tasks as well as the sub-task execution sequences that lead to its completion. 
Consequently, 
the agent learns sub-task policies to execute such sequences and 
accomplish the overall task.}
%
	Recently, the authors in \cite{icarte2018using} proposed the concept of \textit{reward machines} in order to provide high-level information to the agent in the form of rewards. 	
	\textcolor{blue}{Reward machines are finite-state structures that encode a possibly non-Markovian reward function. They decompose a task into several temporally related subtasks, such as ``get coffee and bring it to the office without encountering obstacles".} Reward machines allow the composition of such subtasks in flexible ways, including concatenations, loops, and conditional rules, providing the agent access to high-level temporal 
relationships among the subtasks. These subtasks are defined over a set of properties in the environment, called atomic propositions, and represented by Boolean variables, such as ``office'' and ``obstacle'' in the aforementioned example. Intuitively, as the agent navigates in the environment, it transits between the states within a reward machine; such transitions are triggered by the truth values of the atomic propositions. 
 	After every such transition, the reward machine outputs the reward the agent should obtain according to the encoded reward function. Furthermore, \cite{icarte2018using} develops a q-learning method that learns the optimal policy associated with the reward function that is encoded by the reward machine. 

In many practical situations, the reward machines encode complex temporal relationships among the subtasks and are hence too difficulty to construct. Therefore, the need emerges for the learning agent to learn how to accomplish the underlying task without having a priori access to the respective reward machine.   The work \cite{xu2020joint} develops an algorithm that infers both the reward machine and an optimal RL policy from the trajectories of the system. 

	\textcolor{blue}{The procedure of inferring and using a reward machine is tightly connected to the values of the atomic propositions; such values trigger the reward machine's transitions, which output the reward the agent should obtain. 
	In related literature, there are two main methods to encode the learning agent's access to the truth values of the atomic propositions. First, the agent has access to a so-called \textit{labelling function}, a map that provides which atomic propositions are true in which areas of the environment, e.g., which room of an indoor environment has an office. Second, the agent accurately detects the truth values of the atomic propositions from sensors as it navigates in the environment.}
	
\textcolor{blue}{The aforementioned methods rely on assumptions that are unrealistic in practical scenarios. 	     
Firstly, many situations involve agents operating in a priori unknown environments, and hence accurate knowledge of a labelling function cannot be obtained a priori. Secondly, autonomous agents are endowed with sensors that naturally suffer from imperfections, such as misclassifications or missed object detections. Consequently, the truth values of the atomic propositions are uncertain. When the reward machine that encodes the task is a priori unknown,  uncertainties in such values complicate significantly the RL problem since they interfere with the estimation of the reward machine and, consequently, the learning of a suitable policy. Therefore, it is necessary to develop RL algorithms that are robust to such uncertainties.}
	 	
	\textcolor{blue}{In this paper, we investigate an RL problem for a task encoded by a reward machine over a set of atomic propositions.
	The reward machine and the labelling function associated with the atomic propositions are a priori unknown. 
	The agent obtains the truth values of the atomic propositions via measurements from sensors that suffer from imperfections, leading to uncertainties in the atomic propositions' truth values. 
	Our main contribution lies in the development of an RL algorithm that learns  a policy that achieves the given task despite the a priori unknown reward machine and the atomic proposition uncertainties. 
In order to account for such uncertainties, the algorithm 
 holds a probabilistic belief over the truth values of the atomic propositions and it updates this belief according to new sensory measurements that arrive from the exploration of the environment. 
 Furthermore, the algorithm maintains a so-called hypothesis reward machine to estimate the unknown reward machine that encodes the task. It uses the rewards the agent obtains as it explores the environment and the aforementioned belief on the atomic propositions to update this hypothesis reward machine.  
 Finally, the algorithm deploys a q-learning procedure that is based on the hypothesis reward machine and the probabilistic belief to learn \textcolor{blue}{an optimal policy} that accomplishes the underlying task. An illustrative diagram of the proposed algorithm is depicted in Fig. \ref{fig:algorithm}. }

We establish theoretical guarantees on the algorithm's convergence to a policy that accomplishes the underlying task. Our guarantees rely on the following sufficient conditions. First, the belief updates lead to better estimation of the truth values of the atomic propositions. That is, there exists a finite time instant after which the probability that the estimated propositions' values match the actual ones is more than $0.5$. 
Second, the learning agent explores the environment sufficiently well in each episode of the proposed RL algorithm. More specifically, the length of each episode is larger than a pre-defined constant associated with the size of the operating environment and of the reward machine that encodes the task to be accomplished.
Based on the aforementioned assumptions, we prove that the proposed algorithm 1) asymptotically infers a reward machine that is equivalent to the one encoding the underlying task, and 2) asymptotically learns a policy that accomplishes this task. 
 
\textcolor{blue}{ Finally, we carry out experiments comparing the proposed algorithm with a baseline version that does not update the beliefs of the propositions' truth values as well as with nominal q-learning and double deep q-network (DDQN) algorithms. The experiments show that the proposed algorithm outperforms the baseline version and the nominal algorithm in terms of convergence.}
 The experiments further show the robustness of the proposed algorithm to inaccurate sensor measurements.
 
	
	
	\begin{figure}
		\centering
		\includegraphics[width=\textwidth, trim={0cm 0cm 0cm 0cm}]{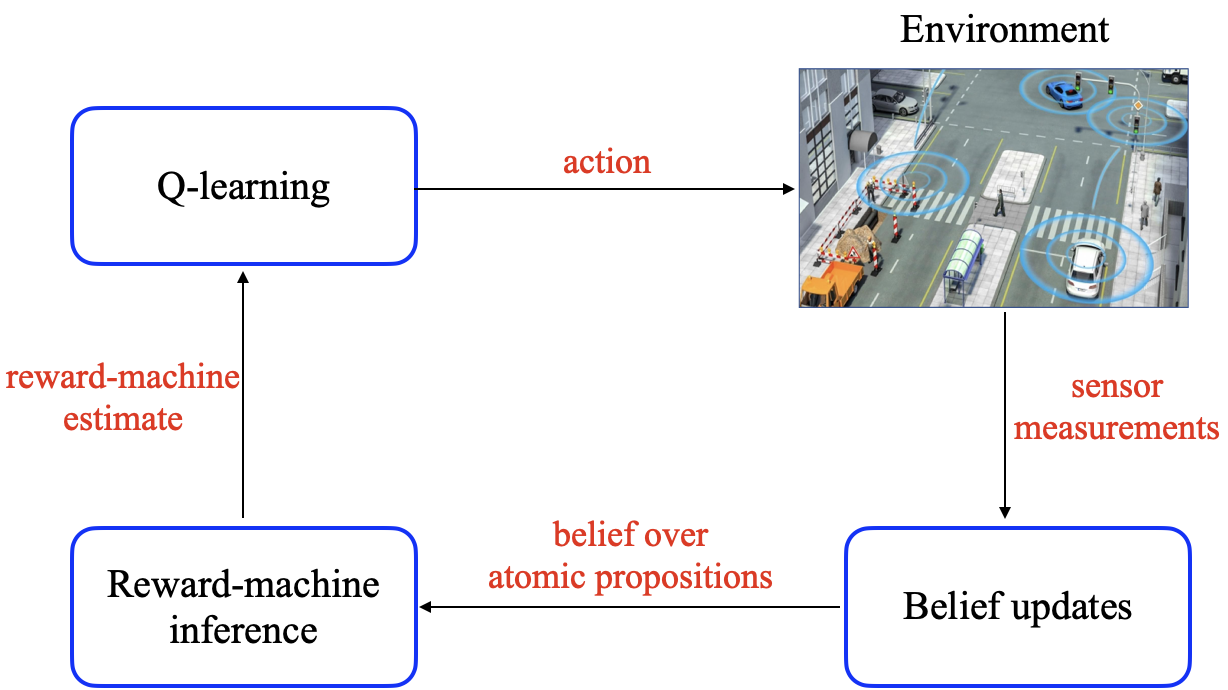}
		\caption{Overview of the proposed reinforcement-learning algorithm, consisting of perception updates, inference of the reward machine, and q-learning. }
		\label{fig:algorithm}
	\end{figure}

	\section{Related Work}
	Previous RL works on providing higher-level structure and knowledge about the reward function focus mostly on  abstractions and hierarchical RL; in such cases, a meta-controller decides which subtasks to perform, and a controller decides which actions to take within a subtask (e.g., \cite{konidaris2019necessity,kulkarni2016hierarchical,andreas2017modular}). Other works use temporal logic languages to express a specification, and then generate the corresponding reward functions, \textcolor{blue}{typically using deterministic automata (DA)} \cite{wen2015correct,li2017reinforcement,
sadigh2014learning,aksaray2016q,hasanbeig2018logically,
toro2018teaching,wang2021reinforcement,bozkurt2020control,
hahn2019omega,carr2020verifiable}. Inspired by \cite{icarte2018using}, we use in this work the concept of reward machines, which encode more compactly and expressively high-level information on the reward function of the agent. \textcolor{blue}{In particular and unlike DA, reward machines offer the ability of assigning separate state-action reward functions to each of their states; the learning agent obtains rewards depending on the transitions of the reward machine and hence each transition can be viewed as a separate sub-task. Therefore, the reward machine gives  a certain structure to the agent in the sense that it informs it 
about the decomposition of the task into sub-tasks and the sequence these sub-tasks must be executed. Consequently,  the agent learns how and in what sequence to execute the different sub-tasks towards the completion of the overall task.}	
	Additionally, unlike the works in the related literature, we consider that the reward machine that encodes the task at hand is a priori \textit{unknown}.

	There exists a large variety of works dealing with perception uncertainty \cite{ghasemi2020task,da2019active,agha2014firm,guo2013revising,lahijanian2016iterative,livingston2012backtracking,montana2017sampling,ayala2013temporal,camacho2017non,ramasubramanian2019secure}.
	However, most of these works consider the \textit{planning} problem, which consists of computing an optimal policy by explicitly employing the underlying agent model. Furthermore, the considered uncertainty usually arises from probabilistic dynamics, which is resolved either by active-perception procedures, belief-state propagation, or analysis with partially observable Markov decision processes (POMDPs).  Besides, the works that consider high-level reward structure (like temporal logic \cite{ghasemi2020task,da2019active,guo2013revising,livingston2012backtracking}), assume full availability of the respective reward models, unlike the problem setting considered in this  paper. 
	
	\textcolor{blue}{Several recent works consider tasks encoded by Non-Markovian reward functions that are \textit{a priori unknown}, using both automata \cite{zhang2015learning,furelos2020induction,hasanbeig2021deepsynth,abate2022learning} as well as reward machines \cite{toro2019learning,rens2020online,xu2020joint}. In such works, the agents infers the reward function from the executed trajectories in the environment. Similarly, \cite{dohmen2022inferring} develops an algorithm that infers probabilistic reward machines. 
	The aforementioned works, however, do not consider uncertainty in the truth values of the atomic propositions;} \cite{zhang2015learning} and \cite{toro2019learning} assume partial environment state observability by employing POMDP models, while considering the  truth values of atomic propositions in the environment known. In contrast, in this work we consider uncertainty in the semantic representation of the environment, i.e., the truth value of the environment properties that define the agent's task is a priori unknown.

	\section{Problem Formulation}
	
	This section formulates the considered problem.
	 We first describe the model of the learning agent and the environment, the reward machine that encodes the task to be accomplished, and the observation model of the sensor the agent uses to detect the values of atomic propositions.  

	\subsection{Agent Model}
	
	We model the interaction between the agent and the environment by a Markov decision process (MDP) \cite{bellman1957markovian,puterman2014markov}, formally defined below.

	\begin{definition} \label{def:MDP}
		A \textcolor{blue}{labelled MDP (l-MDP)} is a tuple $\mathcal{M} = (S, s_I,A, \mathcal{T}$, $R$, $\mathcal{AP}, {L}_G, \hat{L}, \gamma)$ consisting of the following:
	 $S$ is a finite state space, resembling the operating environment;  $s_I\in S$ is an initial state\footnote{\textcolor{blue}{$s_I$ can also be a distribution of initial states.}}; $A$ is a finite set of actions;  $\mathcal{T}:S \times A \times S \to [0,1]$ is a probabilistic transition function, modelling the agent dynamics in the environment; $R: S^\ast \times A \times S \to \mathbb{R}$ is a reward function, specifying payoffs to the agent;  $\gamma \in[0,1)$  is a discount factor; $\mathcal{AP}$ is a finite set of Boolean atomic propositions, each one taking values in $\{\mathsf{True},\mathsf{False}\}$; ${L}_G: S  \to 2^{\mathcal{AP}}$ is the ground-truth labelling function, assigning to each state $s \in S$ the atomic propositions that are true in that state;  $\hat{L}: S  \to 2^{\mathcal{AP}}$ is an estimate of the ground-truth labelling function ${L}_G$.	 
		We define the size of $\mathcal{M}$, denoted as $|\mathcal{M}|$, to be $|S|$ (i.e., the cardinality of the set $S$).
	\end{definition}

	The atomic propositions $\mathcal{AP}$ are Boolean-valued properties of the environment; we denote by $s\models p$ if an atomic proposition $p\in\mathcal{AP}$ is true at state $s\in S$. 
	The ground-truth labelling function $L_G$ provides which atomic propositions are true in the states of the environment,
	i.e., $L_G(s) = P \subseteq \mathcal{AP}$ is equivalent to $s \models p$, for all $p\in P$. \textcolor{blue}{We consider the setting where $L_G$ is \textit{unknown} to the agent}, which detects the truth values of  $ \mathcal{AP}$ through sensor units such as cameras or range sensors. \textcolor{blue}{Since such sensors suffer from imperfections (e.g., noise), the agent maintains} a time-varying probabilistic belief $\hat{\mathcal{L}}_i: S \times 2^{\mathcal{AP}} \to [0,1]$ about the truth
	values of $ \mathcal{AP}$, where $i$ denotes a time index. More specifically, for a state $s\in S$ and a subset of atomic propositions $P\subseteq \mathcal{AP}$, $\hat{\mathcal{L}}_i(s,P)$ represents the probability of the event that $P$ is true at $s$, i.e., $\hat{\mathcal{L}}_i(s,P) = \mathsf{Pr}( \bigcap_{p\in P} s \models p)$. Notice that at each time index $i$ and for every state $s \in S$, it holds that $\sum_{P \subset \mathcal{AP} } \hat{\mathcal{L}}_i(s,P) = 1$. 
	The prior belief of the agent might be an uninformative prior distribution. 
	We assume that the truth values of the propositions are mutually
	independent in each state, i.e., 
	\begin{align*}
		Pr\bigg(  \bigcap_{p\in P} s \models P  \bigg) = \prod_{p\in P} Pr( s \models p), \ \ \forall s \in S, P \subset \mathcal{AP}
	\end{align*}
	and also between every pair,
	\begin{align*}
		&Pr\big(  s \models P \land s' \models P'\big) = Pr( s \models P ) Pr( s' \models P' ),\forall s, s' \in S, P, P' \subset \mathcal{AP}  
	\end{align*}	
	
	The agent's belief $\hat{\mathcal{L}}_i$, for some time index $i \geq 0$, determines the agent's inference of the environment configuration, and hence the estimate $\hat{L}$ of the labelling function. More specifically, we define $\hat{L}: ( S \times 2^{\mathcal{AP}} \to [0,1]) \times S \to 2^{\mathcal{AP}},$
	with
	\begin{align} \label{eq:L_hat}
	\hat{L}(\hat{\mathcal{L}}_i,s) = \{p\in\mathcal{AP}: \hat{\mathcal{L}}_i(s,p) \geq 0.5\},
	\end{align}
	i.e, the most probable outcomes. The following example illustrates the relation between $L_G$ and $\hat{L}$.
	
	\begin{example} \label{ex:office}
		Let the 10-state office workspace \textcolor{blue}{be} shown in Fig. \ref{fig:example grid workspace}. The set of atomic propositions is $\mathcal{AP} = \{\textup{c,o,X}\}$, corresponding to ``coffee", ``office", and ``obstacle", respectively. The ground-truth labelling function, shown at the top of Fig. \ref{fig:example grid workspace},  is defined as $L_G(s_{12}) = \{\textup{c}\}$, $L_G(s_{22}) = L_G(s_{23}) = \{\textup{X} \}$, $L_G(s_{24}) = \{\textup{o}\}$, and $\emptyset$ in the rest of the states.
		Imperfect sensory measurements might lead to a distribution $\hat{\mathcal{L}}$ and an estimate $\hat{L}(\hat{\mathcal{L}},\cdot)$ as shown at the bottom of Fig. \ref{fig:example grid workspace}, i.e., $\hat{L}(s_{12}) = \{\textup{c,X} \}$, $\hat{L}(s_{22}) = \{\textup{c} \}$, $\hat{L}(s_{14}) = \hat{L}(s_{24}) = \{\textup{o}\}$,
		$\hat{L}(s_{23}) = \{\textup{o}, X\}$, and $\emptyset$ in the rest of the states.
		\begin{figure}
			\centering
			\includegraphics[width=0.65\textwidth]{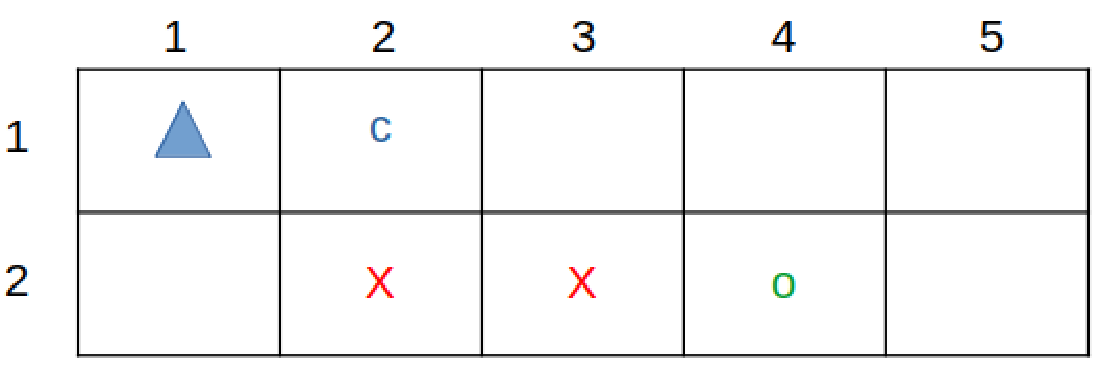} 
			
			\vspace{15mm}
			
			\includegraphics[width=0.65\textwidth]{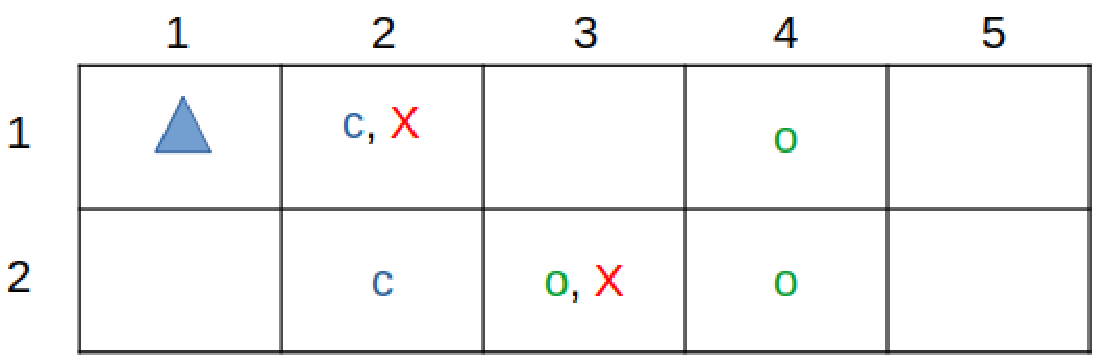}
			\caption{Top: A workspace with 10 states $s_{ij}$ illustrating the ground-truth labelling function ${L}_G:S\to 2^{\{\textup{c,o,X}\}}$, were $\{\textup{c,o,X}\}$ stand for \{``coffee", ``office", ``obstacle"\}, respectively; Bottom: Perception uncertainty leads to 
			the estimate $\hat{L}(\hat{\mathcal{L}},s):S\to 2^{\{\textup{c,o,X}\}}$, according to which the truth values of several properties are incorrect (e.g. $\textup{X}$ in $s_{12}$ or the absence of $\textup{X}$ in $s_{22}$). }
			\label{fig:example grid workspace}
		\end{figure}		
	\end{example}
	
A policy is a function that maps states in $S$ to a probability distribution over actions in $A$. At state $s\in S$, an agent, using policy $\pi$, picks an action $a$ with probability $\pi(s, a)$, and the new state $s'$ is chosen with probability $\mathcal{T}(s,a, s')$, defined in Def. \ref{def:MDP}. A policy $\pi$ and the initial state $s_I$ together determine a stochastic process; we write $S_0 A_0 S_1$...for the random trajectory of states and actions.
	
	A trajectory is a realization of the agent's stochastic process $S_0 A_0 S_1$: a sequence of states and actions $s_0 a_0 s_1 \dots s_k a_k s_{k+1}$ with $s_0 = s_I$. Its corresponding label sequence is $\ell_0 \ell_1 \dots \ell_k$ with $\ell_j = L_G(s_j)$, for all $j \leq k$.  Similarly, the reward sequence is $r_0r_1
	\dots r_k$, where $r_j = R(s_0 \dots s_j a_j s_{j+1})$, for all $j
	\leq k$.  
	A trajectory $s_0 a_0 s_1 \dots s_k a_k s_{k+1}$ achieves a reward $\sum_{j=0}^k \gamma^j R(s_0 \dots s_j a_js_{j+1})$, where $\gamma$ is the discount factor defined in Def. \ref{def:MDP}. 
	Note that the definition of the reward function assumes that the reward is a function of the whole trajectory; this allows the reward function to be non-Markovian  \cite{icarte2018using}.
	Given a trajectory $s_0 a_0 s_1 \dots s_{k+1}$ and a belief $\hat{\mathcal{L}}_i$, we naturally define then 
	the \textit{observed} label sequence by $\hat{\ell}_0 \hat{\ell}_1 \dots \hat{\ell}_k$, with $\hat{\ell}_j = 
	\hat{L}(\hat{\mathcal{L}}_i, s_j)$ for all $j\leq k$ and some $i \geq 0$. 
	
	\begin{example}[Continued] 
		Let the 10-state office workspace \textcolor{blue}{be} shown in Fig. \ref{fig:example grid workspace}. Assume that the agent receives award 1 if it brings coffee to the office without encountering states with obstacles, and zero otherwise. 
	\textcolor{blue}{Let us now consider} a state sequence $s_{11} s_{12} s_{13} s_{14} s_{24}$ of a trajectory.
		According to the ground-truth labelling function (see top of Fig. \ref{fig:example grid workspace}), such a trajectory would
		 would produce the label and reward sequence $(\emptyset, 0)(c,0)(\emptyset,0)(\emptyset,0)(o,1)$. 
		 However, 		 
		the \textit{observed} label and reward sequence, based on the estimate $\hat{L}$ (see bottom of Fig. \ref{fig:example grid workspace}), is $(\emptyset,0)$ $((\textup{c,X}),0)$ $(\emptyset,0)(\textup{o},0)(\textup{o},1)$.		
		
	\end{example}

	\subsection{Reward Machines}
	
	Encoding a (non-Markovian) reward in a type of finite state-machine is achieved by reward machines, introduced in \cite{icarte2018using} and formally defined below.
	
	\begin{definition}
		A reward machine is a tuple $\mathcal{A} = (V, v_I, 2^\mathcal{AP}, \mathcal{R}, \delta, \sigma)$ that consists of the following: $V$ is a finite set of states; $v_I \in V$ is an initial state;  $\mathcal{R}$ is an input alphabet; $\delta:V \times 2^{\mathcal{AP}} \to V$ is a deterministic transition function;  $\sigma: V \times 2^\mathcal{AP} \to \mathcal{R}$ is an output function. We define the size of $\mathcal{A}$, denoted by $|\mathcal{A}|$, to be $|V|$ (i.e., the cardinality of the set $V$).
	\end{definition}
	
	The run of a reward machine $\mathcal{A}$ on a sequence of labels $\ell_0 \dots \ell_k \in (2^{\mathcal{AP}})^\ast $ is a sequence $v_0(\ell_0, r_0)v_1(\ell_1,v_1)$ $\dots$ $v_k$ $(\ell_k,v_k)$ of states and label-reward pairs such that $v_0 = v_I$ and for all $j\in\{0,\dots,k\}$, we have $\delta(v_j,\ell_j) = v_{j+1}$ and $\sigma(v_j,\ell_j) = r_j$. We write $\mathcal{A}(\ell_0 \dots \ell_k ) = r_0 \dots r_k$ to connect the input label sequence to the sequence of rewards produced by the reward machine $\mathcal{A}$. We say that a reward machine $\mathcal{A}$ encodes the reward function $R$ of an \textcolor{blue}{l-MDP} \textit{on the ground truth} if, for every trajectory $s_0 a_0 \dots s_k a_k s_{k+1}$ and the corresponding label sequence $\ell_0 \dots \ell_k$, the sequence of rewards equals $\mathcal{A}(\ell_0 \dots \ell_k)$.  
	Moreover, given a fixed $i\geq 0$, we say that a reward machine $\mathcal{A}$ \textit{encodes the reward function $R$ on $\hat{\mathcal{L}}_i$} if for every trajectory $s_0 a_0 \dots s_k a_k s_{k+1}$ and the corresponding \textit{observed} label sequence $\hat{\ell}_0 \dots \hat{\ell}_k$ the sequence of rewards equals $\mathcal{A}(\hat{\ell}_0 \dots \hat{\ell}_k)$.  
	Note, however, that there might not exist a reward machine that encodes the reward function on $\hat{\mathcal{L}}_i$, as the next example shows. 
	
	\begin{example}[Continued] 
		Let the 10-state office workspace \textcolor{blue}{be} shown in Fig. \ref{fig:example grid workspace}. 
		Assume that the agent receives award 1 if it brings coffee to the office without encountering states with obstacles, and zero otherwise. Such a task is encoded by the reward machine shown in Fig. \ref{fig:automata coffee}. Let also the state sequence of a given trajectory \textcolor{blue}{be} $s_{11} s_{12} s_{13} s_{14} s_{24}$. 
		As stated before, the trajectory produces, via the ground-truth labelling function, the label and reward sequence $(\emptyset, 0)(c,0) (\emptyset,0)(\emptyset,0)$ $(o,1)$. Such a label sequence produces the run of the reward machine $$v_0(\emptyset,0) v_0 (c, 0) v_1 (\emptyset,0) v_1 (\emptyset,0) v_1 (o,1) v_3.$$
		
		The \textit{observed} label and reward sequence, based on the estimate $\hat{L}$, is $(\emptyset,0)$ $((\textup{c,X}),0)$ $(\emptyset,0)(\textup{o},0)(\textup{o},1)$, which produces the reward-machine run $$v_0(\emptyset,0)v_0 ((\textup{c,X}),0) v_2 (\emptyset,0) v_2 (\textup{o},0)v_2 (\textup{o},0) v_2,$$ which clearly does not comply with the observed reward sequence.  Hence, we conclude that the reward machine does \textit{not} encode the reward function on $\hat{\mathcal{L}}$. In fact, note that we cannot construct a reward machine that encodes the reward function on $\hat{\mathcal{L}}$.  
		The actual reward function of bringing coffee from $s_{12}$ to the office in $s_{24}$ cannot be expressed via the set $\mathcal{AP}$ with this specific $\hat{\mathcal{L}}$, because of the ambiguity in the states satisfying ``\textup{o}" and ``\textup{c}"; a single accepting trajectory does not correspond to a unique observed label sequence.

		\begin{figure}
			\centering
			\begin{tikzpicture}[scale=1, transform shape]
				\node[state, initial] (q0) {$v_0$};
				\node[state, below left of=q0] (q1) {$v_1$};
				\node[state, below right of=q0] (q2) {$v_2$};
				\node[state, accepting, below right of=q1] (q3) {$v_3$};
				\draw (q0) edge[loop above] node {\tt $(\neg \textup{c} \land \neg \textup{X},0)$} (q0);
				\draw (q0) edge node {\tt $(\textup{c} \land \neg \textup{X},0)$} (q1);
				\draw (q0) edge node {\tt $(\textup{X},0)$} (q2);
				\draw (q2) edge[loop below] node {\tt $(\textup{True}, 0)$} (q2);
				\draw (q1) edge[loop below] node {\tt $(\neg \textup{o} \land \neg \textup{X}, 0)$} (q1);
				\draw (q1) edge node {\tt $(\neg \textup{o} \land \textup{X},0)$} (q2);
				\draw (q1) edge node {\tt $(\textup{o},1)$} (q3);
				\draw (q3) edge[loop below] node {\tt $(\textup{True}, 0)$} (q3);
				
			\end{tikzpicture}
			\caption{Reward machine for the task of Fig. \ref{fig:example grid workspace}.}
			\label{fig:automata coffee}
		\end{figure}
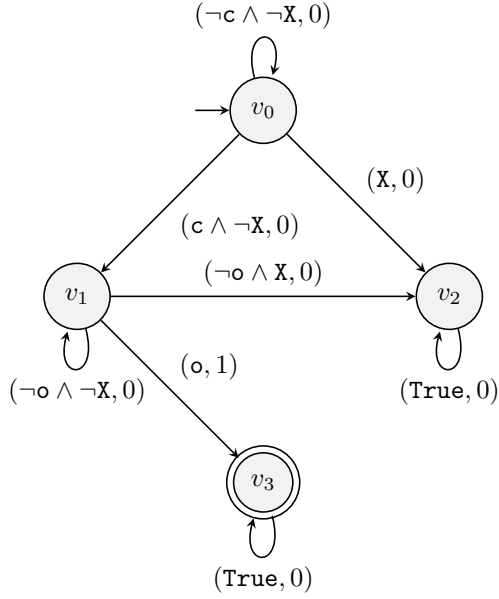
		
	\end{example}

		\subsection{Observation Model}

	The probabilistic belief is updated based on an observation model $\mathcal{O}$.  
	More specifically, at each time step, the agent’s perception module processes a set of sensory measurements regarding the atomic propositions $\mathcal{AP}$. In the Bayesian framework, the observation model is used
	for the update of the agent’s belief, as specified in the following definition. 	
	
	\begin{definition} \label{def:ob_model}
		Let $\mathcal{Z}(s_1,s_2,p) \in \{\mathsf{True},\mathsf{False}\}$ denote the perception output of the agent, when the agent is at state $s_1$, for the atomic proposition $p\in\mathcal{AP}$ at $s_2$. The joint observation model of the agent is $\mathcal{O}: S\times S \times \mathcal{AP} \times \{\mathsf{True},\mathsf{False}\} \to [0,1]$, where $\mathcal{O}(s_1,s_2,p,b)$ represents the probability that $\mathcal{Z}(s_1,s_2,p) = \mathsf{True}$ if the truth value of $p$ is identical to $b$.
	\end{definition}
	
\textcolor{blue}{More specifically, the joint observation describes the likelihoods $\mathcal{O}(s_1,s_2,p,\mathsf{True}) = Pr\bigg( \mathcal{Z}(s_1,s_2,p) = \mathsf{True} \ \big| \ s_2 \models p\bigg) $ and $\mathcal{O}(s_1,s_2,p,\mathsf{False}) = Pr\bigg( \mathcal{Z}(s_1,s_2,p) = \mathsf{True} \ \big| \ s_2 \not\models p \bigg) $.}
	An accurate observation model is the one for
	which $\mathcal{O}(s_1, s_2, p, b)$ $=$ $1$ for
	$b = \mathsf{True}$ and $\mathcal{O}(s_1, s_2, p, b) = 0$ for $b = \mathsf{False}$.	
	In the Bayesian framework, the observation model is used
	for the update of the agent’s belief. Nevertheless, in the
	absence of such observation model, one can perform the
	update in a frequentist way.

	\subsection{Problem Statement}
	
	We consider an \textcolor{blue}{l-MDP}-modeled autonomous agent, whose task is encoded by a reward machine that is unknown to the agent. The ground-truth labelling function $L_G$, providing the truth values of the atomic propositions, is also unknown to the agent. 
	The formal definition of the problem statement is as follows.
	
	\begin{problem}
		Let an \textcolor{blue}{l-MDP} agent model \textcolor{blue}{be} $\mathcal{M} = (S, s_I,A, \mathcal{T}, R, \mathcal{AP}, L_G, \hat{L}, \gamma)$ with unknown ground-truth labelling function $L_G$ and transition function $\mathcal{T}$. Let also an unknown reward machine  that encodes the reward function $R$ on the ground truth \textcolor{blue}{be} $\mathcal{A}$, representing a task to be accomplished. Develop an algorithm that learns a policy $\pi$ that maximizes $Z(s_I) =  \mathbb{E}_\pi[\sum_{i=0}^\infty \gamma^i R(S_0, A_0\dots S_{i+1})]$.
	\end{problem}
	
\textcolor{blue}{
\begin{remark}
We note that the problem of unknown ground-truth labelling function $L_G$ and uncertain sensor observations does not apply only to cases with reward machines, but also to more general RL frameworks that employ labelling-function mappings to identify properties of the environment (e.g., \cite{icarte2018using,jin2022creativity,hasanbeig2021deepsynth,dohmen2022inferring}). However, reward machines \cite{icarte2018using} constitute state-of-the-art tools when it comes to symbolic RL with task decomposition. Additionally, reward machines naturally depend on the atomic propositions of the environment and hence on the respective labelling function. Therefore, \textit{inference} of an unknown reward machine poses a unique challenge when such a labelling function is uncertain. For the aforementioned reasons, we focus on the problem of RL with unknown reward machines. Nevertheless, we illustrate how the proposed algorithm, given in the next section, applies to more general RL frameworks.
\end{remark}
}

\textcolor{blue}{
	\section{Joint Inference of Reward machines and Policies for RL} \label{sec:JIRP}}

	\textcolor{blue}{
	The main results of the paper consist of the development of a methodology that accommodates the uncertainty in the ground-truth labelling function $L_G$ and the environmental observations. One can incorporate such a methodology to standard RL algorithms
to account for the aforementioned uncertainty. For the sake of presentation, we illustrate the incorporation of our methodology to the JIRP algorithm \cite{xu2020joint},
which considers the joint inference of reward machines and policies with perfect knowledge of the environment semantics. Consequently, we provide in this section an overview of JIRP, before proceeding to the main results of the paper.  }
		\algdef{SE}[DOWHILE]{Do}{doWhile}{\algorithmicdo}[1]{\algorithmicwhile\ #1}%
	\begin{algorithm}[!t]
		\caption{JIRP Algorithm} 
	\label{algo:JIRP}
	\begin{algorithmic}[1]    
		\State $\mathcal{H} \leftarrow \mathsf{Initialize}(V)$ \State $Q = \{q^v | v\in V \} \leftarrow \mathsf{Initialize}()$  
		\State $X \leftarrow \emptyset$
		\For{episode $n=1,2,\dots$}
		\State $(\lambda,\rho,Q) \leftarrow \text{QRM}\_\text{episode}(\mathcal{H},Q,\mathcal{L})$
		\If{$\mathcal{H}(\lambda) \neq \rho$}
	\hspace{12mm}	\small \textcolor{blue}{// \texttt{The trace $(\lambda,\rho)$ is inconsistent with $\mathcal{H}$}} \normalsize 
		\State $\mathsf{Add}(X,(\lambda,\rho))$ 
		\hspace{12mm} \small \textcolor{blue}{// \texttt{Append the set $X$ with the trace $(\lambda,\rho)$}} \normalsize
		\State $\mathcal{H} \leftarrow \mathsf{Infer}(X)$
		\hspace{12mm} \small \textcolor{blue}{// \texttt{Infer a new reward machine from  $X$}} \normalsize
		\State $Q \leftarrow \mathsf{Initialize}()$
		\EndIf
		
		\EndFor
		
	\end{algorithmic}
\end{algorithm}

\algdef{SE}[DOWHILE]{Do}{doWhile}{\algorithmicdo}[1]{\algorithmicwhile\ #1}%
\begin{algorithm}[!t]
	\caption{QRM$\_$episode($\mathcal{H}$, $Q$, $\mathcal{L}$)} 
\label{algo:QRM}
\begin{algorithmic}[1]    
	\Require{A reward machine $\mathcal{H}=(V,v_I,2^\mathcal{P},\mathbb{R},\delta,\sigma)$, a set of $q$-functions $Q = \{q^v | v\in V\}$, a labelling function $\mathcal{L}$.}
\State $s \leftarrow \mathsf{InitialState}()$; $v \leftarrow v_I$, $\lambda \leftarrow []$; $\rho \leftarrow []$
\For{$0 \leq t < eplength$} 
\State $a \leftarrow \mathsf{GetEpsilonGreedyAction}(q^v,s)$
\State $s' \leftarrow \mathsf{ExecuteAction}(s,a)$
\hspace{3mm} \small \textcolor{blue}{// \texttt{Transition in \textcolor{blue}{l-MDP} $\mathcal{M}$ based on $\mathcal{T}(s',a,s)$}} \normalsize
\State $v' \leftarrow \delta(v, \mathcal{L}(s,a,s'))$
\hspace{11mm} \small \textcolor{blue}{// \texttt{Transition in $\mathcal{H}$}} \normalsize
\State $r \leftarrow \mathsf{ObserveReward}()$
\State $q^v(s,a) \leftarrow \mathsf{Update}(r,s')$
\hspace{6mm} \small \textcolor{blue}{// \texttt{Update of $q$-function for state $v$ of $\mathcal{H}$}} \normalsize
\For{$\hat{v} \in V \backslash{v}$}
\State $\hat{v}' \leftarrow \delta(\hat{v},\mathcal{L}(s,a,s'))$  
\hspace{7mm} \small \textcolor{blue}{// \texttt{Transition in $\mathcal{H}$}} \normalsize
\State $\hat{r} \leftarrow \sigma(\hat{v},\mathcal{L}(s,a,s'))$
\hspace{7mm} \small \textcolor{blue}{// \texttt{Reward calculation from $\mathcal{H}$}} \normalsize
\State $q^{\hat{v}}(s,a) \leftarrow \mathsf{Update}(\hat{r},s')$
\hspace{0mm} \small \textcolor{blue}{// \texttt{Update of $q$-function for state $\hat{v}$ of $\mathcal{H}$}} \normalsize
\EndFor
\State $\mathsf{Append}(\lambda, \mathcal{L}(s,a,s'))$; $\mathsf{Append}(\rho, r)$
\State $s \leftarrow s'$; $v\leftarrow v'$
\EndFor
\State \textbf{return} $(\lambda,\rho,Q)$
\end{algorithmic}
\end{algorithm}	
		\textcolor{blue}{
The JIRP algorithm \cite{xu2020joint} (see Algorithm \ref{algo:JIRP}) aims at learning the optimal policy for maximizing $Z(s_I)$ by maintaining a hypothesis reward machine (RM) $\mathcal{H}$ (initialized with a set of states $V$ in line 1). Its main component is the QRM$\_$episode() (shown in Algorithm \ref{algo:QRM}), originally proposed in \cite{icarte2018using} for accurately known reward machines. \textcolor{blue}{The QRM episode outputs a set $(\lambda,\rho)$, called \textit{trace}, and a set $Q$ of q-functions $q^v$  (initialized in line 2), one for each state $v\in V$ of the hypothesis reward machine; recall that a reward machine decomposes the overall task into sub-tasks, one for each one of its states}. 
The current state $v$ of the hypothesis reward machine guides the exploration by determining which $q$-function is used to choose
	the next action (line 3 of Alg. \ref{algo:QRM}). However, in each single exploration step, the algorithm updates the $q$-functions that correspond to all reward-machine
	states (lines 7 and 11 of Alg. \ref{algo:QRM}). 
\textcolor{blue}{Furthermore, the algorithm uses the sets $\lambda$ and $\rho$ to collect the labelling and reward sequences $\mathcal{L}(s,a,s')$ and $r$, respectively, from the agent trajectory (line 13 of Alg. \ref{algo:QRM})). }
	Note that the returned rewards in $\rho$ are \textit{observed} (line 6 of Alg. \ref{algo:QRM}), since the reward machine encoding the actual reward function is not known. Instead, JIRP operates with a hypothesis reward machine, which is updated using the traces of QRM$\_$episode(). In particular, the episodes of QRM are used to collect traces and update $q$-functions. As long
	as the traces are consistent with the current hypothesis reward machine, QRM explores more of the environment using
	the reward machine to guide the search. However, if a trace $(\lambda,\rho)$ is detected that is inconsistent with the hypothesis
	reward machine (i.e., $\mathcal{H}(\lambda)\neq \rho$, line 6 of Alg. \ref{algo:JIRP}), JIRP stores it in a set $X$ (line 7 of Alg. \ref{algo:JIRP}) - the trace $(\lambda,\rho)$ is called a
	counterexample and the set $X$ a sample. Once the sample is updated, the algorithm re-learns a new hypothesis reward
	machine (line 8 of Alg. \ref{algo:JIRP}) and proceeds. More information can be found in \cite{xu2020joint,icarte2018using}. 
We further detail the various functions used in Alg.  \ref{algo:QRM}: The function $\mathsf{GetEpsilonGreedyAction}(q^v,s)$ chooses an action based on an $\varepsilon_a$-greedy method, i.e., it chooses the action $\textup{argmax}_{a\in A} (q^v(s,a))$ with probability $1-\varepsilon_a$ and a random action $a \in A$ with probability $\varepsilon_a$ for a small positive constant $\varepsilon_a$. The function $\mathsf{Update}(r,s')$ updates $q^v(s,a)$ based on 
\begin{align*}
q^v(s,a) \leftarrow q^v(s,a) + \alpha_{lr} \bigg(
r + \gamma \max_{a' \in A} q^v(s',a') - q^v(s,a)  \bigg)
\end{align*}
for a constant $\alpha_{lr}$ representing the learning rate.
	}

\textcolor{blue}{
	\section{Joint Learning of Reward Machines and Policies  with Partially Known Semantics}}

	This section gives the main results of this paper, which is joint perception and inference of policies and reward machines.
In particular, we extend the JIRP algorithm, presented in Section \ref{sec:JIRP}, to take into account the uncertainty in the labelling function. More specifically, our algorithm tries to infer the reward machine $\mathcal{A}$ that encodes the reward function on the most probable outcomes, $\hat{L}(\hat{\mathcal{L}}_j, \cdot)$, i.e., $\mathcal{A}(\hat{\ell}_0,\dots,\hat{\ell}_k) = r_0 \dots r_k$ for some $j\geq 0$. 
At the same time, the agent's belief is updated at every time step based on the observation model of Definition \ref{def:ob_model}. Similarly to \cite{ghasemi2020task}, if the agent's knowledge about the environment has changed significantly, the agent then updates its belief $\hat{\mathcal{L}}$ and aims to infer a new reward machine. 

\textcolor{blue}{
Algorithms \ref{algo:JIRP ext} and \ref{algo:QRM_mod}  illustrate the aforementioned procedure. The algorithms are similar to \ref{algo:JIRP} and \ref{algo:QRM}, respectively, with three main differences: first, Alg. \ref{algo:JIRP ext} aims to approximate a reward machine $\mathcal{A}$ that encodes the reward functions on the ``current" belief  $\hat{L}(\hat{\mathcal{L}}_h,\cdot)$; second, Alg. \ref{algo:QRM_mod} updates this belief using the observation obtained by the agent in its state (line 14 of Alg. \ref{algo:QRM_mod}); third, Alg. \ref{algo:JIRP ext} updates the belief $\hat{L}(\hat{\mathcal{L}}_h,\cdot)$, and consequently the reward machine $\mathcal{H}$, if it is different enough from what is dictated by the obtained observations (lines 13-18 of Alg. \ref{algo:JIRP ext}). We present next the reasoning of Alg. \ref{algo:JIRP ext} and \ref{algo:QRM_mod}  in more detail. }

Algorithm \ref{algo:JIRP ext}  uses a hypothesis reward machine $\mathcal{H}$ (line 1 of Alg. \ref{algo:JIRP ext}) to guide the learning process. Similarly to \cite{xu2020joint}, it runs multiple QRM episodes to update the Q functions, and uses the collected traces to update the counterexamples in the set $X$ and $\mathcal{H}$ (lines 7-12 of Alg. \ref{algo:JIRP ext}). \textcolor{blue}{Nevertheless, in our case, $\mathcal{H}$  aims to 
approximate the reward machine $\mathcal{A}$ that encodes the reward function on the most probable outcomes, $\hat{L}(\hat{\mathcal{L}}_h, \cdot)$, since that is the available information to the agent; $\hat{\mathcal{L}}_h$ is the ``current" belief that the agent uses in its policy learning and reward-machine inference.}  
This is illustrated via a modified version of the QRM episode algorithm (QRM\_episode\_mod invoked in line 7 of Alg. \ref{algo:JIRP ext}), shown in Algorithm \ref{algo:QRM_mod}. In particular, QRM\_episode\_mod 
uses the probabilistic belief $\hat{\mathcal{L}}_h$ in order to navigate in the hypothesis reward machine $\mathcal{H}$ (lines 5, 9 of Alg. \ref{algo:QRM_mod}). 
Moreover,  $\hat{\mathcal{L}}_h$ is used to update the counterexample set $X$ with the inconsistent traces (line 15 of Alg. \ref{algo:QRM_mod} and lines 8-12 of Alg. \ref{algo:JIRP ext}). Hence, Alg. \ref{algo:JIRP ext} aims to infer the reward machine that encodes the reward function on the belief $\hat{\mathcal{L}}_h$. 
Of course, as illustrated in Example \ref{ex:office}, there might not exist a reward machine that encodes the reward function on $\hat{\mathcal{L}}_h$. In that case, the algorithm cannot infer the ``correct" reward machine.  
\textcolor{blue}{For this reason, the modified QRM-episode algorithm updates a recurring probabilistic belief $\hat{\mathcal{L}}_j$ (line 14) in a Bayesian manner and based on the observation model of Def. \ref{def:ob_model}; we provide the function $\mathsf{BayesUpdate}()$ in Section \ref{sec:information proc}. 
The  environment probabilistic belief $\hat{\mathcal{L}}_h$ used in the policy- and reward-machine-learning is updated with the current belief $\hat{\mathcal{L}}_j$ only if the two are significantly different (lines 13-18 of Algorithm \ref{algo:JIRP ext}), which is evaluated by the function $\mathsf{SignifChange}()$.
More specifically, if $\hat{\mathcal{L}}_j$ is significantly changed with respect to the estimate $\hat{\mathcal{L}}_h$, the algorithm updates $\hat{\mathcal{L}}_h$ accordingly, re-initializes the $Q$ functions, the hypothesis reward machine $\mathcal{H}_h$, and the counterexample set $X$.
We provide the function $\mathsf{SignifChange}()$, which evaluates the difference among $\hat{\mathcal{L}}_h$ and $\hat{\mathcal{L}}_j$ using a divergence test, in Section \ref{sec:divergence test}.  }


\algdef{SE}[DOWHILE]{Do}{doWhile}{\algorithmicdo}[1]{\algorithmicwhile\ #1}%
\begin{algorithm}[!t]
\caption{Joint Perception and Learning Algorithm} 
\label{algo:JIRP ext}
\begin{algorithmic}[1]    
\State $\mathcal{H} \leftarrow \mathsf{Initialize}(V)$ 
\State $Q = \{q^v | v \in V\} \leftarrow \mathsf{Initialize}(V)$
\State $X \leftarrow \emptyset$ 
\State $j\leftarrow 0$
\State $\hat{\mathcal{L}}_h \leftarrow \hat{\mathcal{L}}_0$;	 
\For{episode $n=1,2,\dots$} 
\State $(\lambda,\rho,Q, \hat{\mathcal{L}}_j) \leftarrow $ QRM\_episode\_mod($\mathcal{H},Q,\hat{\mathcal{L}}_h,\hat{\mathcal{L}}_j$)
\If{$\mathcal{H}(\lambda) \neq \rho$}
	\hspace{12mm}	\small \textcolor{blue}{// \texttt{The trace $(\lambda,\rho)$ is inconsistent with $\mathcal{H}$}} \normalsize 
\State add $(\lambda,\rho)$ to $X$
		\hspace{12mm} \small \textcolor{blue}{// \texttt{Append the set $X$ with the trace $(\lambda,\rho)$}} \normalsize
\State $\mathcal{H} \leftarrow \mathsf{Infer}(X)$ 
		\hspace{12mm} \small \textcolor{blue}{// \texttt{Infer a new reward machine from  $X$}} \normalsize
\State $Q\leftarrow \mathsf{Initialize}()$ 
\EndIf
\If{ $\mathsf{SignifChange}(\hat{\mathcal{L}}_h, \hat{\mathcal{L}}_j) $ }
		\hspace{10mm} \small \textcolor{blue}{// \texttt{$\hat{\mathcal{L}}_j$ has changed significantly with respect to $\hat{\mathcal{L}}_h$   }} \normalsize
\State $\mathcal{H} \leftarrow \mathsf{Initialize}(V)$ 
\State $Q\leftarrow \mathsf{Initialize}(V)$ 
\State $X \leftarrow \emptyset$  
\State $\hat{\mathcal{L}}_h \leftarrow \hat{\mathcal{L}}_j$
\EndIf
\EndFor
\end{algorithmic}
\end{algorithm}

\algdef{SE}[DOWHILE]{Do}{doWhile}{\algorithmicdo}[1]{\algorithmicwhile\ #1}%
\begin{algorithm}[!t]
\caption{QRM\_episode\_mod($\mathcal{H}$, $Q$, $\hat{\mathcal{L}}_h,\hat{\mathcal{L}}_j$)} 
\label{algo:QRM_mod}
\begin{algorithmic}[1]    
\Require{A reward machine $\mathcal{H}=(V,v_I,2^\mathcal{P},\mathbb{R},\delta,\sigma)$, a set of $q$-functions $Q = \{q^v | v\in V\}$, two probabilistic beliefs $\hat{\mathcal{L}}_h,\hat{\mathcal{L}}_j$.}
\State $s \leftarrow \mathsf{InitialState}()$; $v \leftarrow v_I$, $\lambda \leftarrow []$; $\rho \leftarrow []$
\For{$1 \leq t < eplength$} 
\State $a \leftarrow \mathsf{GetEpsilonGreedyAction}(q^v,s)$
\State $s' \leftarrow \mathsf{ExecuteAction}(s,a)$
\hspace{3mm} \small \textcolor{blue}{// \texttt{Transition in \textcolor{blue}{l-MDP} $\mathcal{M}$ based on $\mathcal{T}(s',a,s)$}} \normalsize
\State $v' \leftarrow \delta(v, \hat{L}(\hat{\mathcal{L}}_h,s'))$
\hspace{11mm} \small \textcolor{blue}{// \texttt{Transition in $\mathcal{H}$ based on $\hat{\mathcal{L}}_h$}} \normalsize
\State $r \leftarrow \mathsf{ObserveReward}()$
\State $q^v(s,a) \leftarrow \mathsf{Update}(r,s')$
\hspace{6mm} \small \textcolor{blue}{// \texttt{Update of $q$-function for state $v$ of $\mathcal{H}$}} \normalsize
\For{$\hat{v} \in V \backslash{v}$}
\State $\hat{v}' \leftarrow \delta(\hat{v}, \hat{L}(\hat{\mathcal{L}}_h, s'))$
\hspace{7mm} \small \textcolor{blue}{// \texttt{Transition in $\mathcal{H}$ based on $\hat{\mathcal{L}}_h$}} \normalsize
\State $\hat{r} \leftarrow \sigma(\hat{v},\hat{L}(\hat{\mathcal{L}}_h, s'))$
\hspace{7mm} \small \textcolor{blue}{// \texttt{Reward calculation from $\mathcal{H}$ based on $\hat{\mathcal{L}}_h$}} \normalsize
\State $q^{\hat{v}}(s,a) \leftarrow \mathsf{Update}(\hat{r},s')$
\hspace{0mm} \small \textcolor{blue}{// \texttt{Update of $q$-function for state $\hat{v}$ of $\mathcal{H}$}} \normalsize
\EndFor

\State $j\leftarrow (n-1)\cdot eplength + t$
\State $\hat{\mathcal{L}}_{j} \leftarrow \mathsf{BayesUpdate}(\hat{\mathcal{L}}_{j-1},s')$
\hspace{10mm} \small \textcolor{blue}{// \texttt{Update probabilistic belief $\mathcal{L}_j$}} \normalsize
\State $\mathsf{Append}(\lambda, \hat{L}(\hat{\mathcal{L}}_h,s'))$; $\mathsf{Append}(\rho, r)$
\State $s \leftarrow s'$; $v\leftarrow v'$
\EndFor
\State \textbf{return} $(\lambda,\rho,Q,\hat{\mathcal{L}}_j)$
\end{algorithmic}
\end{algorithm}

\subsection{Information Processing} \label{sec:information proc}

\textcolor{blue}{
This section provides the $\mathsf{BayesUpdate}(\hat{\mathcal{L}}_{j-1},s')$ function (line 14 of Alg. \ref{algo:QRM_mod}) that updates the
probability distribution $\hat{\mathcal{L}}_j$ based on the observation made by the agent at state $s'$.}

When at state $s'$, the agent will receive new perception outputs according to the observation model $\mathcal{O}(s',\dots)$ for all states and atomic propositions. 
The agent then employs the observations to update its learned model of the environment in a Bayesian approach. For ease of notation, let $\hat{\mathcal{L}}_j = \hat{\mathcal{L}}_j( s, p)$ and $\mathcal{O}(b) = \mathcal{O}( s', s, p, b)$. Given the prior belief of the agent $\hat{\mathcal{L}}_{j-1}$ and the received observations $\mathcal{Z}$, the posterior belief follows 
\begin{subequations} \label{eq:prob updates}
\begin{align}
Pr\big( s \models p | \mathcal{Z}(s',s,p) = \mathsf{True} \big) & =  \frac{ \hat{\mathcal{L}}_{j-1} \mathcal{O}( \mathsf{True} ) }{ \hat{\mathcal{L}}_{j-1} \mathcal{O}( \mathsf{True} ) +  (1 - \hat{\mathcal{L}}_{j-1}) \mathcal{O}(\mathsf{False}) } \label{eq:prob updates 1}\\
Pr\big( s \models p | \mathcal{Z}(s',s,p) = \mathsf{False} \big) &=  \frac{ \hat{\mathcal{L}}_{j-1} (1-\mathcal{O}(\mathsf{True})) }{ \hat{\mathcal{L}}_{j-1} (1-\mathcal{O}(\mathsf{True})) +  (1 - \hat{\mathcal{L}}_{j-1}) (1- \mathcal{O}(\mathsf{False})) } \label{eq:prob updates 2}
\end{align}
\end{subequations}
for all $s \in S$ and $p\in\mathcal{AP}$. \textcolor{blue}{Depending on the truth value observed for $p$, $\mathsf{BayesUpdate}(\hat{\mathcal{L}}_{j-1},s')$ updates $\hat{\mathcal{L}}_j$ according to one of the aforementioned expressions}. Besides, for any $P \subseteq \mathcal{AP}$, 
$\hat{\mathcal{L}}_j(s,P) = \prod_{p \in P}     \hat{\mathcal{L}}_j(s,p)$.

\subsection{Divergence Test on the Belief} \label{sec:divergence test}

\textcolor{blue}{
This section provides the $\mathsf{SignifChange}(\hat{\mathcal{L}}_{h},\hat{\mathcal{L}}_{j})$ function (line 13 of Alg. \ref{algo:JIRP ext}) that compares the distributions $\hat{\mathcal{L}}_{h}$ and $\hat{\mathcal{L}}_{j}$.}

As mentioned before, \textcolor{blue}{if the agent’s current estimate about the atomic propositions}, encoded by the current belief $\hat{\mathcal{L}}_j$, has not changed significantly from its previous estimate (encoded by $\hat{\mathcal{L}}_h$), then the agent will continue executing the algorithm using the current $\hat{\mathcal{L}}_h$. \textcolor{blue}{However}, if its estimate has changed significantly, the agent will update its belief and aim to infer a new reward machine (lines 13-18 of Alg. \ref{algo:JIRP ext}). 
\textcolor{blue}{The algorithm compares the two distributions, $\hat{\mathcal{L}}_{h}$ and $\hat{\mathcal{L}}_{j}$ via the function $\mathsf{SignifChange}()$. }
The function uses the
Jensen-Shannon divergence to quantify the change in the
belief distribution between two consecutive time steps. 
\textcolor{blue}{We note here that alternative divergence methods would also work without altering the correctness of the algorithm (proven in Sec. 5.4). We choose the Jensen-Shannon divergence due to its symmetry, finite value, and its efficiency for generic distributions over atomic propositions.}
The
cumulative Jensen-Shannon divergence over the states and
the propositions can be expressed as
\begin{align*}
D_{\mathcal{JSD}}(\hat{\mathcal{L}}_{h} & || \hat{\mathcal{L}}_j) =\frac{1}{2} D_{\mathcal{KL}}(\hat{\mathcal{L}}_{h} || \hat{\mathcal{L}}_{m}) + \frac{1}{2} D_{\mathcal{KL}}(\hat{\mathcal{L}}_{j} || \hat{\mathcal{L}}_m) = \\
&
\frac{1}{2} \sum_{s \in S} \sum_{p\in\mathcal{AP}} \hat{\mathcal{L}}_{h}(s,p) \log\frac{\hat{\mathcal{L}}_{h}(s,p)}{ \hat{\mathcal{L}}_{m}(s,p)} + (1 - \hat{\mathcal{L}}_h(s,p))  \log\frac{1-\hat{\mathcal{L}}_{c}(s,p)}{ 1-\hat{\mathcal{L}}_{m}(s,p)} \\ 
&+
\frac{1}{2} \sum_{ s \in S } \sum_{p\in\mathcal{AP}} \hat{\mathcal{L}}_{j}(s,p) \log\frac{\hat{\mathcal{L}}_{j}(s,p)}{ \hat{\mathcal{L}}_{m}(s,p)} + (1 - \hat{\mathcal{L}}_{j}(s,p))  \log\frac{1-\hat{\mathcal{L}}_{j}(s,p)}{ 1-\hat{\mathcal{L}}_{m}(s,p)} 
\end{align*}
where $\hat{\mathcal{L}}_{m} = \frac{1}{2}(\hat{\mathcal{L}}_{h} + \hat{\mathcal{L}}_{j})$ is the average distribution. 
One of the input parameters to the algorithm is a threshold
$\gamma_d$ on the above divergence; \textcolor{blue}{$\mathsf{SignifChange}(\hat{\mathcal{L}}_{h},\hat{\mathcal{L}}_{j})$ returns $\mathsf{False}$ if $D_{\mathcal{JSD}}(\hat{\mathcal{L}}_{h}  || \hat{\mathcal{L}}_j) < \gamma_d$ and $\mathsf{True}$ if $D_{\mathcal{JSD}}(\hat{\mathcal{L}}_{h}  || \hat{\mathcal{L}}_j) \geq \gamma_d$, determining  whether the
agent will use its previous estimate $\hat{L}(\hat{\mathcal{L}}_h,\cdot)$ or update $\hat{\mathcal{L}}_h$ 
to guide the learning and reward machine inference.}

\subsection{Inference of Reward Machines} \label{sec:inference}

The goal of reward-machine inference is to find a reward machine $\mathcal{H}$ that is consistent with all the counterexamples in the sample set $X$, i.e., such that $\mathcal{H}(\lambda) = \rho$ for all $(\lambda,\rho) \in X$, for the current belief $\hat{\mathcal{L}}_h$. Unfortunately, such a task is computationally hard in the sense that the corresponding decision problem ``given a sample X and a natural number $k>0$, does a consistent Mealy machine with at most $k$ states exist?" is NP-complete \cite{xu2020joint,gold1978complexity}. 
In this paper we follow the approach of \cite{xu2020joint}, which is \textcolor{blue}{to learn minimal} consistent reward machines with the help of highly-optimized SAT solvers \cite{neider2013regular,heule2010exact,neider2014applications}. The underlying idea is to generate a sequence of formulas $\phi^X_k$ in propositional logic for increasing values of $k \in \mathbb{N}$ (starting with $k=1$) that satisfy the following two properties:
\begin{itemize}
\item $\phi^X_k$ is satisfiable if and only if there exists a reward machine with k states that is consistent with X;
\item a satisfying assignment of the variables in $\phi^X_k$ contains sufficient information to derive such a reward machine. 
\end{itemize}

By increasing $k$ by one and stopping once $\phi^X_k$ becomes satisfiable (or by using a binary search), \textcolor{blue}{we construct an algorithm that
learns a minimal reward machine that is consistent with the given sample}.

\textcolor{blue}{As stressed in Example \ref{ex:office}, a reward machine that encodes $R$ on $\hat{\mathcal{L}}_h$ might \textit{not} exist and the inference procedure might end up creating an arbitrarily large Mealy machine and yielding very high computation times (trying to compute a Mealy machine that does not exist). 
To prevent such cases, one can limit the maximum allowable number of states of the Mealy machine and the maximum number of episodes used to generate the counterexamples, and use the last valid inferred Mealy machine (or the initial estimate if there is no valid one). 
We note that such a technique requires knowledge of an upper bound on the number of states of the reward machine. Practically, this bound could be a conservatively large number that could be derived from the total number of atomic propositions $\mathcal{AP}$. Nevertheless, large bounds could lead to high computational times in cases the algorithm is trying to infer a reward machine that does not exist due to inaccuracies in $\hat{\mathcal{L}}_h$. Such complexity stems from the inherent scalability issues of the employed SAT-based optimizers and can be alleviated by modifying the underlying algorithms to obtain polynomial-time heuristics \cite[Sec. 4.3]{xu2020joint} . 
In the experimental results, the proposed algorithm,  such cases did not prevent the successful
 convergence of the proposed algorithm.}

\textcolor{blue}{A sufficient condition for a reward machine that encodes $R$ on $\hat{\mathcal{L}}_h$ to exist is sufficient accuracy of $\hat{\mathcal{L}}_h$, i.e., $\hat{L}(\hat{\mathcal{L}}_h,s) = \{ p \in \mathcal{AP}: \hat{\mathcal{L}}_h(s,p) \geq 0.5 \} = L_G(s)$, for all $s\in S$  (see \eqref{eq:L_hat}). Intuitively, such a property implies that the most probably atomic-proposition estimates are the correct ones. We show later that such a property can be achieved with an unambiguous observation model and sufficient exploration of the environment. In case $\hat{L}(\hat{\mathcal{L}}_h,s) \neq L_G$, it is not straightforward to derive conditions for the existence of a reward machine that encodes $R$ on $\hat{\mathcal{L}}_h$. Based on intuition, one could argue that small inaccuracies in $\hat{\mathcal{L}}_h$, such as wrong labels in just one of the environment states, could lead to such a property.  Consider, however, the grid example of Fig. 2 and the case where $\hat{L}(\hat{\mathcal{L}}_h,s_{ij})=L_G(s_{ij})$ for all $i\in\{1,2\}$, $j\in\{1,\dots,5\}$, except for $\hat{L}(\hat{\mathcal{L}}_h,s_{14}) = \{``X"\} \neq L_G(s_{14}) = \{\}$, i.e., the sensor falsely detects an obstacle at state $s_{14}$. In that case, there is no reward machine that encodes $R$ on $\hat{\mathcal{L}}_h$: if such a reward machine $\mathcal{A}$ existed, its reward sequence $\mathcal{A}(\hat{\ell}_0,\dots,\hat{\ell}_k)$, given an input label sequence from any trajectory $s_0a_0\dots a_k s_{k+1}$ and based on $\hat{\mathcal{L}}_h$, would need to match the observed reward sequence. Consider, however, the trajectory $s_{11}s_{12}s_{13} s_{14} s_{24}$, which gives the \textit{observed} label sequence $\{``\text{c}"\},\{``\text{X}"\},\{``\text{o}"\}$. In order for $A(\{``\text{c}"\},\{``\text{X}"\},\{``\text{o}"\})$ to match the observed reward sequence, which is $0,0,0,1$, $\mathcal{A}$ would have to neglect the obstacle label $``\text{X}"$. In that case, however, the reward machine would output a reward sequence of $0,0,0,1$ also for the trajectory $s_{11}s_{12}s_{22}s_{23} s_{24}$, which gives the observed label sequence $\{``\text{c}"\},\{``\text{X}"\},\{``\text{X}"\},\{``\text{o}"\}$, i.e., $\mathcal{A}(\{``\text{c}"\},\{``\text{X}"\},\{``\text{X}"\},\{``\text{o}"\}) = 0,0,0,1$. The \textit{observed} reward sequence, however, is a sequence of zeros due to encounter of obstacles. Hence, there is no reward machine that encodes $R$ on $\hat{\mathcal{L}}_h$.  } 
 
\textcolor{blue}{Similarly, one might argue that if $\hat{L}(\hat{\mathcal{L}}_h,s)$ is not equal to $L_G$, then there is no reward machine that encodes $R$ on $\hat{\mathcal{L}}_h$. 
Consider, however, the scenario where the task is to bring coffee \textit{or} water to the office without encountering any obstacles and the extra atomic proposition ``water" is found in $s_{14}$, i.e., $L_G(s_{14}) = \{``\textit{w}"\}$. Further assume that  $\hat{L}(\hat{\mathcal{L}}_h,s_{ij}) = L_G$ for all $i\in\{1,2\}$, $j\in\{1,\dots,5\}$, except for $\hat{L}(\hat{\mathcal{L}},s_{14}) = \{ ``\textit{c}" \} \neq L_G(s_{14}) = \{``\textit{w}"\}$, i.e., the sensor falsely detects ``coffee" instead of ``water" in $s_{14}$. In that case, 
it can be verified that the observed reward sequence of any trajectory matches the sequence $\mathcal{A}(\hat{\ell}_0,\dots,\hat{\ell}_k)$, where $\mathcal{A}$ is a reward machine similar to the one Fig. \ref{fig:automata coffee}. Therefore, the existence of a reward machine that encodes $R$ on $\hat{\mathcal{L}}_h$ depends on the underlying task and its relation to the atomic propositions. }

\subsection{Convergence in the limit}

In this section, we establish  the correctness of Algorithm \ref{algo:JIRP ext}. \textcolor{blue}{More specifically, we provide theoretical guarantees on (i)
the convergence of the hypothesis reward machine $\mathcal{H}$ to the actual reward machine $\mathcal{A}$, and (ii) the asymptotic learning of the optimal policy that achieves accomplishment of the task encoded by $\mathcal{A}$}.

The aforementioned guarantees 
are based on two sufficient conditions. First, the estimate 
$\hat{L}(\hat{\mathcal{L}}_h, \cdot)$ becomes identical with the ground-truth labelling function $L_G$ after a finite number of episodes. Second, the length of each episode is larger than $2^{|\mathcal{M}|+1}(|\mathcal{A}| + 1)-1$, where $\mathcal{A}$ is the unknown reward machine that encodes the task to be accomplished. 
The first condition is tightly connected to the observation model $\mathcal{O}$ of the agent (see Def. \ref{def:ob_model}) and its effect on the Bayesian updates \eqref{eq:prob updates}. 
\textcolor{blue}{Intuitively, when there is unambiguity in the observation model, i.e., it is sufficiently accurate or inaccurate, the Bayesian updates \eqref{eq:prob updates}, along with consistent observations, lead to accurate estimation of $L_G$. 
In the next section, we show that such unambiguity implies $\mathcal{O}(s_1,s_2,p,\mathsf{True}) = Pr(\mathcal{Z}(s_1,s_2,p) = \mathsf{True} | s_2 \models p) > \mathcal{O}(s_1,s_2,p,\mathsf{False}) = Pr(\mathcal{Z}(s_1,s_2,p) = \mathsf{False} | s_2 \not \models p)$ or  $\mathcal{O}(s_1,s_2,p,\mathsf{True}) = Pr(\mathcal{Z}(s_1,s_2,p) = \mathsf{True} | s_2 \models p) < \mathcal{O}(s_1,s_2,p,\mathsf{False}) = Pr(\mathcal{Z}(s_1,s_2,p) = \mathsf{False} | s_2 \not \models p)$ and we illustrate that, in such cases, observations that are consistent with the truth values of $\mathcal{AP}$ and the observation model lead to convergence of $\hat{L}(\hat{\mathcal{L}}_i, \cdot)$ to $L_G(\cdot)$.    }
Nevertheless, such conditions are only sufficient for accurate estimation of $L_G$. The experiments of Section \ref{sec:exps} illustrate that the proposed algorithm learns how to accomplish the underlying task even with random observation models.


\textcolor{blue}{
The convergence results are given in the following Theorem, whose proof can be found in the Appendix:
\begin{theorem} \label{th:main}
Let $\mathcal{M} = (S, s_I,A, \mathcal{T}, R, \mathcal{AP}, \hat{L}, \gamma)$ be an \textcolor{blue}{l-MDP} and $\mathcal{A}$ be a reward machine that encodes the reward function $R$ on the ground truth. Further, assume that there exists a constant $n_f > 0$ such that $\hat{L}(\hat{\mathcal{L}}_h,\cdot) = L_G(\cdot)$, for all episodes $n > n_f$ of Algorithm \ref{algo:JIRP ext}\footnote{Note that this implies that $\{p \in \mathcal{AP}: \hat{L}_h(s,p) \geq 0.5\} = L_G(s)$ for all $s\in S$.}.
Then, Algorithm \ref{algo:JIRP ext}, with $eplength \geq 2^{|\mathcal{M}| + 1}(|\mathcal{A}| + 1) - 1$, converges almost surely to an optimal policy in the limit. 
\end{theorem}
}

\textcolor{blue}{
\begin{remark}
It can be concluded that the main differences of the proposed algorithm with respect to JIRP \cite{xu2020joint} consist of the updates of the labelling-function estimate (Sec. \ref{sec:information proc}) and the divergence test (Sec. \ref{sec:divergence test}). As mentioned before, however, these techniques can be applied to more general RL frameworks that rely on labelling functions \cite{jin2022creativity,hasanbeig2021deepsynth}. Such an application follows a procedure similar to the extension of the JIRP algorithm in the proposed methodology. 
More specifically, all the operations of the respective algorithms can be implemented based on a labelling-function estimate $\hat{\mathcal{L}}_h$. Since the algorithms in these works rely on exploration of the environment by the agents, as commonly done in RL, 
sensor observations can be used to update a running estimate $\hat{\mathcal{L}}_i$ based on an observation model, as done in Sec. \ref{sec:information proc}. Then, a divergence test can determine the update of $\hat{\mathcal{L}}_h$ with $\hat{\mathcal{L}}_i$, as done in Sec. \ref{sec:divergence test}.
\end{remark}
}

\textcolor{blue}{\subsection{Accurate estimation of $L_G$}} \label{sec:L estimate}

\textcolor{blue}{This section provides further insights on the relation of the Bayesian updates \eqref{eq:prob updates} and the observation model $\mathcal{O}$ towards the accurate estimation of $L_G$ by $\hat{L}$ given in \eqref{eq:L_hat}.} 
\textcolor{blue}{For ease of exposition, we assume that the agent receives observations about atomic proposition $p\in\mathcal{AP}$ in a state $s\in S$ only when it is located at that state. In the following, we use
$\mathcal{O}(s,s,p,\mathsf{True}) = \mathcal{O}_T$ and $\mathcal{O}(s,s,p,\mathsf{False}) = \mathcal{O}_F$.  }

\textcolor{blue}{We  derive a closed-form expression for the probabilistic belief $\hat{\mathcal{L}}_N(s,p)$ regarding an atomic proposition $p\in\mathcal{AP}$ in a state $s\in S$ after $N$ observations. From these $N$ observations, we consider that $N_T$ observations correspond to the detection $\mathcal{Z}(s,s,p) = \mathsf{True}$ and $N_F$ observations correspond to $\mathcal{Z}(s,s,p) = \mathsf{False}$, with $N = N_T + N_F$. We first show that $\hat{\mathcal{L}}_N(s,p)$ will be the same regardless of the order of the observations. Indeed, let $\hat{\mathcal{L}}_i(s,p)$ be the belief at the $i$th iteration, with $i \geq 0$. Assume that the agent then obtains first an observation $\mathcal{Z}(s,s,p) = \mathsf{True}$ followed by $\mathcal{Z}(s,s,p) = \mathsf{False}$. 
According to \eqref{eq:prob updates}, it holds that 
\begin{align} \label{eq:L i 1}
\hat{\mathcal{L}}_{i+1} &= \frac{\hat{\mathcal{L}}_{i} \mathcal{O}_T}{\hat{\mathcal{L}}_{i} \mathcal{O}_T + (1 - \hat{\mathcal{L}}_{i}) \mathcal{O}_F} \notag  \\
\hat{\mathcal{L}}_{i+2} &= \frac{\hat{\mathcal{L}}_{i+1}(1 - \mathcal{O}_T) }{ \hat{\mathcal{L}}_{i+1}(1 - \mathcal{O}_T)  + (1 - \hat{\mathcal{L}}_{i+1})(1 - \mathcal{O}_F)} \notag \\
&= \frac{ \hat{\mathcal{L}}_{i} \mathcal{O}_T (1 - \mathcal{O}_T)   }{ \hat{\mathcal{L}}_{i} \mathcal{O}_T (1 - \mathcal{O}_T)   +  \mathcal{O}_F ( 1- \hat{\mathcal{L}}_{i})(1 - \mathcal{O}_F) } 
\end{align}
where we omit the argument $(s,p)$ for brevity.
On the contrary, assume that the agent obtains first an observation $\mathcal{Z}(s,s,p) = \mathsf{False}$ followed by $\mathcal{Z}(s,s,p) = \mathsf{True}$. 
According to \eqref{eq:prob updates}, it holds that  
\begin{align} \label{eq:L i 2}
\hat{\mathcal{L}}_{i+1} &= \frac{\hat{\mathcal{L}}_{i}(1- \mathcal{O}_T)}{\hat{\mathcal{L}}_{i}(1 - \mathcal{O}_T) + (1 - \hat{\mathcal{L}}_{i})(1 - \mathcal{O}_F)} \notag \\
\hat{\mathcal{L}}_{i+2} &= \frac{\hat{\mathcal{L}}_{i+1} \mathcal{O}_T }{ \hat{\mathcal{L}}_{i+1}\mathcal{O}_T  + (1 - \hat{\mathcal{L}}_{i+1})\mathcal{O}_F} \notag \\
&= \frac{ \hat{\mathcal{L}}_{i} \mathcal{O}_T (1 - \mathcal{O}_T)   }{ \hat{\mathcal{L}}_{i} \mathcal{O}_T (1 - \mathcal{O}_T)   +  \mathcal{O}_F ( 1- \hat{\mathcal{L}}_{i})(1 - \mathcal{O}_F) },
\end{align}
from which we conclude that $\hat{\mathcal{L}}_{i+2}$ is the same in both cases. By induction, after $N = N_T + N_F$ observations, $\hat{\mathcal{L}}_N$ will be the same regardless of the sequence of $N_T$ and $N_F$ observations. 
}

\textcolor{blue}{
We now proceed to derive a general closed-form expression for $\hat{\mathcal{L}}_N$. Since the sequence of observations does not alter such an expression, we assume, without loss of generality, that the agent obtains first $N_T$ observations followed by $N_F$ observations. After $N_T$ observations  $\mathcal{Z}(s,s,p) = \mathsf{True}$, iterative application of \eqref{eq:prob updates 1} yields
\begin{align*}
\hat{\mathcal{L}}_{N_T} = \frac{\alpha^{N_T}}{\hat{\mathcal{L}}_0^{-1} + (\alpha - 1) \sum_{i=0}^{N_T - 1} \alpha^i} 
\end{align*}
where we denote $\alpha = \frac{\mathcal{O}_T}{\mathcal{O}_F}$. 
After another $N_F$ observations  $\mathcal{Z}(s,s,p) = \mathsf{False}$, iterative application of \eqref{eq:prob updates 2} yields 
\begin{align*}
\hat{\mathcal{L}}_{N} = \frac{( 1 - \alpha \mathcal{O}_F )^{N_F}}{D_F}
\end{align*}
where
\begin{align*}
D_F =& (1-\mathcal{O}_F)^{N_F} \hat{\mathcal{L}}_{N_T}^{-1} + \mathcal{O}_F (1 - \alpha) (1 - \alpha \mathcal{O}_F)^{N_F-1} \\
&+ \mathcal{O}_F (1 - \mathcal{O}_F)(1-\alpha) \sum_{i=0}^{N_F - 2}(1 - \alpha \mathcal{O}_F)^i (1 - \mathcal{O}_F)^{N_F - 2 - i} 
\end{align*}
Substituting $\hat{\mathcal{L}}_{N_T}$ in $\hat{\mathcal{L}}_{N}$ yields 
\begin{align} \label{eq:L_N}
\hat{\mathcal{L}}_N = \frac{\alpha^{N_T} ( 1 - \alpha \mathcal{O}_F )^{N_F}  }{D_N}
\end{align}
where
\begin{align*}
D_N =& \alpha^{N_T} \mathcal{O}_F (1 - \alpha) (1 - \alpha \mathcal{O}_F)^{N_F-1} + (1 - \mathcal{O}_F)^{N_F} \left( \hat{\mathcal{L}}_0^{-1} + (\alpha - 1) \sum_{i=0}^{N_T-1}\alpha^i \right) \\
&+ a^{N_T} \mathcal{O}_F (1 - \mathcal{O}_F)(1-\alpha) \sum_{i=0}^{N_F - 2}(1 - \alpha \mathcal{O}_F)^i (1 - \mathcal{O}_F)^{N_F - 2 - i}
\end{align*}
Note that, the case $\mathcal{O}_T = \mathcal{O}_F$, i.e., $\alpha = 1$, leads to $\hat{\mathcal{L}}_N = \hat{\mathcal{L}}_0$ for all $N >0$. Such cases entail an \textit{ambiguous} observation model and are equivalent to cases without the updates \eqref{eq:prob updates}; 
the probability of observing $\mathcal{Z}(s,s,p) = \mathsf{True}$ is the same regardless of whether $s\models p$ or $s\not \models p$ and the updates \eqref{eq:prob updates} fail to estimate $L_G$. In cases of an unambiguous observation model, however, satisfying $\mathcal{O}_T > \mathcal{O}_F$ or $\mathcal{O}_T < \mathcal{O}_F$, observations that are consistent with $\mathcal{O}$ and the truth value of $p$ in $s$ lead to accurate estimation of $L_G$. By consistent observations, we mean observations satisfying $N_T = \mathcal{O}_T N$ and $N_F = (1 - \mathcal{O}_T)N$ in case $s \models p$ and $N_T = \mathcal{O}_F N$ and $N_F = (1 - \mathcal{O}_F)N$ in case $s \not \models p$. } 

\textcolor{blue}{In order to formally prove the aforementioned conjecture, one would ideally need to show that $\hat{\mathcal{L}}_{N+1} > \hat{\mathcal{L}}_N$ in case $s \models p$ and $\hat{\mathcal{L}}_{N+1} < \hat{\mathcal{L}}_N$ in case $s \not \models p$. The complexity of the expression \eqref{eq:L_N}, however, renders such a procedure significantly difficult. Instead, we plot $\hat{\mathcal{L}}_N$ in \eqref{eq:L_N} as a function of $N$ for different values of the observation model satisfying $\mathcal{O}_T > \mathcal{O}_F$ and $\mathcal{O}_T < \mathcal{O}_F$. In particular, we consider four different cases formed by $\mathcal{O}_T > \mathcal{O}_F$,  $\mathcal{O}_T < \mathcal{O}_F$ and $s\models p$, $s \not \models p$. In each of these cases, we generate 500 different combinations of $\mathcal{O}_T$ and $\mathcal{O}_F$ randomly from a uniform distribution. In the cases where $\mathcal{O}_T > \mathcal{O}_F$, we choose $\mathcal{O}_T$ randomly in the interval $(0,1)$ and $\mathcal{O}_F$ in the interval $(0,\mathcal{O}_F)$. Reversely, in  the cases where $\mathcal{O}_T < \mathcal{O}_F$, we choose $\mathcal{O}_F$ randomly in the interval $(0,1)$ and $\mathcal{O}_T$ in the interval $(0,\mathcal{O}_T)$. In all cases we choose $\hat{\mathcal{L}}_0$ randomly from a uniform distribution in $(0,1)$. To obtain observations consistent with the observation model, we further choose $N_T = \mathcal{O}_T N$, $N_F = (1-\mathcal{O}_T)N$ when $s\models p$ and $N_T = \mathcal{O}_F N$, $N_F = (1-\mathcal{O}_F)N$ when $s \not \models p$. The results for the four different cases are depicted in Fig. 4. One can conclude from the figure that the $\hat{\mathcal{L}}_N(s,p)$ converges to $1$ and $0$ for when $s\models p$ and $s\not\models p$, respectively. 
}

\begin{figure}
\begin{subfigure}{.5\textwidth}
  \centering
  \includegraphics[width=\linewidth]{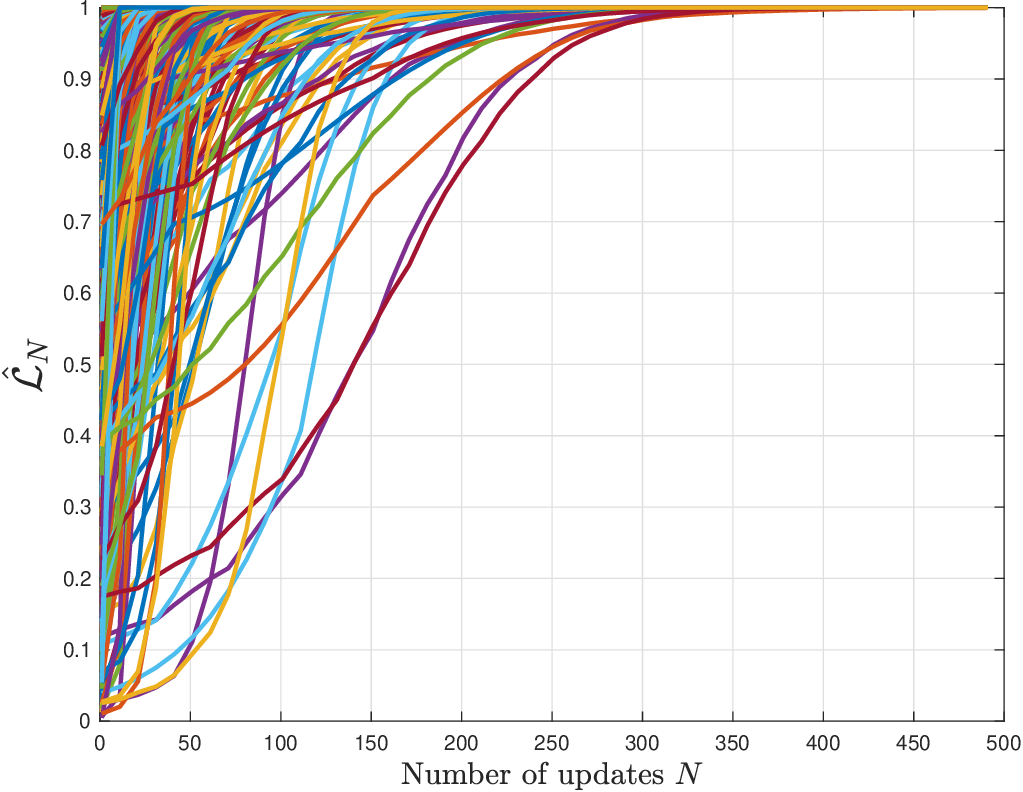}  
  \caption{Evolution of $\hat{\mathcal{L}}_N(s,p)$ when $s\models p$, $\mathcal{O}_T > \mathcal{O}_F$}
  \label{fig:L_N_s_accurate}
\end{subfigure}
\begin{subfigure}{.5\textwidth}
  \centering
  \includegraphics[width=\linewidth]{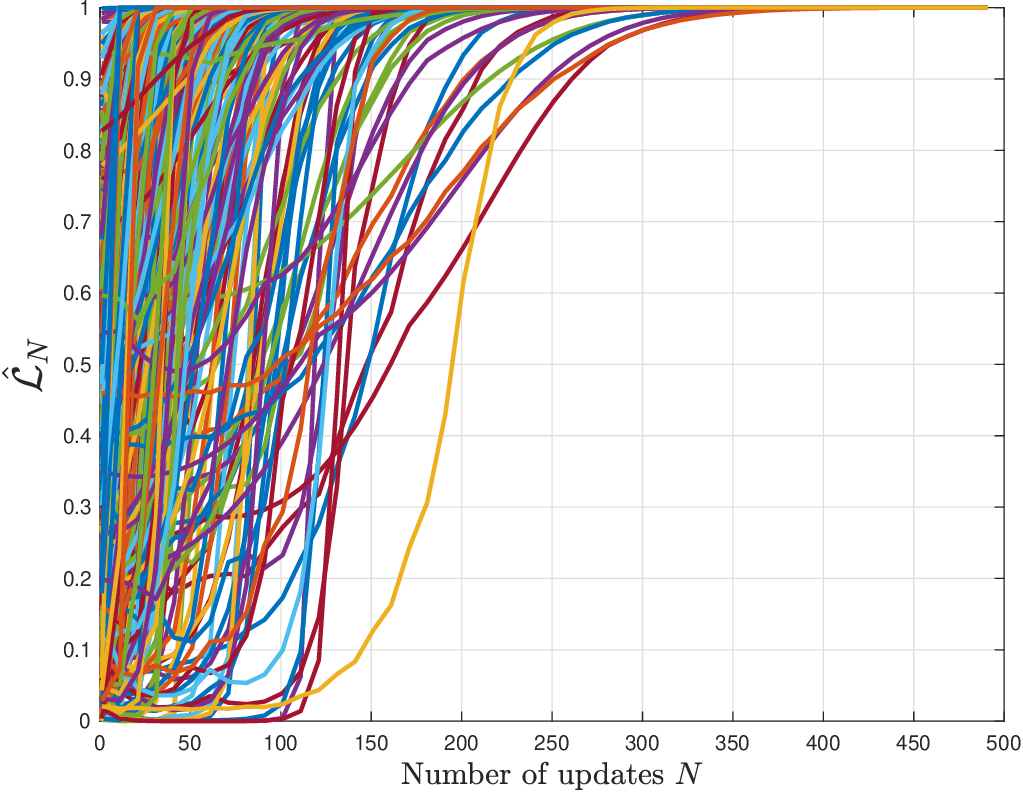}  
   \caption{Evolution of $\hat{\mathcal{L}}_N(s,p)$ when $s\models p$, $\mathcal{O}_T < \mathcal{O}_F$}
  \label{fig:L_N_s_not_accurate}
\end{subfigure}
\begin{subfigure}{.5\textwidth}
  \centering
  \includegraphics[width=\linewidth]{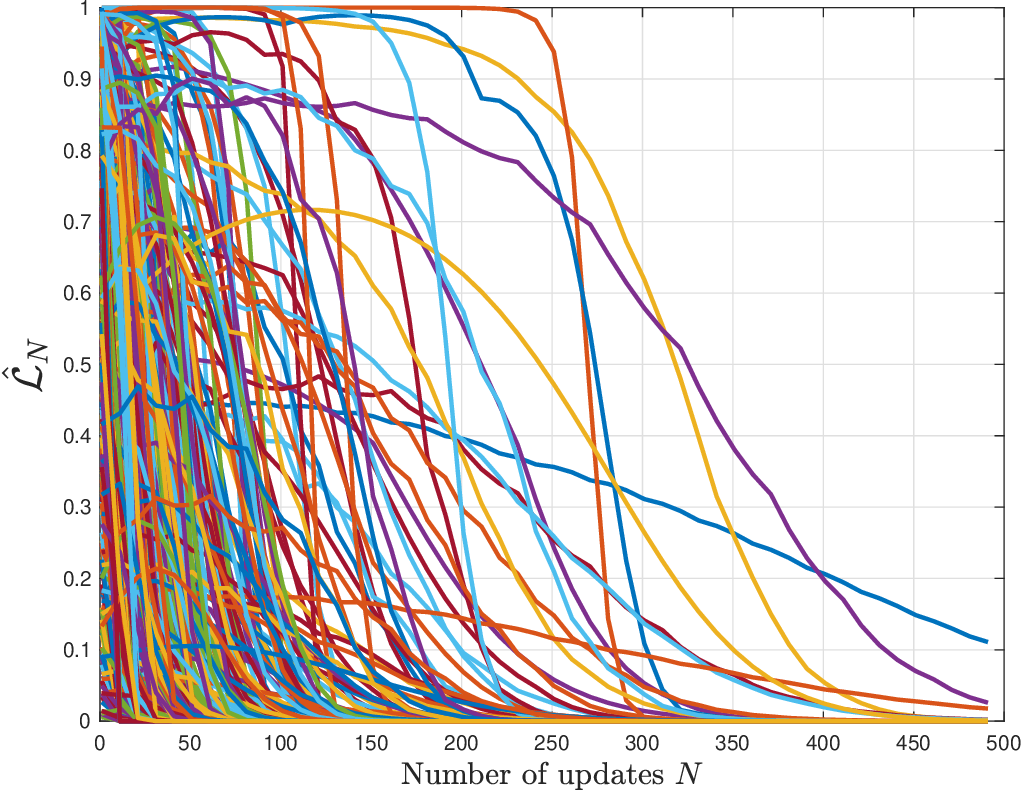}  
   \caption{Evolution of $\hat{\mathcal{L}}_N(s,p)$ when $s\not \models p$, $\mathcal{O}_T > \mathcal{O}_F$}
  \label{fig:L_N_not_s_accurate}
\end{subfigure}
\begin{subfigure}{.5\textwidth}
  \centering
  \includegraphics[width=\linewidth]{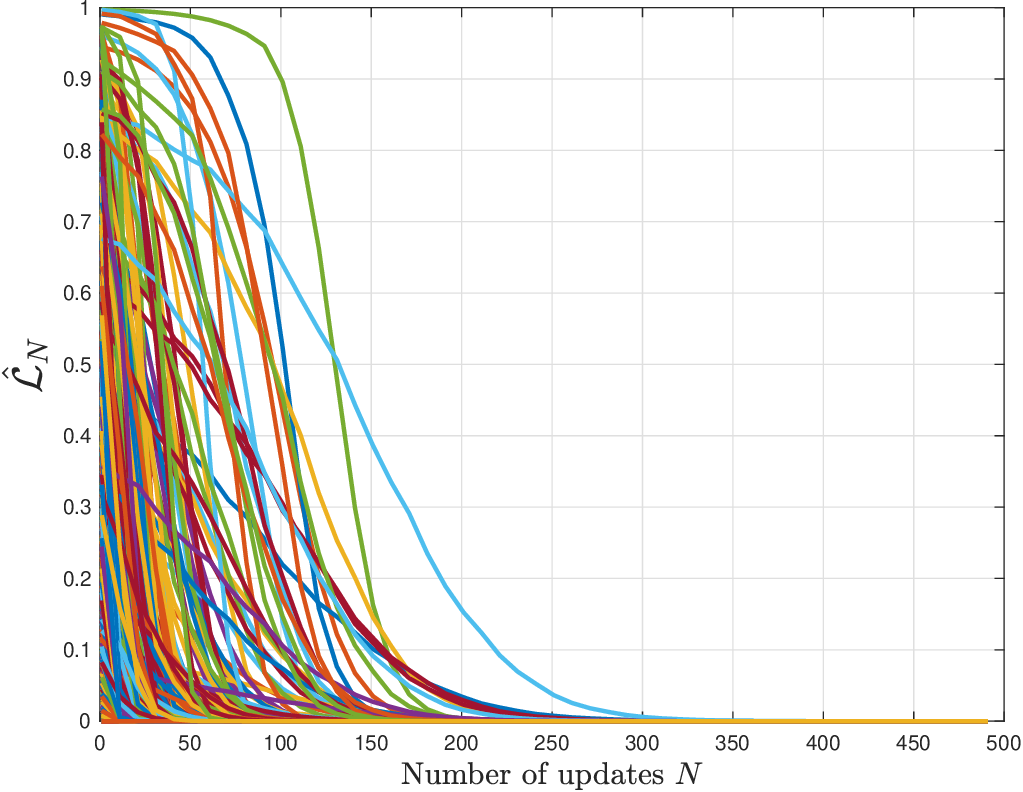}  
  \caption{Evolution of $\hat{\mathcal{L}}_N(s,p)$ when $s\not \models p$, $\mathcal{O}_T < \mathcal{O}_F$}
  \label{fig:L_N_not_s_not_accurate}
\end{subfigure}
\label{fig:L_N}
\caption{Evolution of $\hat{\mathcal{L}}_N(s,p)$ according to the Bayesian updates \eqref{eq:prob updates}. }
\end{figure}


\textcolor{blue}{
\subsection{Extension to continuous spaces}
}

\textcolor{blue}{
In cases where the state $S$ or action space $A$ is continuous, the tabular q-updates of the QRM algorithm (Alg. \ref{algo:QRM_mod}) cannot be implemented. In such cases, one can employ a parametrization $\theta^q \in \mathbb{R}^\ell$, for some $\ell \in \mathbb{N}$, in order to approximate the $q$ function for each state of the hypothesis reward machine. 
In particular, we define $q^v(s,a | \theta^q) :  S \times A \times \bar{\mathbb{V}} \times \mathbb{R}^\ell \to \mathbb{R}$ as a parametric approximation of $q^v(s,a)$ for each state-action pair $(s,a) \in S\times A$ and reward-machine state $v\in \bar{V}$;  $\bar{\mathbb{V}}$ is defined as $\bar{\mathbb{V}}= \{1,\dots,\bar{V}\}$, where $\bar{V}$ is an upper bound on the number of reward-machine states. Such an approximation is typically modelled using deep neural networks. 
The $q$-learning update of Alg. \ref{algo:QRM_mod} (lines 7 and 11) is then performed by minimizing the loss function $\mathbb{E}_{s \sim pr^\beta} [(q^v(s,a | \theta^q) - y)^2]$  \cite{hasanbeig2021deepsynth}, where $pr^\beta$ is the probability distribution of state-visit over $S$ under an arbitrary stochastic policy $\beta$ and $y= R(s_0\dots s')  + \gamma \max_{a' \in A} q^v(s',a' | \theta^q) $.
One could also append an intrinsic reward $r^{in}$ to the aforementioned update; such an intrinsic reward takes positive values when the agent observes a new label during exploration in an episode and zero otherwise and has been shown to aid with learning \cite{hasanbeig2021deepsynth,jin2022creativity}.}

\textcolor{blue}{We note that, although the underlying spaces are continuous, the atomic propositions naturally induce a discrete abstraction $S_A = \{S_{A_1},\dots,S_{A_M}\}$  
of the continuous state space $S$. Such an abstraction is formed by grouping states where the same atomic propositions hold true. That is, given $s,s' \in S$ and $S_{A_i} \in S_A$ a discrete state, it holds $s$, $s'$ $\in$ $S_{A_i}$ if and only if there exists a subset $P \subset \mathcal{AP}$ of atomic propositions satisfying $s \models p$ and $s' \models p$ for all $p \in P$. Therefore, the ground-truth labelling function $L_G:S \to 2^{\mathcal{AP}}$ is naturally extended from $S$ to $S_A$, i.e., $L_G:S_A \to 2^{\mathcal{AP}}$.
Finally, we note that one could use appropriate tools, such as the options framework \cite{jin2022creativity},
to abstract the continuous state and action spaces to discrete ones and apply the proposed algorithm.
}

\section{Experimental Results} \label{sec:exps}

\textcolor{blue}{
We test the proposed algorithm in three different environments, described in the following. 
In all environments, we employ the RC2 SAT solver \cite{morgado2014maxsat} from the PySAT library \cite{imms-sat18} towards reward-machine inference. All experiments were conducted on a Lenovo laptop with 2.70-GHz Intel i7 CPU and 8-GB RAM.}

\subsection{HouseExpo floor plans} \label{sec:exps floorplan} 
We first test the algorithm in floor plan environments based on the indoor layouts collected in the HouseExpo dataset \cite{li2019houseexpo}. HouseExpo consists of indoor layouts, built from 35,126 houses with a total of 252,550 rooms. There are 25 room categories, such as bedroom, garage, office, boiler room, etc. The dataset provides bounding boxes for the rooms with their corresponding categories, along with 2D images of layouts. We pre-process these layouts to generate grid-world-like floor plans, where every grid is labelled with the type of the room it is in (see Fig. \ref{fig:layout}). We use these labels as atomic propositions to define tasks encoded as reward machines. In particular, we use the set of atomic propositions $\mathcal{AP} = \{\tt{kitchen}, \tt{bathroom}, \tt{bedroom}, \tt{office}, \tt{indoor}\}$. 

In the experiments, an episode terminates when the agents receives a non-zero reward or the maximum number of steps allowed in one episode is exceeded. This maximum number of steps per episode is 1000, whereas the maximum number of total training steps is 500,000. 
Furthermore, we set the Jensen-Shannon divergence threshold $\gamma_d$ of Section \ref{sec:divergence test} to $10^{-5}$, and Algorithm \ref{algo:QRM_mod} selects a random action with probability $\varepsilon_a=0.3$.

We compare the following experimental settings regarding the observation model and the belief updates: 
\begin{enumerate}
\item Time-varying Bayesian Observation Model (TvBNN): We use a number of training HouseExpo layouts to train a Bayesian neural network, which the algorithm uses as an observation model in test layouts for the experiments. The algorithm samples observation probabilities of the atomic propositions' truth value from the neural network at the beginning of every training episode. 
\item Fixed Bayesian Observation Model (FiBNN): We use a number of training HouseExpo layouts to train a Bayesian neural network, which the algorithm uses as an observation model in test layouts for the experiments. The algorithm samples observation probabilities of the atomic propositions' truth value from the neural network only once, at the beginning of training.
\item Random Observation Model: The algorithm uses an observation model whose observation probabilities are uniformly sampled from the interval [0.1,0.9].
\item Random Observation Model-2: The algorithm uses an observation model whose observation probabilities are uniformly sampled from the interval [0.4,0.5].
\item No Belief Updates: The algorithm uses a belief function that is randomly initialized, without updating it.
\end{enumerate}

The Bayesian observation model in the first two settings is a neural network of three densely connected layers with flipout estimators \cite{wen2018flipout}. The loss function is the expected lower-bound loss, and a combination of Kullback-Leibler divergence and categorical cross entropy. Given the information about a grid in a floor plan, the Bayesian neural network (BNN) predicts category probabilities of the room where the grid is located.
The input vector consists of the 2D coordinates of the grid, the size and the number of neighbours of
the room. The output is the predicted probabilities of the room type. 
The training set consists of 3,000 pre-processed layouts from the HouseExpo dataset, which we select based on the floor plan size and the number of different room categories. After pre-processing, to eliminate variance
with respect to the initial position of grids, we rotate every training layout 3 times by 90 degrees, and then
add each rotated version to the training set. 
In all aforementioned cases, we simulate the observations to be consistent with the respective observation model, i.e., observe that $\mathcal{Z}(s_1,s_2,p) = \mathsf{True}$ and $\mathcal{Z}(s_1,s_2,p) = \mathsf{False}$ with probabilities $\mathcal{O}(s_1,s_2,p,\mathsf{True})$ and $1-\mathcal{O}(s_1,s_2,p,\mathsf{True})$, respectively, when $s_2 \models p$, and with probabilities $\mathcal{O}(s_1,s_2,p,\mathsf{False})$ and $1-\mathcal{O}(s_1,s_2,p,\mathsf{False})$, respectively, when $s_2 \not \models p$.

For the reward-machine-inference part of the algorithm, 
we limit the maximum allowable number of states for the hypothesis reward machine to 4. Similarly, we limit the maximum allowable size of the trace sets, which is the number of episodes whose traces are recorded, to 20. In case the inference exceeds the allowed number of states, the process is stopped and the training continues with the last valid hypothesis reward machine until the next inference step.

We first specify the following task to the agent: 
$\phi_1 = $ ``Go to $\tt{bedroom}$ or $\tt{office}$, and then to the $\tt{kitchen}$", which is encoded by the reward machine shown in Fig. \ref{fig:automata floorplan simple}. 
\begin{figure}
\centering
\includegraphics[width=\textwidth]{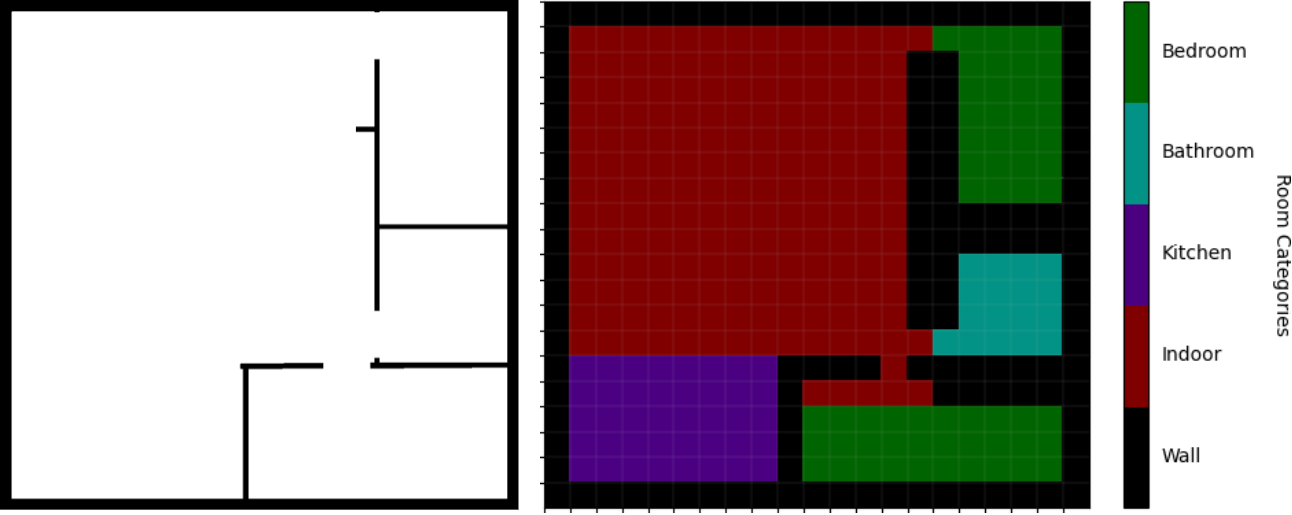}
\caption{Left: Sample indoor layout from HouseExpo dataset; Right: Pre-processed sample layout}
\label{fig:layout}
\end{figure}

\begin{figure}
\centering
\begin{tikzpicture}[scale =0.9]
\node[state, initial] (q0) {$v_0$};
\node[state, below left of=q0] (q1) {$v_1$};
\node[state, accepting, below right of=q1] (q2) {$v_2$};
\draw (q0) edge[loop above] node {\tt $(\neg \textup{bedroom} \land \neg \textup{office},0)$} (q0);
\draw (q0) edge node {\tt $(\textup{bedroom} \lor \textup{office},0)$} (q1);
\draw (q1) edge[loop left] node {\tt $(\neg \textup{kitchen}, 0)$} (q1);
\draw (q1) edge node [left] {\tt $(\textup{kitchen} ,1)$} (q2);
\draw (q2) edge[loop above] node {\tt $(\textup{True}, 0)$} (q2);
\end{tikzpicture}
\caption{Ground truth reward machine of the task $\phi_1$.}
\label{fig:automata floorplan simple}
\end{figure}
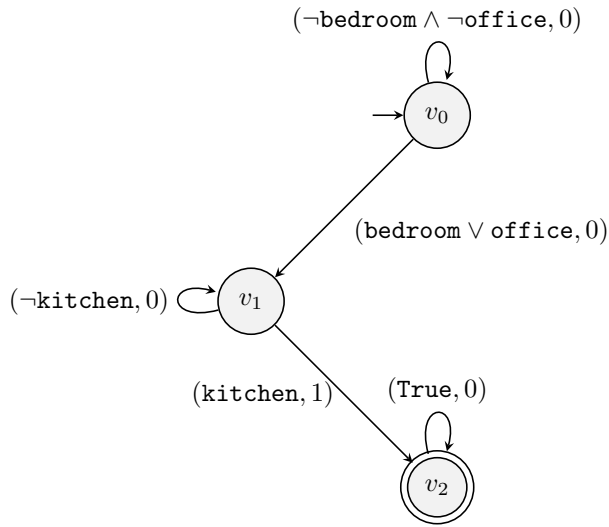

We execute the proposed algorithm in 75 test floor plans. For each one of the first four experimental settings, where updates according to an observation model are performed, 
the learning agent is trained 5 times in each test floor plan, each one corresponding to a different random seed, leading to a total of 375 runs. For the last setting that does not include belief updates, we removed 12 test floor plans where the inference procedure took unreasonably long time to finish, resulting in 63 test floor plans. This is attributed to the fact that the belief functions are randomly initialized and not updated throughout the training. For this last setting, the learning agent is also trained 5 times in each test floor plan, each one corresponding to a different random seed, leading to a  total of 290 runs. 
 In each one of the aforementioned runs, we perform an evaluation episode every 100 training steps and register the obtained rewards. \textcolor{blue}{Figure \ref{fig:rewards and belief floorplan 1} and Table \ref{table:comparison} show the results. In particular, Fig. \ref{fig:reward_time_varying_BNN}  depicts the progression of the rewards during training for the time-varying BNN (TvBNN), separated in the ones belonging to the $25^{th}$ and $75^{th}$ percentiles, and the ones belonging to the  $10^{th}$ and $90^{th}$ percentiles; The figure also depicts the median values with solid line. We omit the plots for the other observation models since they converge after a similar amount of steps, as can be verified by Table \ref{table:comparison}. In particular, Table \ref{table:comparison} shows the training steps required for convergence for the $25^{th}$ (Q1), $50^{th}$ (Q2), and $75^{th}$ (Q3) percentiles of the attained rewards for all cases of observation models. More specifically, we compute the $z^{th}$ percentile as follows: for each training step, we write the obtained rewards for all the 375 runs in increasing order and calculate the $z^{th}$ position; subsequently, the training steps for the $z^{th}$ percentile are the training steps required for this position to become $1$. 
 The table also depicts the ratio of successful inferences of reward machines (RS), and the average number of belief updates (BU) of $\hat{\mathcal{L}}_h$ after the JSD divergence test. 
}

Figures \ref{fig:belief_num_time}-\ref{fig:belief_jsd} demonstrate the progression of belief updates throughout the training; Fig. \ref{fig:belief_num_time} shows the number of training runs (y-axis) in which the belief function $\hat{\mathcal{L}}_h$ is updated for the $k^{th}$ time (x-axis). For instance, in the time-varying BNN (``TvBNN'') setting, the learning agent updates its belief for the third time in around 20 runs out of 375 training runs (5 runs per every floor plan where a reward machine is successfully inferred). 
Fig. \ref{fig:belief_step} shows the number of the step (y-axis) in which the belief function is updated for the $k^{th}$ time on average (x-axis), and Fig. \ref{fig:belief_jsd} displays the average Jensen-Shannon divergence value that led to the $k^{th}$ belief update. The time-varying BNN (``TvBNN'') setting allows the learning agent to obtain a belief function that represents the environment well in a couple of steps. When sampling observation probabilities once, as in the fixed BNN (``FiBNN'') setting, it slows down the process since the sampled observation might not be informative. Regarding the random observation model, due to the wide uniform distribution range, the agent samples probabilities that are close to zero or one, resulting to a more unambiguous observation model; that is, the agent is more ``certain" that an atomic proposition won't or will be observed, respectively. This is the reason why the second random observation model performs more belief updates until the divergence score drops below the threshold. Overall, it is expected to see that, in the BNN settings, the algorithm performs fewer belief updates compared to the random settings.   

\textcolor{blue}{
In view of the aforementioned results, one concludes that the labelling-function belief converges after much fewer steps than the ones needed for the learning of the optimal policy. Therefore, the different observation models used do not affect significantly the inference of the reward machine or the speed of the q-learning's convergence to the optimal policy, as long as such observation models lead to accurate estimation of the ground-truth labelling function. This fact can be verified by the experiments of Sec. \ref{sec:L estimate}, where the labelling-function estimate converges after approximately 500 updates (training steps), which is much fewer than the thousands of steps needed for learning the optimal policy (see Fig. \ref{fig:rewards and belief floorplan 1} and Table \ref{table:comparison}).
}

 \textcolor{blue}{
 \begin{figure}
\begin{subfigure}{.5\textwidth}
  \centering
  \includegraphics[width=\linewidth]{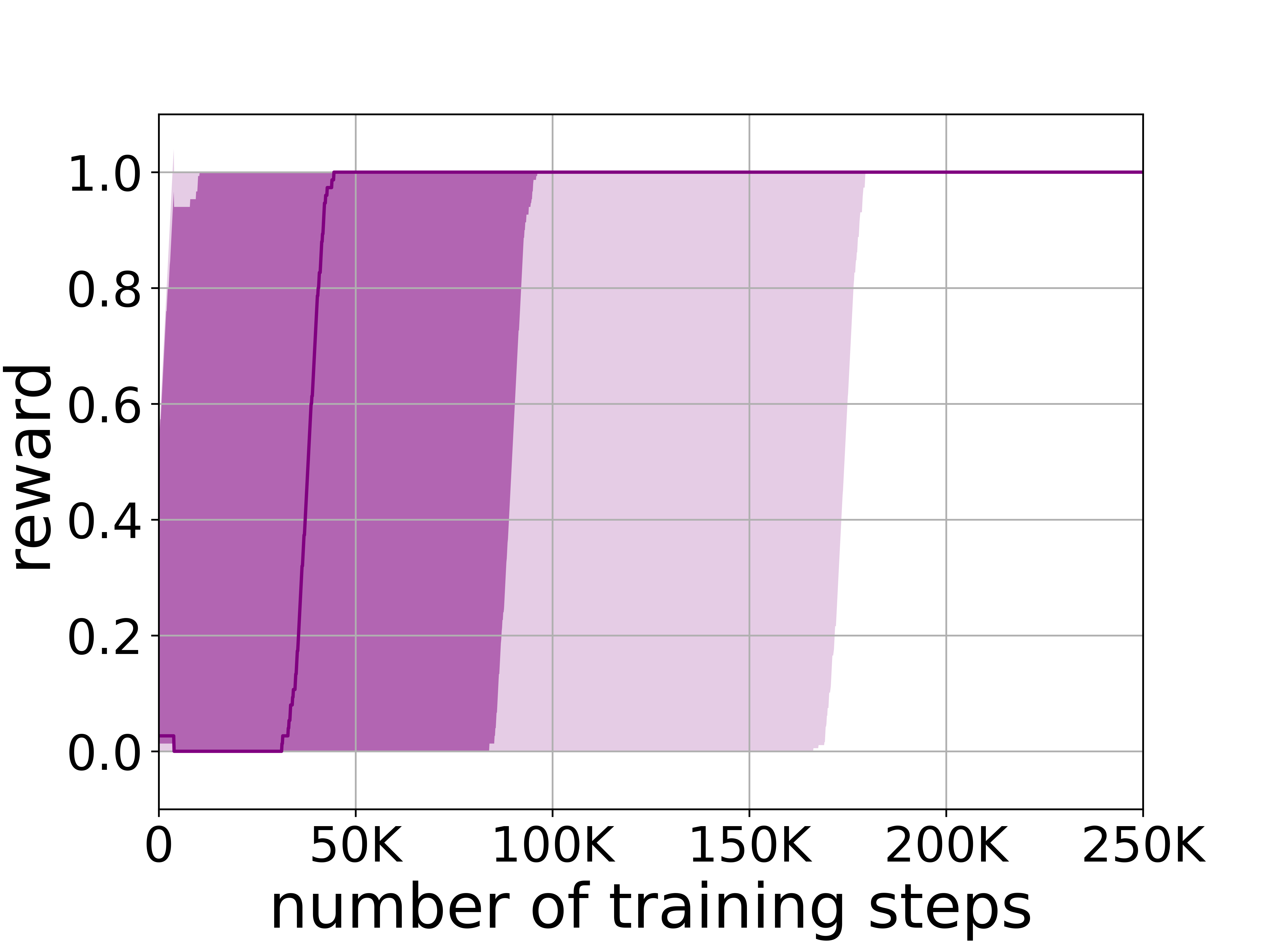}  
  \caption{\textcolor{blue}{
Time-varying BNN}}
  \label{fig:reward_time_varying_BNN}
\end{subfigure}
\begin{subfigure}{.5\textwidth}
  \centering
  \vspace{5mm}
  \includegraphics[width=\linewidth]{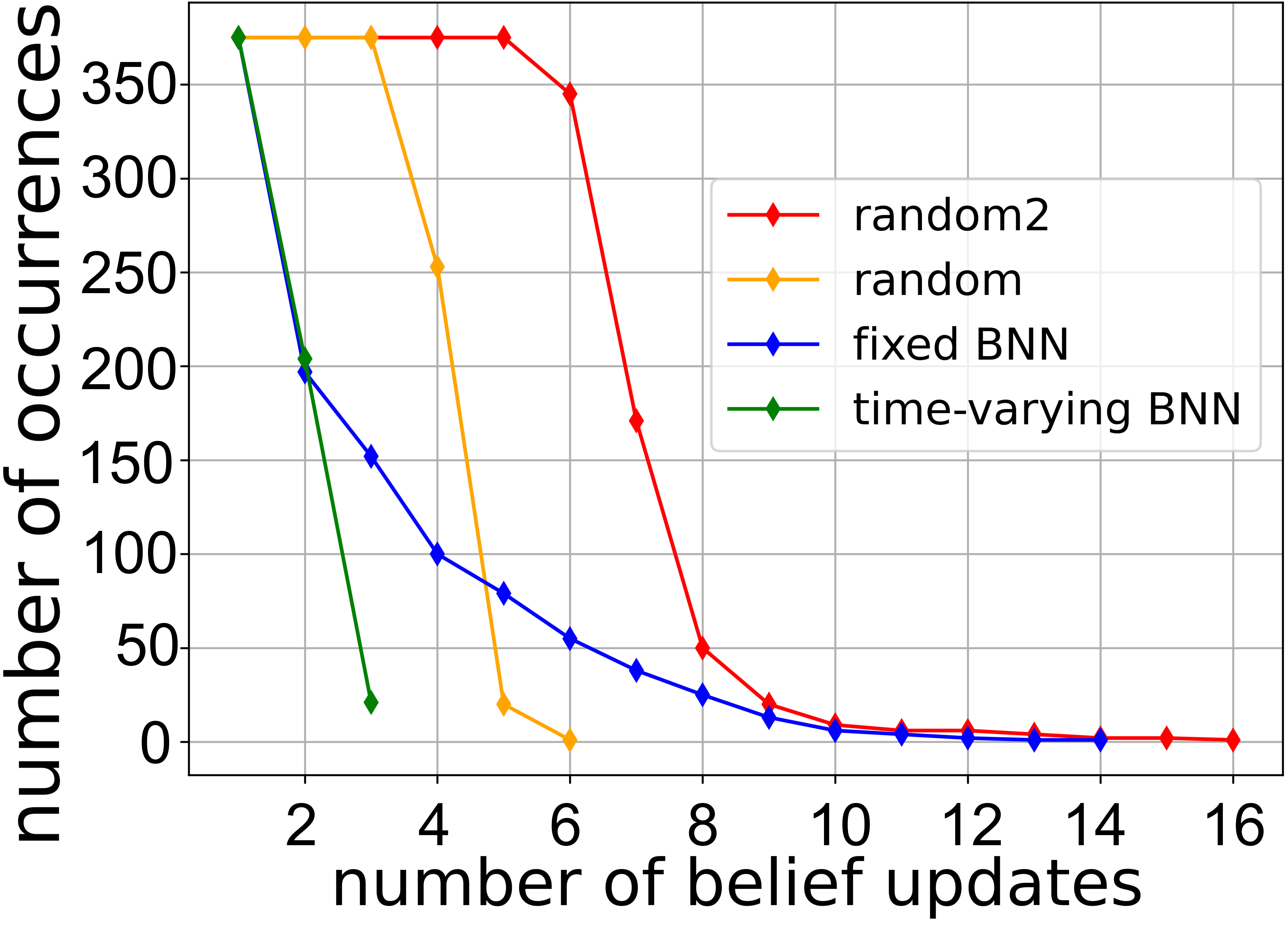}  
  \caption{\textcolor{blue}{
Average number of runs for the $k^{th}$ belief update.}}
  \label{fig:belief_num_time}
\end{subfigure}
\begin{subfigure}{.5\textwidth}
  \centering
  \includegraphics[width=\linewidth]{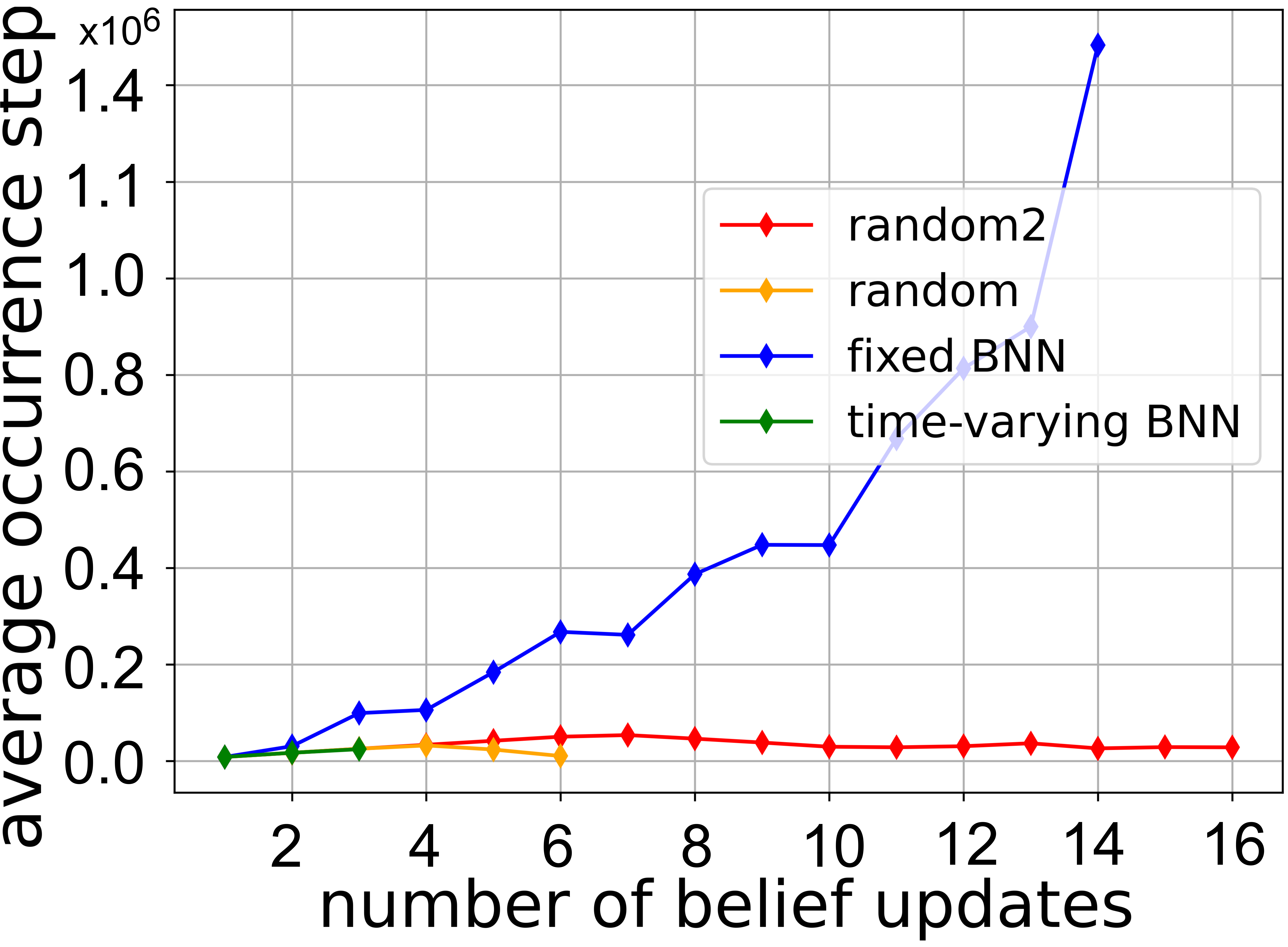}  
  \caption{\textcolor{blue}{Training step for the $k^{th}$ belief update, on average. }}
  \label{fig:belief_step}
\end{subfigure}
\begin{subfigure}{.5\textwidth}
  \centering
  \includegraphics[width=\linewidth]{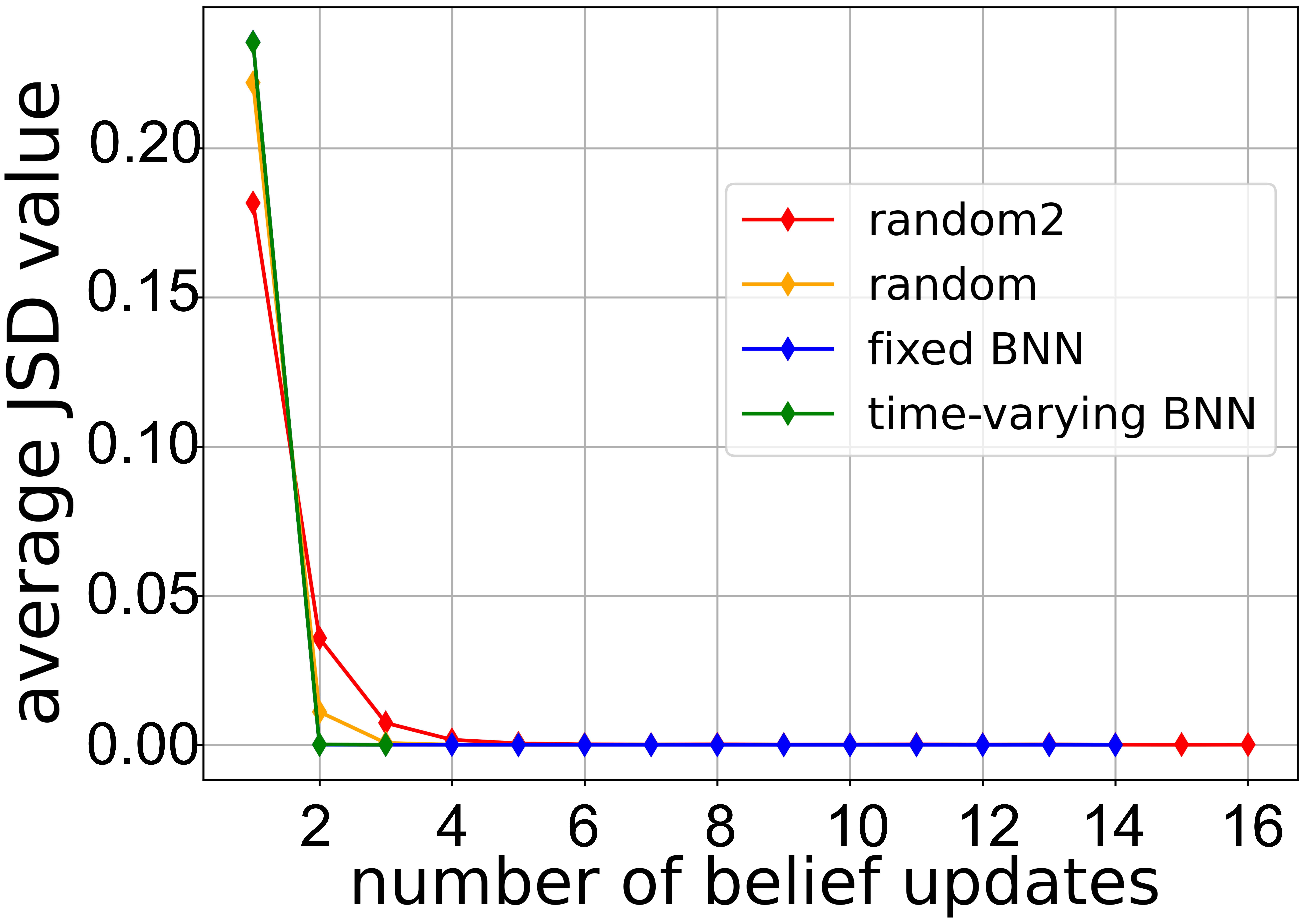}  
  \caption{\textcolor{blue}{Average value of the JSD that led to the $k^{th}$ update.}}
  \label{fig:belief_jsd}
\end{subfigure}
\caption{\textcolor{blue}{
(a): Attained rewards of 5 independent runs for every 100 training steps in 75 test indoor layouts for task $\phi_1$ using TvBNN: The darker region is bounded by $25^{th}$ and $75^{th}$ percentiles, while the lighter region is bounded by $10^{th}$ and $90^{th}$. The solid line represents the median.
(b)-(d): Progression of belief updates during training for task $\phi_1$.}}
\label{fig:rewards and belief floorplan 1}
\end{figure}
 }

\begin{table}[h!]
 \caption{Convergence results of the proposed algorithm under different observation models in 5 independent runs for 75 test indoor layouts in the first HouseExpo floorplan experiment.}
\centering
\begin{tabular}{c c c c c c} 
 \hline 
 Settings & TvBNN & FiBNN & Random & Random2 & No Update \\ 
 \hline\hline
 Q1 & 95,800 & 109,600 & 103,000 & 107,800 & 500,000 \\ 
 \hline
 Q2 & 44,400 & 48,900 & 40,100 & 45,700 & 189,900 \\
 \hline
 Q3 & 9,900 & 22,700 & 11,200 & 17,200 & 35,000\\
 \hline
 RS & 75/75 & 75/75 & 75/75 & 75/75 & 63/75\\
 \hline
 BU & 1.6 & 2.79 & 3.73 & 6.64 & -\\
 \hline
\end{tabular}
 \label{table:comparison}
\end{table}


To test the capabilities of our method in a more complicated scenario, we set a second task as $\phi_2 = $ ``Go to $\tt{kitchen}$ while avoiding the $\tt{bathroom}$, and then  go to the $\tt{bathroom}$ while avoiding the $\tt{bedroom}$", which is encoded by the reward machine shown in Fig. \ref{fig:automata_floorplan2}.

\begin{figure}
    \centering
    \begin{tikzpicture}[scale =0.6]
        \node[state, initial] (q0) {$v_0$};
        \node[state, below left of=q0] (q1) {$v_1$};
        \node[state, below right of=q0] (q2) {$v_2$};
        \node[state, accepting, below right of=q1] (q3) {$v_3$};
        \draw (q0) edge[loop above] node {\tt $(\neg \textup{kitchen} \land \neg \textup{bathroom},0)$} (q0);
        \draw (q0) edge node {\tt $(\textup{bathroom}, 0)$} (q2);
        \draw (q0) edge node [left] {\tt $(\textup{kitchen} \land \neg \textup{bathroom},0)$} (q1);
        \draw (q2) edge[loop right] node {\tt $(\textup{True}, 0)$} (q2);
        \draw (q1) edge[loop left] node {\tt $(\neg \textup{bathroom}, 0)$} (q1);
        \draw (q1) edge node {\tt $(\textup{bedroom}, 0)$} (q2); 
        \draw (q1) edge node [left]{\tt $(\textup{bathroom} \land \neg \textup{bedroom}, 1)$} (q3);
        \draw (q3) edge[loop above] node {\tt $(\textup{True}, 0)$} (q3);
\end{tikzpicture}
        \caption{Ground truth reward machine of the task $\phi_2$.}
        \label{fig:automata_floorplan2}
\end{figure}
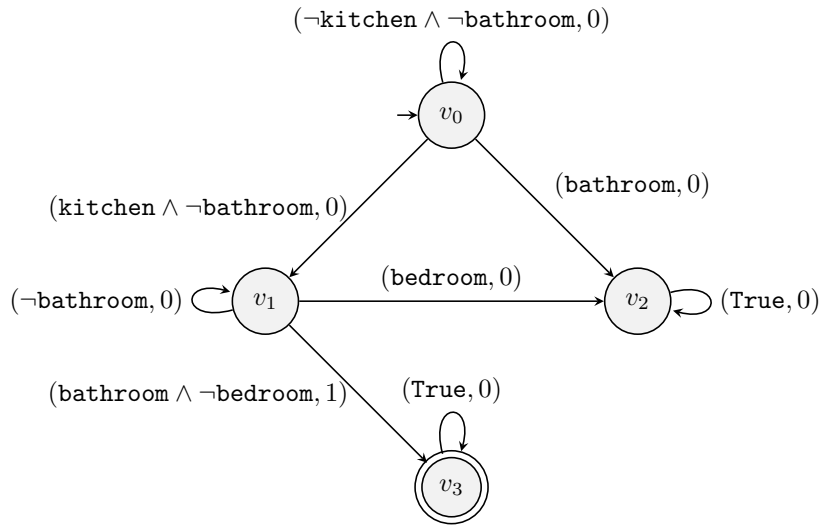


For this second task, we execute the proposed algorithm in 7 test floor plans with 5 different random seeds, resulting to a total of 35 runs.
Figure \ref{fig:rewards and belief floorplan 2} and Table \ref{table:comparison2} show the convergence results and belief updates, similar to the ones for $\phi_1$, for the first four settings; we omit the last one (no belief updates) since training fails to yield an inferred reward machine. We conclude that all observation models lead to accurate estimation of the ground-truth labelling function and hence to successful inference of the reward machine and learning of the optimal policy. 
According to Table \ref{table:comparison2}, the belief update patterns are similar to the ones observed for $\phi_1$. 
The average number of belief updates is smaller for the BNN observation settings compared to the random observation settings. The TvBNN setting requires the fewest number of belief updates on average, whereas the Random-2 setting requires the highest number of updates.   
Finally, one can still conclude that the proposed algorithm outperforms the nominal setting, where no belief updates are performed.   

\begin{figure}
\begin{subfigure}{.55\textwidth}
  \centering
  \includegraphics[width=\linewidth]{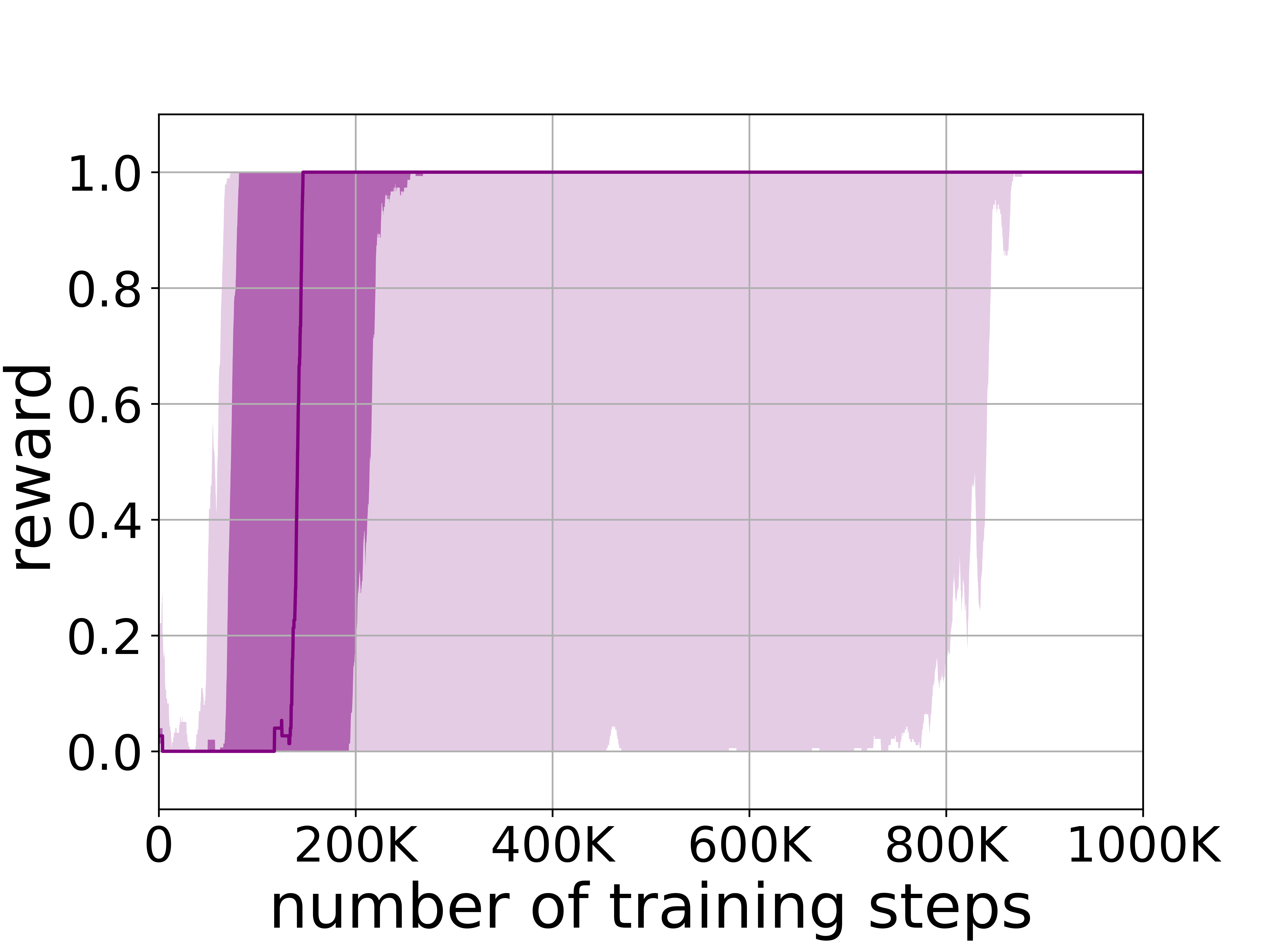}  
  \caption{\textcolor{blue}{
Time-varying BNN}}
  \label{fig:reward2_time_varying_BNN}
\end{subfigure}
\begin{subfigure}{.5\textwidth}
  \centering
  \includegraphics[width=\linewidth]{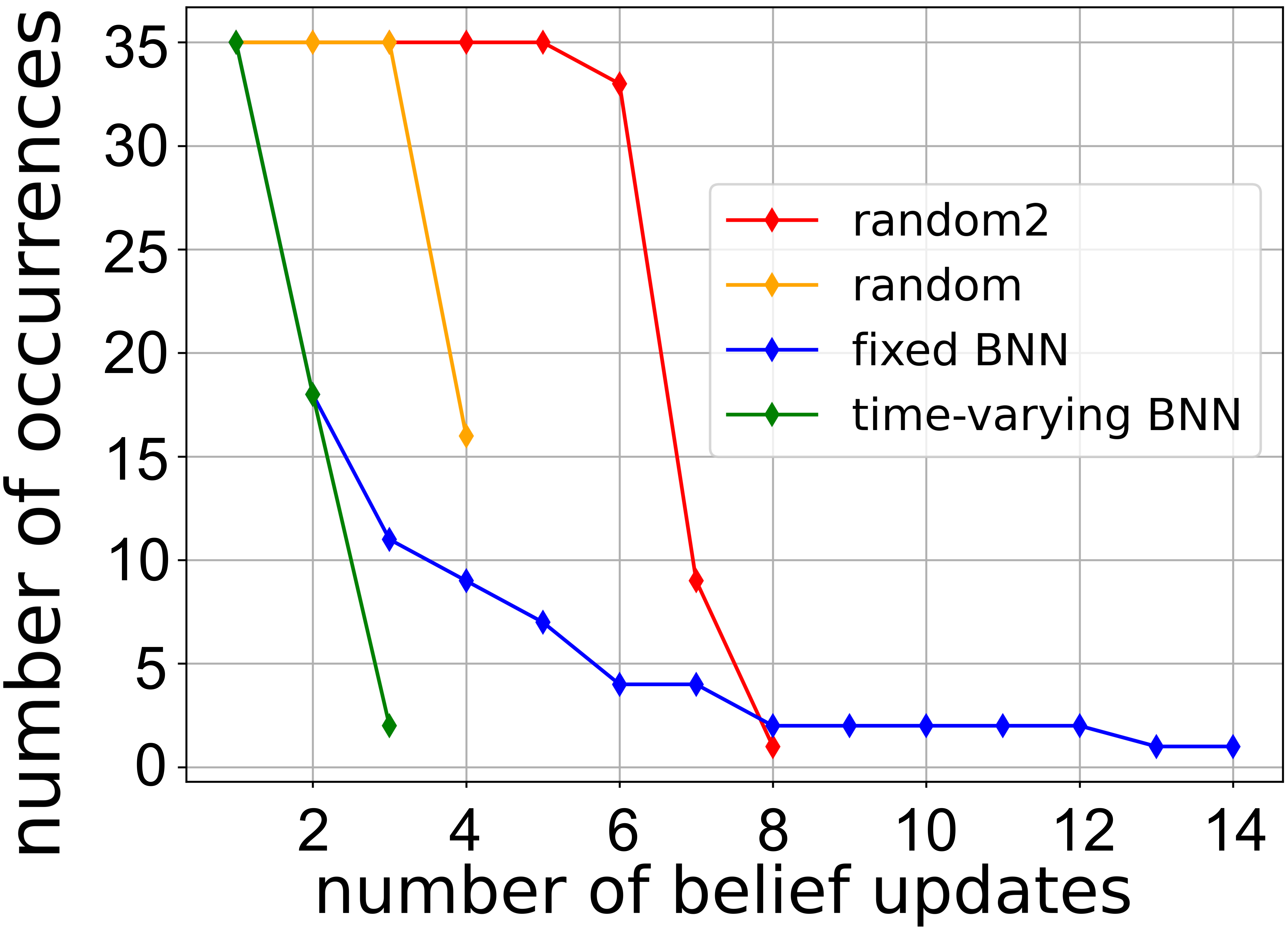}  
  \caption{\textcolor{blue}{
Average number of runs for the $k^{th}$ belief update.}}
  \label{fig:belief2_num_time}
\end{subfigure}
\begin{subfigure}{.5\textwidth}
  \centering
  \includegraphics[width=\linewidth]{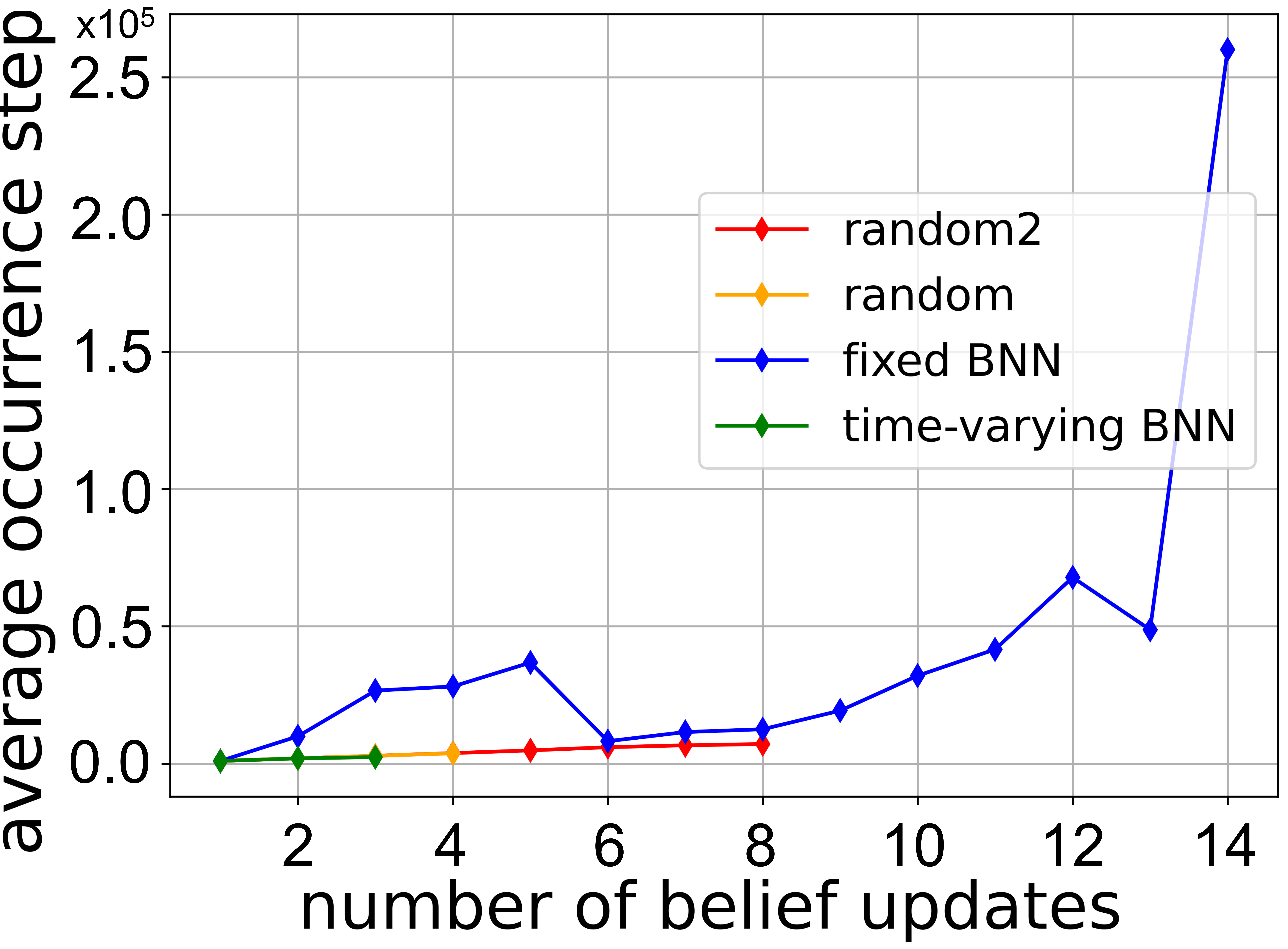}  
  \caption{\textcolor{blue}{
Average number of runs for the $k^{th}$ belief update.}}
  \label{fig:belief2_step}
\end{subfigure}
\begin{subfigure}{.5\textwidth}
  \centering
  \includegraphics[width=\linewidth]{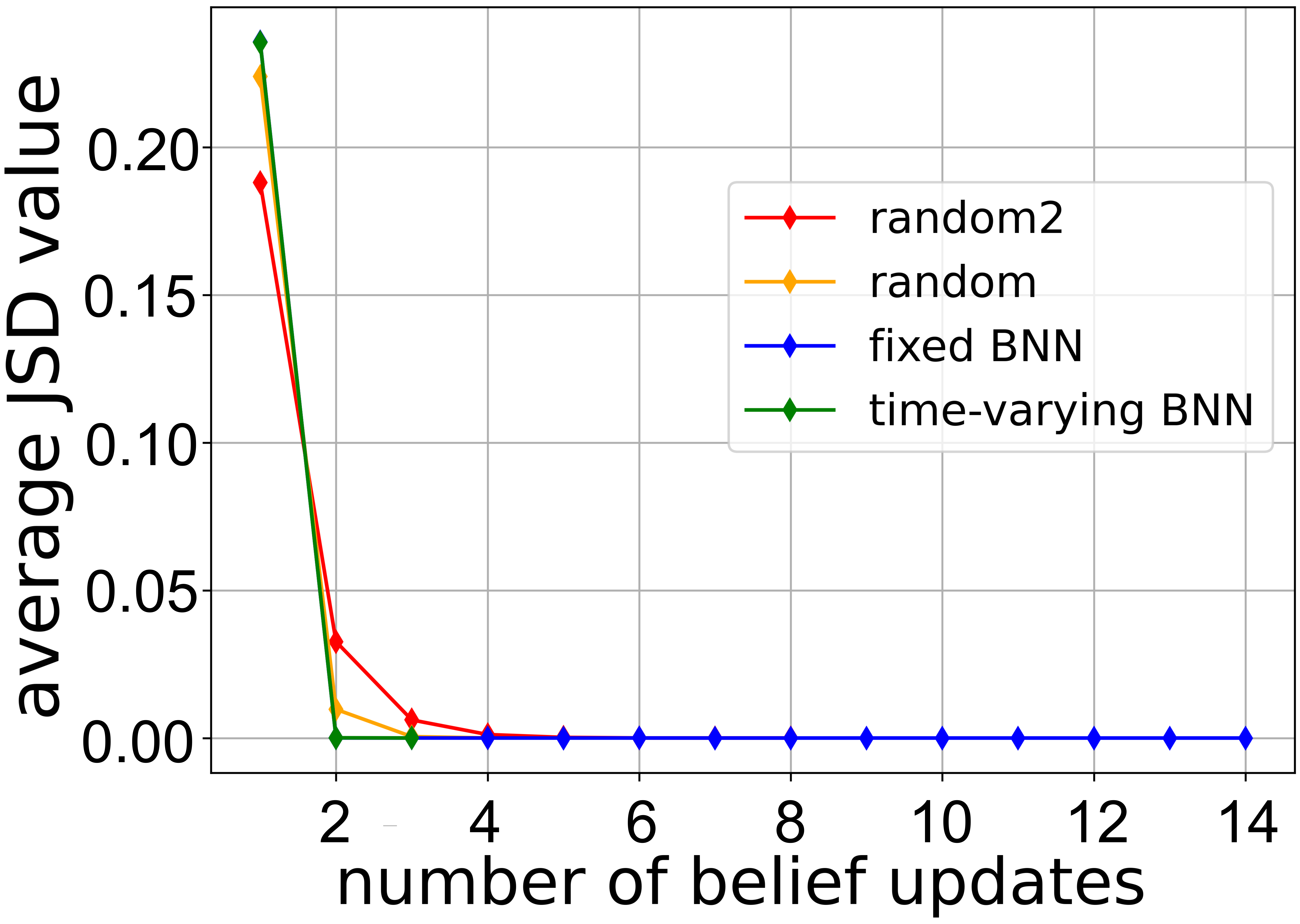}  
  \caption{\textcolor{blue}{
Average value of the JSD that led to the $k^{th}$ update.}}
  \label{fig:belief2_jsd}
\end{subfigure}
\caption{\textcolor{blue}{
(a): Attained rewards of 5 independent runs for every 100 training steps in 7 test indoor layouts for task $\phi_2$ using TvBNN: The darker region is bounded by $25^{th}$ and $75^{th}$ percentiles, while the lighter region is bounded by $10^{th}$ and $90^{th}$. The solid line represents the median.
(b)-(d): Progression of belief update during training for task $\phi_2$.}}
\label{fig:rewards and belief floorplan 2}
\end{figure}

\begin{table}[h!]
 \caption{Convergence results of the proposed algorithm under different observation models in 5 independent runs for 7 test indoor layouts in the second HouseExpo floorplan experiment. }
\centering
\begin{tabular}{r c c c c c} 
 \hline 
 Settings & TvBNN & FiBNN & Random & Random2 \\ 
 \hline\hline
Q1 & 255,500 & 263,100 & 341,100 & 231,100 \\ 
 \hline
Q2 & 146,700 & 141,800 & 137,100 & 122,200 \\
 \hline
Q3 & 81,500 & 80,000 & 74,200 & 91,100 \\
 \hline
 BU & 1.57 & 2.85 & 3.45 & 6.22\\	
 \hline
\end{tabular}
\label{table:comparison2}
\end{table}


 \textcolor{blue}{
\subsection{Office world} \label{sec:exps office world}
}

 \textcolor{blue}{
We next test the proposed algorithm in the office-world environment shown in Fig. \ref{fig:office_env}. 
The environment is a $12\times9$ grid world with four atomic propositions $\mathcal{AP} = \{a,b,c,d\}$ corresponding to coffee $a$, mail $b$, obstacle $c$, and office $d$. The task assigned to the agent is to get coffee, then mail, and eventually arrive at the office without encountering the obstacle. The reward machine that encodes the aforementioned task is given in Figure \ref{fig:office_RM}. In the experiments, an episode terminates when the agent reaches the accepting state or it exceeds the maximum number of 2000 steps per episode. The maximum number of training steps is 1,500,000. We set the Jensen-Shannon divergence threshold $\gamma_d$  to $\gamma_d = 10^{-5}$ and Algorithm \ref{algo:QRM_mod} selects a random action with probability $\varepsilon_a = 0.3$. 
}

\begin{figure}
\begin{subfigure}{.5\textwidth}
  \centering
  \includegraphics[width=\linewidth]{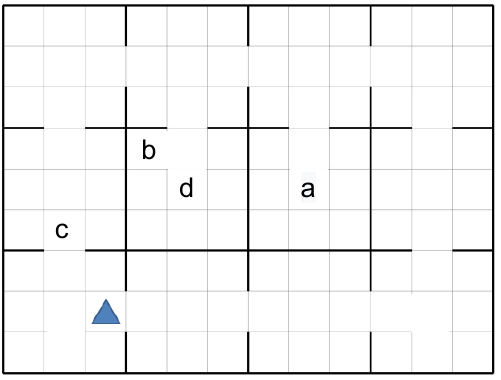}
  \caption{\textcolor{blue}{
Office world with four atomic propositions: coffee $a$, mail $b$, obstacle $c$, and office $d$. The blue triangle indicates the initial state of an episode.}}
  \label{fig:office_env}
\end{subfigure}
\begin{subfigure}{.5\textwidth}
\centering
\begin{tikzpicture}[node distance=1.7cm]
   \node[state,initial] (q_0)   {$v_0$};
   \node[state] (q_1) [right=of q_0] {$v_1$};
   \node[state] (q_2) [below=of q_0] {$v_2$};
   \node[state] (q_3) [right=of q_1] {$v_3$};
   \node[state, accepting] (q_4) [below=of q_3] {$v_4$};
    \path[->]
    (q_0) edge [loop above]       node [swap] {$(\lnot a \land \lnot c,  0)$} (q_0)
          edge              node        {$(a,                      0)$} (q_1)
          edge              node        {$(c,                      0)$} (q_2)
          
    (q_1) edge [loop above]       node [swap] {$(\lnot b \land \lnot c,  0)$} (q_1)
          edge              node  {$(b,                            0)$} (q_3)
          edge              node  {$(c,                            0)$} (q_2)
          
    (q_2) edge [loop below]       node [swap] {$(\top,    0)$} (q_2)

    (q_3) edge [loop above]       node [swap] {$(\lnot c \land \lnot d,  0)$} (q_3)
          edge                    node        {$(d,                      0)$} (q_4)
          edge [bend left]        node [above]       {$(c \land \lnot d,        0)$} (q_2);
\end{tikzpicture}
\caption{
 \textcolor{blue}{
   Reward machine for the office-world environment. The task is to fetch coffee $a$, then mail $b$, and eventually arrive at the office $d$ without encountering the obstacle $c$.}
}
\label{fig:office_RM}
\end{subfigure}
\caption{\textcolor{blue}{
Office-world environment}}
\label{fig:office}
\end{figure}

\textcolor{blue}{
We consider the following experimental settings, where we compare different observation models as well as with standard learning-algorithms of the related literature. In what follows, we use the acronym JPL (Joint Perception and Learning) for the proposed algorithm.
}

\textcolor{blue}{
\begin{enumerate}
    \item True Observation model (JPL-true): The proposed algorithm uses an accurate and unambiguous observation model that gives the ground-truth probability of observing an atomic proposition at a state, i.e., $\mathcal{O}(s_1,s_2,p,\mathsf{True})= 1$  and $\mathcal{O}(s_1,s_2,p,\mathsf{False})= 0$. 
        \item False Observation model (JPL-false): The proposed algorithm uses an inaccurate and unambiguous observation model that gives false probabilities of observing an atomic proposition at a state, i.e., $\mathcal{O}(s_1,s_2,p,\mathsf{True})= 0$  and $\mathcal{O}(s_1,s_2,p,\mathsf{False})= 1$. 
    \item Random Observation model (JPL-random): The algorithm uses an observation model whose observation probabilities are uniformly sampled from the interval $[0.1,0.9]$.
    \item Random Observation model-2 (JPL-random2): The algorithm uses an observation model whose observation probabilities are uniformly sampled from the interval $[0.4,0.6]$.
    \item Q-learning: We test the traditional Q-learning algorithm \cite{watkins1992q} with a randomly initialized labelling-function belief, without updating it or inferring a reward machine. Our implementation extends the original algorithm by providing a vector of binary values indicating whether each atomic proposition is observed by the agent (1) or not (0) according to its belief function.
    \item DDQN: We test the double deep Q-learning (DDQN) algorithm \cite{van2016deep}, with a randomly initialized labelling-function belief, without updating it or inferring a reward machine. Our implementation augments the state space with the sequence of the last 200 labels observed according to the agent's belief function. We use a neural network of 6 hidden layers, each consisting of 64 neurons.
\end{enumerate}
}

\textcolor{blue}{
In all aforementioned cases, we simulate the observations to be consistent with the respective observation model, i.e., observe that $\mathcal{Z}(s_1,s_2,p) = \mathsf{True}$ and $\mathcal{Z}(s_1,s_2,p) = \mathsf{False}$ with probabilities $\mathcal{O}(s_1,s_2,p,\mathsf{True})$ and $1-\mathcal{O}(s_1,s_2,p,\mathsf{True})$, respectively, when $s_2 \models p$, and with probabilities $\mathcal{O}(s_1,s_2,p,\mathsf{False})$ and $1-\mathcal{O}(s_1,s_2,p,\mathsf{False})$, respectively, when $s_2 \not \models p$.}

\textcolor{blue}{
We run the aforementioned cases in the office environment for 10 independent runs, each corresponding to a different random seed.
In each one of the aforementioned runs, we perform an evaluation episode every 100 training steps and register the obtained rewards.
The results are shown in Fig. \ref{fig:rewards and belief office world} and Table \ref{table:comparison_office}. In particular, Fig. \ref{fig:office_rewards_random}
shows the progression of rewards collected in test episodes for the random observation model (JPL-random) as well as for the Q-learning and DDQN algorithms. 
One concludes that the proposed algorithm successfully learns the optimal policy, whereas 
Q-learning and DDQN fail to learn a policy that accomplishes the task; such a result demonstrates that simply augmenting the state space with observed labels is not sufficient neither with a Q-table nor with a Q-function approximated by a neural network. 
We omit the reward-progression plots for the other cases of observation models, since they yield similar convergence results with the first random observation model. 
Table \ref{table:comparison_office} shows the training steps required for convergence for the $25^{th}$ (Q1), $50^{th}$ (Q2), and $75^{th}$ (Q3) percentiles of the attained rewards for the aforementioned cases, as in Sec. \ref{sec:exps floorplan}. 
 The table also depicts the average number of belief updates (BU) of $\hat{\mathcal{L}}_h$.}
 
\textcolor{blue}{
Figures \ref{fig:office_belief_num_time}-\ref{fig:office_belief_jsd} further show how the information-processing mechanism of the proposed algorithm handles the belief updates. On average, JPL-true and JPL-false obtain a belief function that satisfies the JSD constraint, $\gamma_d=10^{-5}$, in one episode, hence Figs. \ref{fig:office_belief_num_time}, \ref{fig:office_belief_step}, and \ref{fig:office_belief_jsd} have only one data point for these algorithms; JPL-random2 satisfies the JSD threshold in 6 episodes, 3 more than what JPL-random, since the observation model is more ambiguous; i.e., it carries less information due to a narrow range of values used to draw probabilities ($[0.4,0.6]$ vs $[0.1,0.9]$).
}

\textcolor{blue}{
\begin{figure}
\begin{subfigure}{.55\textwidth}
  \centering
  \includegraphics[width=\linewidth]{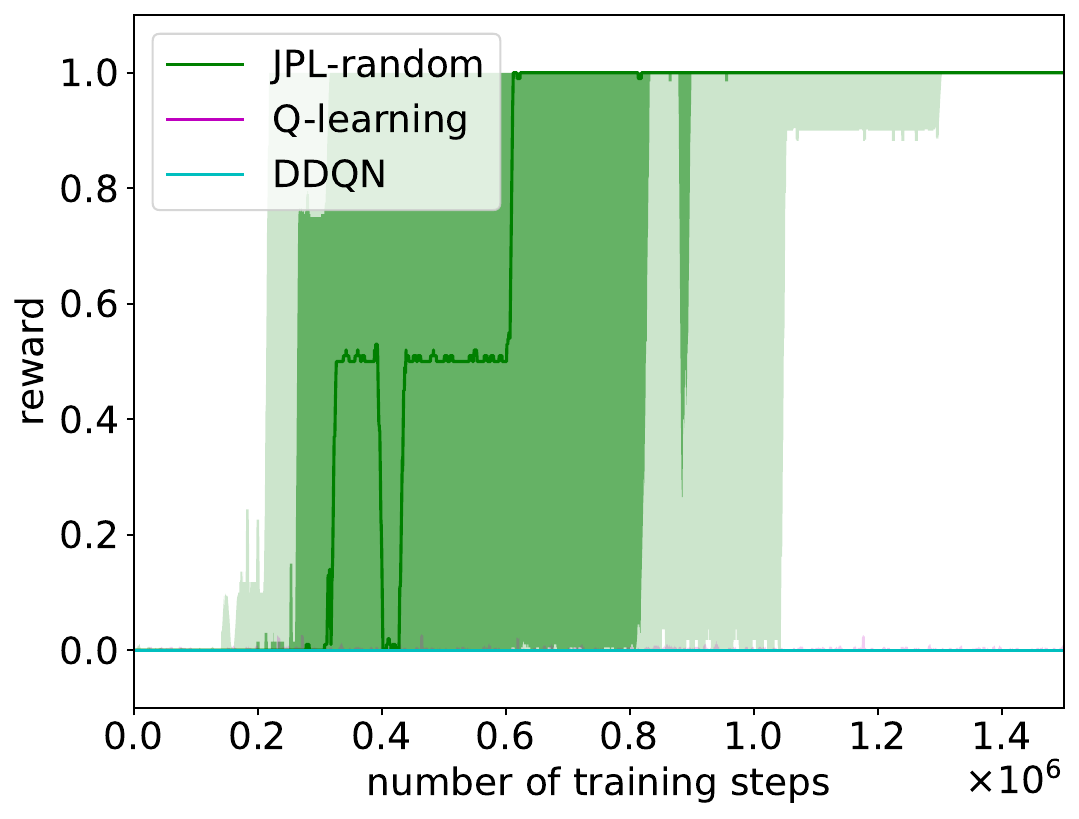}  
  \caption{\textcolor{blue}{
Progression of reward using (i) the proposed algorithm with a random observation model, (ii) a Q-learning algorithm, and a (iii) DDQN algorithm.}}
  \label{fig:office_rewards_random}
\end{subfigure}
\begin{subfigure}{.5\textwidth}
  \centering
  \includegraphics[width=\linewidth]{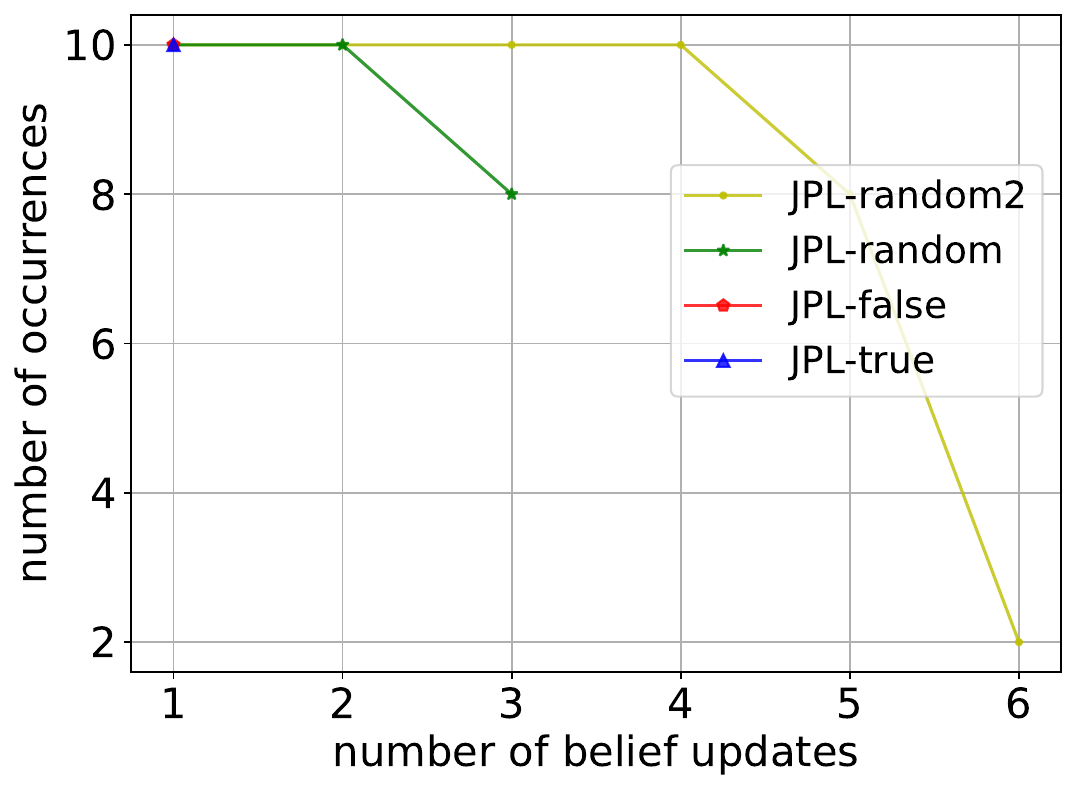}  
  \caption{\textcolor{blue}{
Average number of runs for the $k^{th}$ belief update.}}
  \label{fig:office_belief_num_time}
\end{subfigure}
\begin{subfigure}{.5\textwidth}
  \centering
  \includegraphics[width=\linewidth]{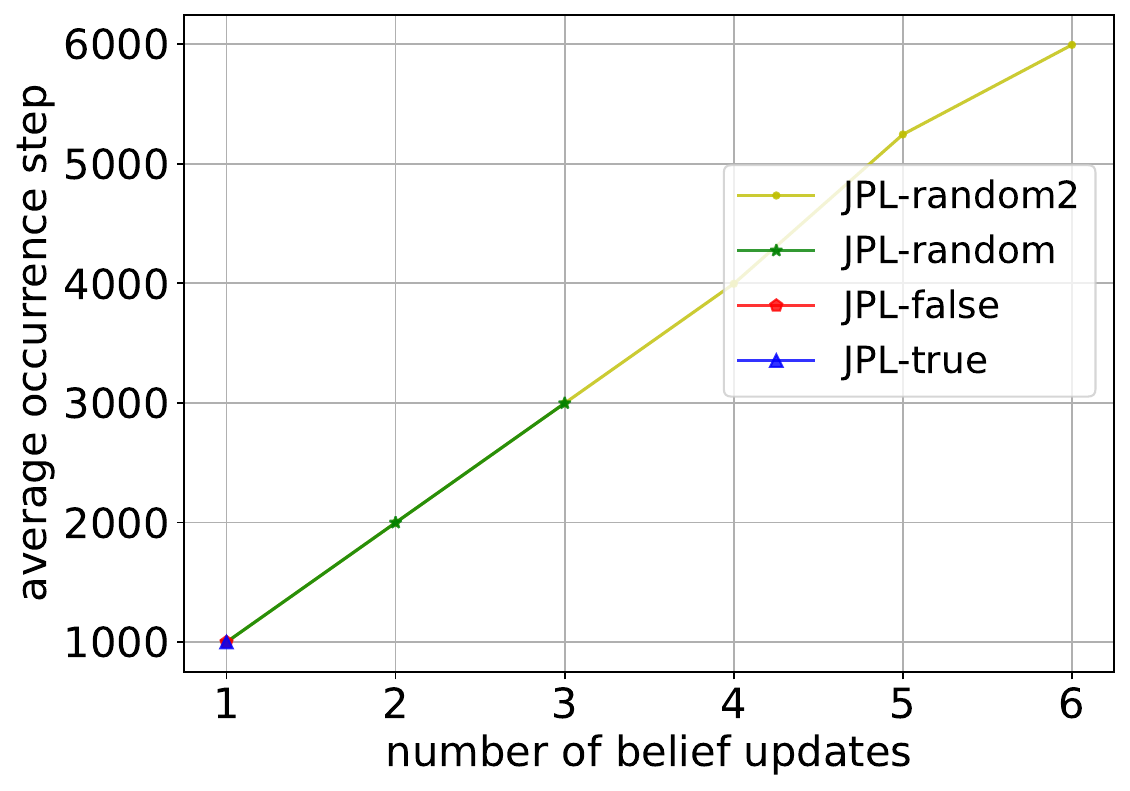}  
  \caption{\textcolor{blue}{
Average number of runs for the $k^{th}$ belief update.}}
  \label{fig:office_belief_step}
\end{subfigure}
\begin{subfigure}{.5\textwidth}
  \centering
  \includegraphics[width=\linewidth]{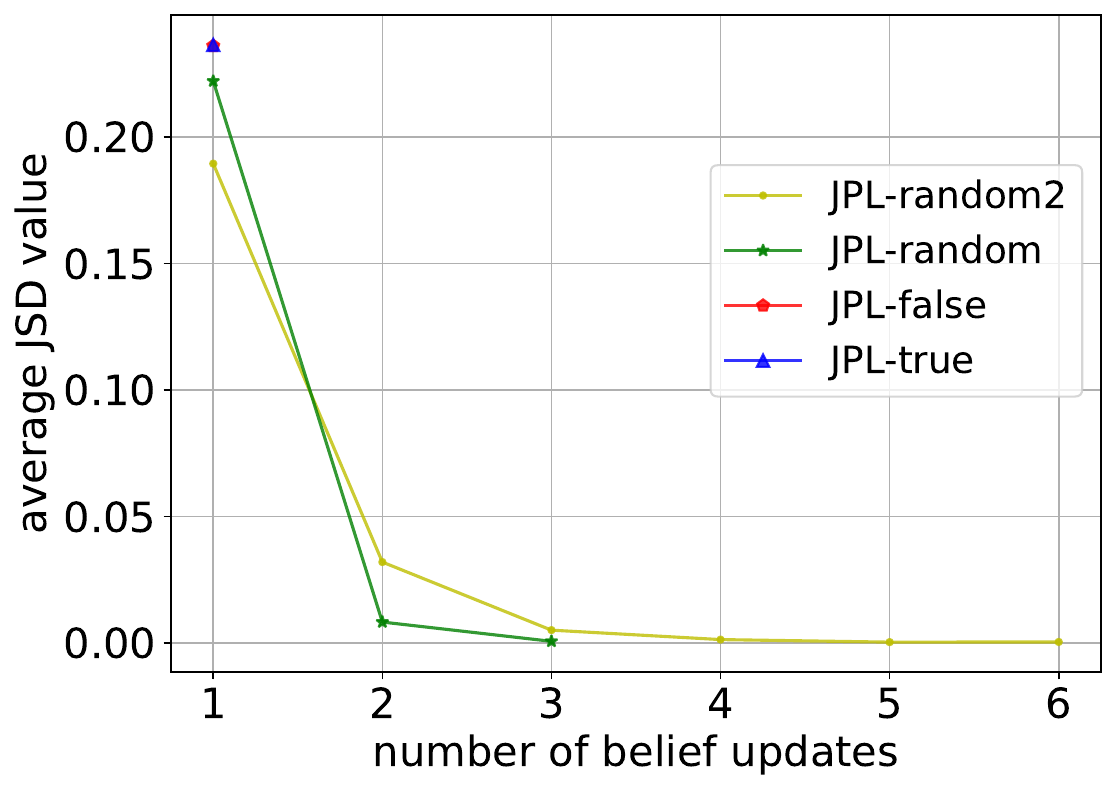}  
  \caption{\textcolor{blue}{
Average value of the JSD that led to the $k^{th}$ update.}}
  \label{fig:office_belief_jsd}
\end{subfigure}
\caption{\textcolor{blue}{
(a): Attained rewards of 10 independent runs for every 100 training steps for the office-world task using the first random observation model: The darker region is bounded by $25^{th}$ and $75^{th}$ percentiles, while the lighter region is bounded by $10^{th}$ and $90^{th}$. The solid line represents the median.
(b)-(d): Progression of belief update during training for the office-world task.}}
\label{fig:rewards and belief office world}
\end{figure}
}

\begin{table}[h!]
\caption{Convergence results of the proposed algorithm under different observation models in 10 independent runs in the office-world environment.}
\centering
\begin{tabular}{r c c c c c} 
 \hline 
 Setting & JPL-true & JPL-false & JPL-random & JPL-random2 \\ 
 \hline\hline
   Q1 & 1,029,000 & 1,019,800 & 958,500 & 1,003,700 \\ 
 \hline
   Q2 & 776,000 & 776,000 & 818,300 & 737,600 \\
 \hline
 Q3 & 412,100 & 423,300 & 315,600 & 412,100 \\
 \hline
 BU & 1 & 1 & 2.8 & 5\\
 \hline
\end{tabular}
 \label{table:comparison_office}
\end{table}

\textcolor{blue}{
\subsection{Craft-world Environment} \label{sec:exps Craftworld}
}

\textcolor{blue}{
Finally, we test the proposed algorithm in the craft-world environment shown in Fig. \ref{fig:craft_env}. 
The environment is a $12\times9$ grid world with five atomic propositions $\mathcal{AP} = \{w,i,t,h,f\}$ corresponding to wood $w$, iron $i$, toolshed $t$, workbench $h$, and factory $f$. The task assigned to the agent is to build stairs by following the instructions below:
\begin{enumerate}
    \item Get wood $w$ (toolshed $t$ cannot be used before getting wood),
    \item Use toolshed $t$ (using workbench $h$ before toolshed resets again),
    \item Use workbench $h$,
    \item Get iron $i$ (using factory $f$ before collecting iron resets task), and
    \item Use factory $f$.
\end{enumerate} 
The reward machine for such a task is shown in Fig. \ref{fig:craft_RM}.
In the experiments, an episode terminates when the agent reaches the accepting state or it exceeds the maximum number of 400 steps per episode. The maximum number of training steps is 2,000,000. We set the Jensen-Shannon divergence threshold $\gamma_d$  to $\gamma_d = 10^{-5}$ and Algorithm \ref{algo:QRM_mod} selects a random action with probability $\varepsilon_a = 0.3$. 
We evaluate the same algorithmic observation-model cases as in the office-world case of Sec. \ref{sec:exps office world}. 
}

\begin{figure}
\begin{subfigure}{.5\textwidth}
  \centering
  \includegraphics[width=\linewidth]{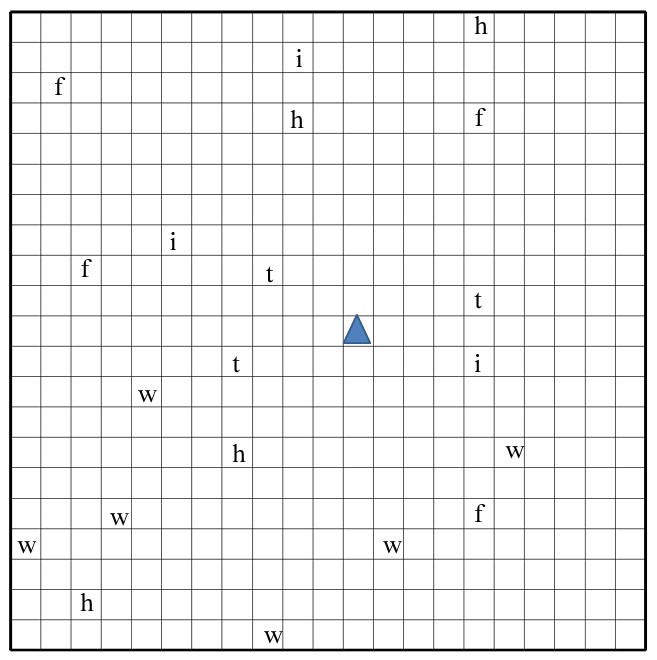}
  \caption{\textcolor{blue}{Craft world with five atomic propositions: wood $w$, iron $i$, toolshed $t$, workbench $h$, and factory $f$. The blue triangle indicates the initial state of an episode.}}
  \label{fig:craft_env}
\end{subfigure}
\begin{subfigure}{.5\textwidth}
  \centering
  \begin{tikzpicture}[node distance=1.7cm]
   \node[state,initial] (q_0)   {$v_0$};
   \node[state] (q_1) [right=of q_0] {$v_1$};
   \node[state] (q_2) [right=of q_1] {$v_2$};
   \node[state] (q_3) [below=of q_2] {$v_3$};
   \node[state] (q_6) [below=of q_0] {$v_6$};
  \node[state,accepting] (q_5) [below=of q_6] {$v_5$};
  \node[state] (q_4) [right=of q_5] {$v_4$};
    \path[->]
    (q_0) edge [loop above]       node [swap] {$(\lnot w \land \lnot t,  0)$} (q_0)
          edge [bend right] node [below]       {$(w \land \lnot t,        0)$} (q_1)
          edge              node [left]       {$(t,                      0)$} (q_6)
          
    (q_1) edge [loop above]       node [swap] {$(\lnot h \land \lnot t,  0)$} (q_1)
          edge              node  {$(t \land \lnot h,              0)$} (q_2)
          edge [bend right] node [above] {$(h,                            0)$} (q_0)
          
    (q_2) edge [loop above]       node [swap] {$(\lnot h, 0)$} (q_2)
          edge              node        {$(t,       0)$} (q_3)
          
    (q_3) edge [loop below]       node [swap]  {$(\lnot i \land \lnot f,  0)$} (q_3)
          edge                    node [left]        {$(I \land \lnot f,        0)$} (q_4)
          edge [bend left]        node [above] {$(f,                      0)$} (q_0)

    (q_4) edge [loop right]       node [swap] {$(\lnot f, 1)$} (q_4)
          edge                    node        {$(f,       0)$} (q_5)

    (q_6) edge [loop right]       node [swap] {$(\top,    0)$} (q_6);
\end{tikzpicture}
\caption{\textcolor{blue}{
    Reward machine for the craft-world environment. The task is to build stairs by getting wood $w$ (toolshed $t$ cannot be used before getting wood), then using toolshed $t$ (using workbench $h$ before toolshed resets) and workbench $h$ consecutively. Finally, the agent should obtain iron $i$ (using factory $f$ before collecting iron resets task) and process all items in the factory $f$.}
}
\label{fig:craft_RM}
\end{subfigure}
\caption{\textcolor{blue}{
Craft-world environment.}}
\label{fig:craft}
\end{figure}

\textcolor{blue}{The results for 10 independent runs, each corresponding to a different random seed, are shown in Fig. \ref{fig:rewards and belief craft world}.
Figure \ref{fig:craft_rewards_random} show the progression of rewards obtained in test episodes for the first random observation model of the proposed algorithm as well as the Q-learning and DDQN algorithms. We omit the reward-progression plots for other observation models since they achieve similar speed of convergence.
We observe that, since the craft-world scenario involves a more complicated task and larger state space than the office-world scenario, the proposed algorithm requires more than 1.2 million training steps, in median values, to converge. Additionally, as in the office-world case, the Q-learning and DDQN algorithms fail to learn the policy that achieves the task. 
Furthermore, Table  \ref{table:comparison_craft} shows the training steps required for convergence for the $25^{th}$ (Q1), $50^{th}$ (Q2), and $75^{th}$ (Q3) percentiles of the attained rewards for the aforementioned cases, as in the previous cases. 
 The table also depicts the average number of belief updates (BU) of $\hat{\mathcal{L}}_h$.
 Finally, Fig. \ref{fig:craft_belief_num_time}-\ref{fig:craft_belief_jsd} show how the information-processing mechanism of the proposed algorithm handles the belief updates, where we observe a behaviour similar to the previous cases in Sec. \ref{sec:exps floorplan} and \ref{sec:exps office world}.
 }
 

\begin{figure}
\begin{subfigure}{.55\textwidth}
  \centering
  \includegraphics[width=\linewidth]{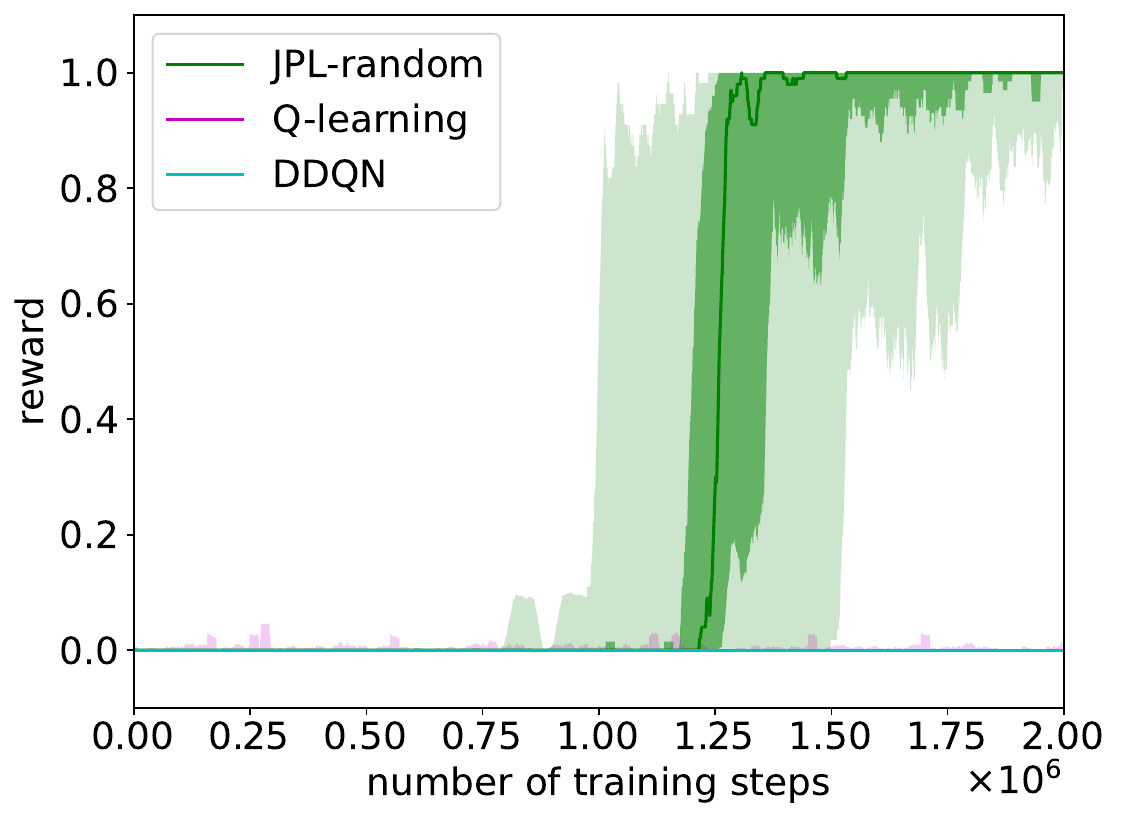}  
  \caption{\textcolor{blue}{Progression of reward using (i) the proposed algorithm with a random observation model, (ii) a Q-learning algorithm, and a (iii) DDQN algorithm.}}
  \label{fig:craft_rewards_random}
\end{subfigure}
\begin{subfigure}{.5\textwidth}
  \centering
  \includegraphics[width=\linewidth]{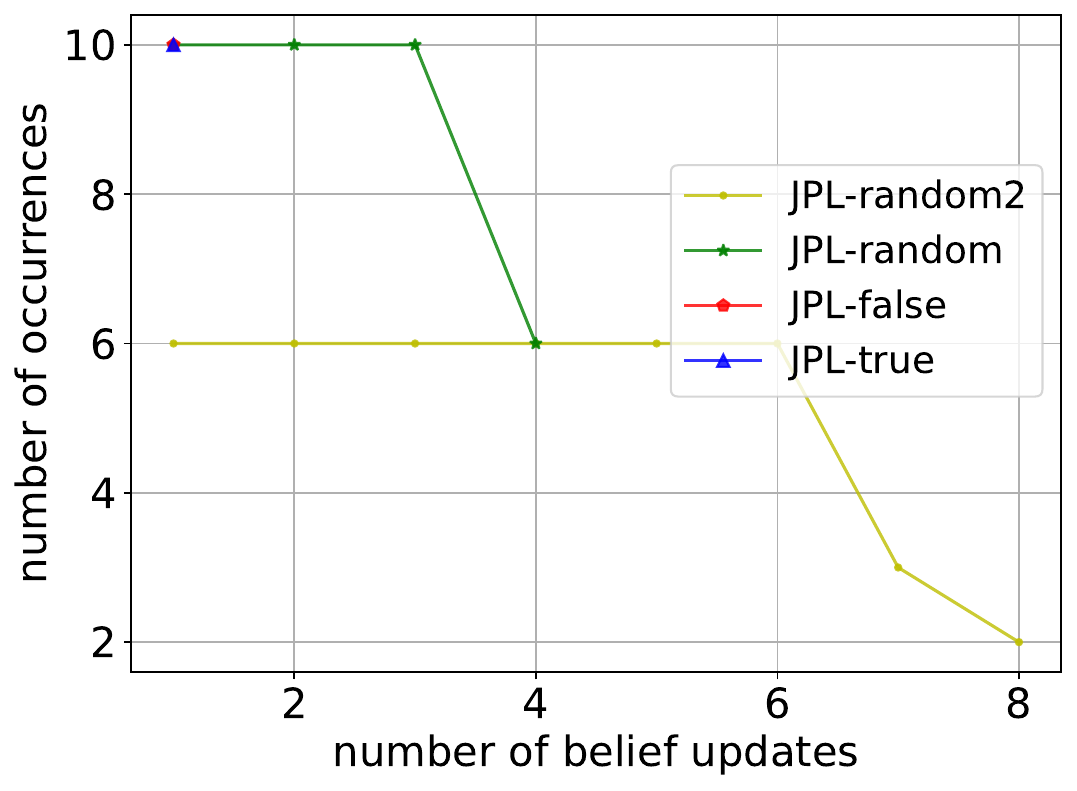}  
  \caption{\textcolor{blue}{
Average number of runs for the $k^{th}$ belief update.}}
  \label{fig:craft_belief_num_time}
\end{subfigure}
\begin{subfigure}{.5\textwidth}
  \centering
  \includegraphics[width=\linewidth]{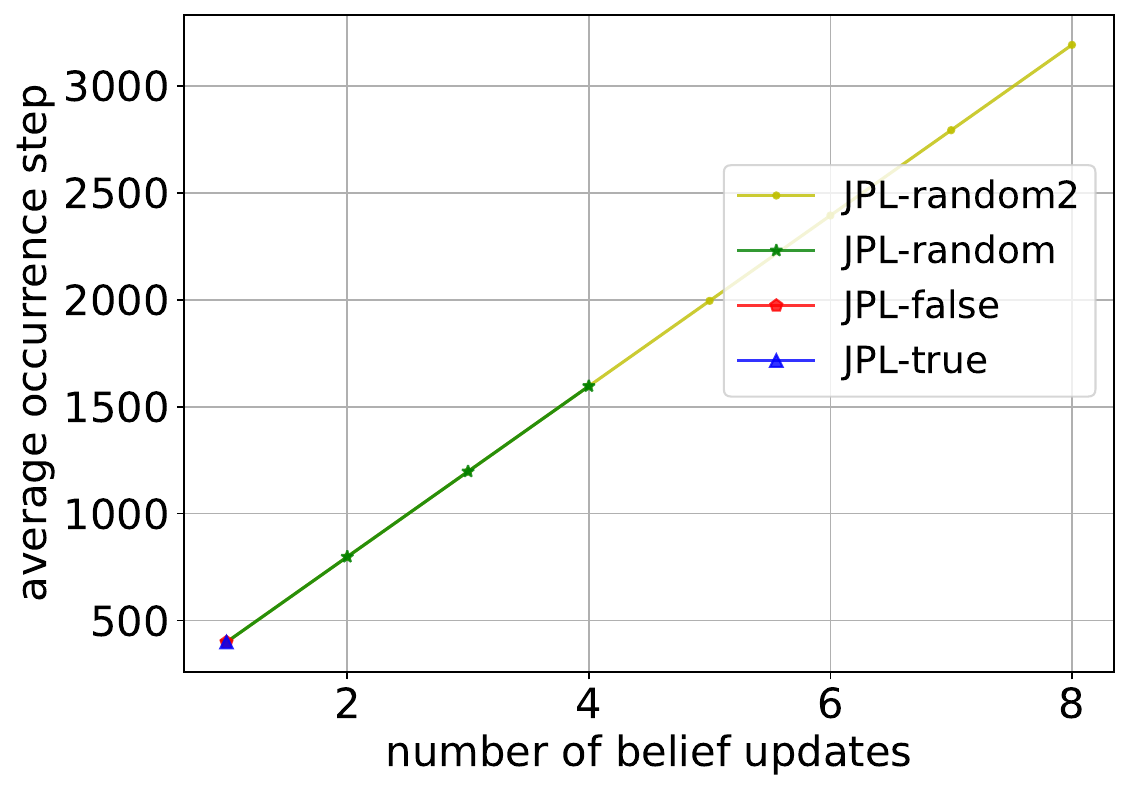}  
  \caption{\textcolor{blue}{
Average number of runs for the $k^{th}$ belief update.}}
  \label{fig:craft_belief_step}
\end{subfigure}
\begin{subfigure}{.5\textwidth}
  \centering
  \includegraphics[width=\linewidth]{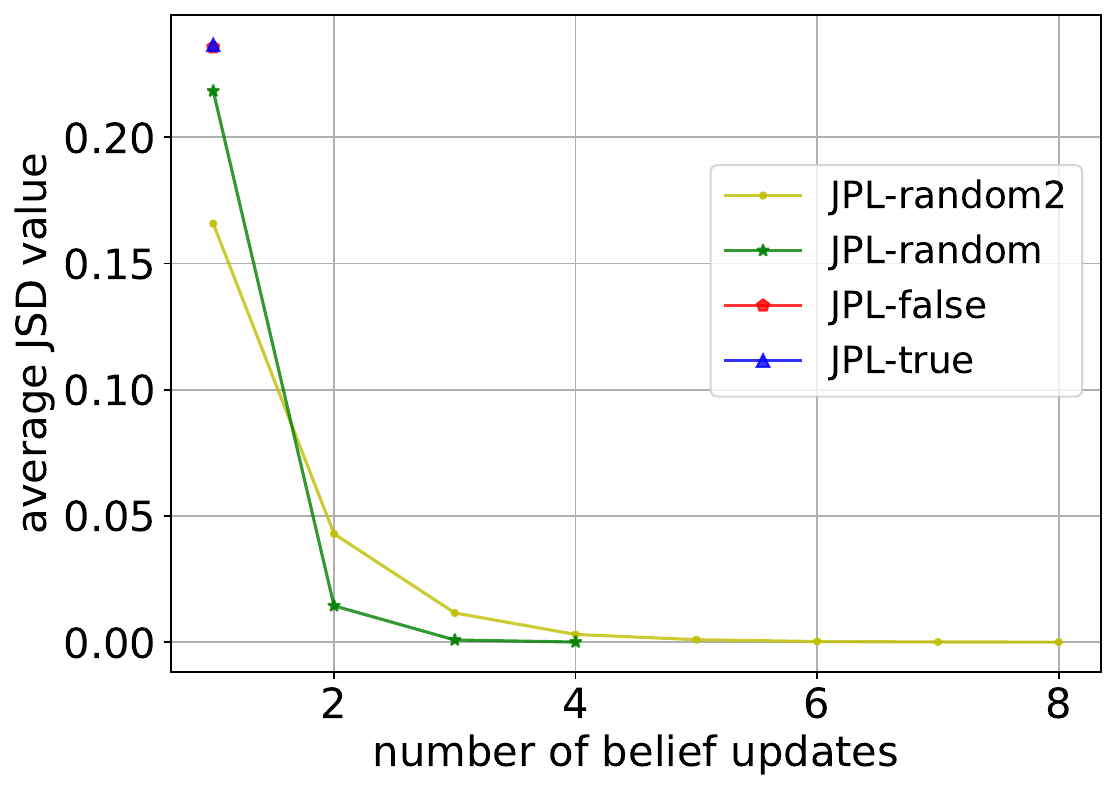}  
  \caption{\textcolor{blue}{
Average value of the JSD that led to the $k^{th}$ update.}}
  \label{fig:craft_belief_jsd}
\end{subfigure}
\caption{\textcolor{blue}{(a): Attained rewards of 10 independent runs for every 100 training steps for the craft-world task using the first random observation model: The darker region is bounded by $25^{th}$ and $75^{th}$ percentiles, while the lighter region is bounded by $10^{th}$ and $90^{th}$. The solid line represents the median.
(b)-(d): Progression of belief update during training for the office-world task.}}
\label{fig:rewards and belief craft world}
\end{figure}

\begin{table}[h!]
\caption{Convergence results of the proposed algorithm under different observation models in 10 independent runs in the craft-world environment.}
\centering
\begin{tabular}{r c c c c c} 
 \hline 
 Setting & JPL-true & JPL-false & JPL-random & JPL-random2 \\ 
 \hline\hline
  Q1 & 1,937,600 & 1,907,600 & 1,950,800 & 1,985,800  \\ 
 \hline
  Q2 & 1,435,600 & 1,242,800 & 1,531,200 & 1,825,600 \\
 \hline
  Q3 & 1,196,400 & 1,129,600 & 1,256,000 & 1,484,400 \\
 \hline
 BU & 1 & 1 & 3.6 & 6.83\\
 \hline
 \label{table:comparison_craft}
\end{tabular}
\end{table}


\section{Conclusion and Discussion}

We develop a reinforcement algorithm subject to reward-machine-encoded tasks and perceptual limitations. The algorithm holds probabilistic beliefs over the truth value of the atomic propositions and uses a hypothesis reward machine to estimate the reward machine that encodes the task to be accomplished. Both the beliefs and the hypothesis reward machine are updated by using the agent's sensor measurements and the obtained rewards from the agent's trajectories. The algorithm uses the aforementioned beliefs and the hypothesis reward machine in a q-learning procedure to 
learn an optimal policy that accomplishes this task.  

Currently, the theoretical guarantees, provided by Theorem \ref{th:main}, presuppose that the algorithm 
obtains a good enough estimate of 
 the atomic propositions' value, as encoded in the ground truth labelling function $L_G$, i.e., $\hat{L}(\hat{\mathcal{L}}_h,\cdot) = L_G(\cdot)$.  The definition of $\hat{L}(\mathcal{\hat{L}}_h,\cdot) $ in \eqref{eq:L_hat}, based on the most probable atomic propositions $p$ for state $s$, i.e., $\hat{\mathcal{L}}_h(s,p) \geq 0.5$, mildens such an assumption, which is further verified to hold by the experimental results. \textcolor{blue}{ Additionally, Sec. \ref{sec:L estimate} illustrates that unambiguous observation models, i.e., when then probability of observing an atomic proposition when it is true is different than then probability of observing it when it is false, lead to accurate estimation of the ground-truth labelling function when the obtained observations are consistent with these observation models. Such an estimation is achieved even by the time-varying observation models used in Sec. \ref{sec:exps}, illustrating the robustness of the proposed approach to the choice of observation model.  
Along the same lines, the experimental results show that the proposed algorithm, used with a large variety of unambiguous observation models, achieves estimation of the ground-truth labelling function in much fewer steps than the ones needed for inferring the reward machine and learning an optimal policy. Therefore, we conclude that the selection of observation model does not practically 
affect the convergence of the algorithm to the optimal policy.  
Larger sets of atomic propositions would naturally lead to more complex reward machines and a proportional increase in convergence is expected both in the labelling-function estimation as well as the reward-machine inference and learning of the optimal policy.}

\textcolor{blue}{
We further note that larger reward machines, possibly describing more complex tasks yield higher computational times. In our implementation, inference of reward machines with more than 7 states resulted in unreasonably high computation times. As mentioned in Sec. \ref{sec:inference}, modification of the inference algorithm would be able to alleviate such issues. We argue, however, that reward machines up to 7 states can describe significantly complicated tasks,  such as Minecraft  \cite{hasanbeig2021deepsynth} or Atari tasks \cite{jin2022creativity}.
 In the future, we will consider uncertain observation models that are not consistent with the obtained observations 
 and cases where the ground-truth values of the atomic propositions change with time.}

\section{Acknowledgements}

This work was supported in part by ONR N00014-22-1-2254 and NSF 1652113.

\bibliography{mybibfile}

\begin{thebibliography}{10}
\expandafter\ifx\csname url\endcsname\relax
  \def\url#1{\texttt{#1}}\fi
\expandafter\ifx\csname urlprefix\endcsname\relax\def\urlprefix{URL }\fi
\expandafter\ifx\csname href\endcsname\relax
  \def\href#1#2{#2} \def\path#1{#1}\fi

\bibitem{taylor2007cross}
M.~E. Taylor, P.~Stone, Cross-domain transfer for reinforcement learning,
  Proceedings of the 24th international conference on Machine learning (2007)
  879--886.

\bibitem{singh1992reinforcement}
S.~P. Singh, Reinforcement learning with a hierarchy of abstract models, in:
  Proceedings of the National Conference on Artificial Intelligence, no.~10,
  Citeseer, 1992, p. 202.

\bibitem{kulkarni2016hierarchical}
T.~D. Kulkarni, K.~Narasimhan, A.~Saeedi, J.~Tenenbaum, Hierarchical deep
  reinforcement learning: Integrating temporal abstraction and intrinsic
  motivation, Advances in neural information processing systems 29 (2016)
  3675--3683.

\bibitem{abel2018state}
D.~Abel, D.~Arumugam, L.~Lehnert, M.~Littman, State abstractions for lifelong
  reinforcement learning, International Conference on Machine Learning (2018)
  10--19.

\bibitem{li2017reinforcement}
X.~Li, C.-I. Vasile, C.~Belta, Reinforcement learning with temporal logic
  rewards, IEEE/RSJ International Conference on Intelligent Robots and Systems
  (IROS) (2017) 3834--3839.

\bibitem{aksaray2016q}
D.~Aksaray, A.~Jones, Z.~Kong, M.~Schwager, C.~Belta, Q-learning for robust
  satisfaction of signal temporal logic specifications, IEEE 55th Conference on
  Decision and Control (CDC) (2016) 6565--6570.

\bibitem{icarte2018using}
R.~T. Icarte, T.~Klassen, R.~Valenzano, S.~McIlraith, Using reward machines for
  high-level task specification and decomposition in reinforcement learning,
  International Conference on Machine Learning (2018) 2107--2116.

\bibitem{xu2020joint}
Z.~Xu, I.~Gavran, Y.~Ahmad, R.~Majumdar, D.~Neider, U.~Topcu, B.~Wu, Joint
  inference of reward machines and policies for reinforcement learning,
  Proceedings of the International Conference on Automated Planning and
  Scheduling 30 (2020) 590--598.

\bibitem{konidaris2019necessity}
G.~Konidaris, On the necessity of abstraction, Current opinion in behavioral
  sciences 29 (2019) 1--7.

\bibitem{andreas2017modular}
J.~Andreas, D.~Klein, S.~Levine, Modular multitask reinforcement learning with
  policy sketches, International Conference on Machine Learning (2017)
  166--175.

\bibitem{wen2015correct}
M.~Wen, R.~Ehlers, U.~Topcu, Correct-by-synthesis reinforcement learning with
  temporal logic constraints, IEEE/RSJ International Conference on Intelligent
  Robots and Systems (IROS) (2015) 4983--4990.

\bibitem{sadigh2014learning}
D.~Sadigh, E.~S. Kim, S.~Coogan, S.~S. Sastry, S.~A. Seshia, A learning based
  approach to control synthesis of markov decision processes for linear
  temporal logic specifications, IEEE Conference on Decision and Control (2014)
  1091--1096.

\bibitem{hasanbeig2018logically}
M.~Hasanbeig, A.~Abate, D.~Kroening, Logically-constrained reinforcement
  learning, arXiv preprint arXiv:1801.08099.

\bibitem{toro2018teaching}
R.~Toro~Icarte, T.~Q. Klassen, R.~Valenzano, S.~A. McIlraith, Teaching multiple
  tasks to an rl agent using ltl, Proceedings of the 17th International
  Conference on Autonomous Agents and MultiAgent Systems (2018) 452--461.

\bibitem{wang2021reinforcement}
Y.~Wang, A.~K. Bozkurt, M.~Pajic, Reinforcement learning with temporal logic
  constraints for partially-observable markov decision processes, arXiv
  preprint arXiv:2104.01612.

\bibitem{bozkurt2020control}
A.~K. Bozkurt, Y.~Wang, M.~M. Zavlanos, M.~Pajic, Control synthesis from linear
  temporal logic specifications using model-free reinforcement learning, IEEE
  International Conference on Robotics and Automation (ICRA) (2020)
  10349--10355.

\bibitem{hahn2019omega}
E.~M. Hahn, M.~Perez, S.~Schewe, F.~Somenzi, A.~Trivedi, D.~Wojtczak,
  Omega-regular objectives in model-free reinforcement learning, International
  Conference on Tools and Algorithms for the Construction and Analysis of
  Systems (2019) 395--412.

\bibitem{carr2020verifiable}
S.~Carr, N.~Jansen, U.~Topcu, Verifiable rnn-based policies for pomdps under
  temporal logic constraints, arXiv preprint arXiv:2002.05615.

\bibitem{ghasemi2020task}
M.~Ghasemi, E.~Bulgur, U.~Topcu, Task-oriented active perception and planning
  in environments with partially known semantics, International Conference on
  Machine Learning (2020) 3484--3493.

\bibitem{da2019active}
R.~R. da~Silva, V.~Kurtz, H.~Lin, Active perception and control from temporal
  logic specifications, IEEE Control Systems Letters 3~(4) (2019) 1068--1073.

\bibitem{agha2014firm}
A.-A. Agha-Mohammadi, S.~Chakravorty, N.~M. Amato, Firm: Sampling-based
  feedback motion-planning under motion uncertainty and imperfect measurements,
  The International Journal of Robotics Research 33~(2) (2014) 268--304.

\bibitem{guo2013revising}
M.~Guo, K.~H. Johansson, D.~V. Dimarogonas, Revising motion planning under
  linear temporal logic specifications in partially known workspaces, 2013 IEEE
  International Conference on Robotics and Automation (2013) 5025--5032.

\bibitem{lahijanian2016iterative}
M.~Lahijanian, M.~R. Maly, D.~Fried, L.~E. Kavraki, H.~Kress-Gazit, M.~Y.
  Vardi, Iterative temporal planning in uncertain environments with partial
  satisfaction guarantees, IEEE Transactions on Robotics 32~(3) (2016)
  583--599.

\bibitem{livingston2012backtracking}
S.~C. Livingston, R.~M. Murray, J.~W. Burdick, Backtracking temporal logic
  synthesis for uncertain environments, IEEE International Conference on
  Robotics and Automation (2012) 5163--5170.

\bibitem{montana2017sampling}
F.~J. Montana, J.~Liu, T.~J. Dodd, Sampling-based reactive motion planning with
  temporal logic constraints and imperfect state information, Critical Systems:
  Formal Methods and Automated Verification (2017) 134--149.

\bibitem{ayala2013temporal}
A.~M. Ayala, S.~B. Andersson, C.~Belta, Temporal logic motion planning in
  unknown environments, IEEE/RSJ International Conference on Intelligent Robots
  and Systems (2013) 5279--5284.

\bibitem{camacho2017non}
A.~Camacho, O.~Chen, S.~Sanner, S.~A. McIlraith, Non-markovian rewards
  expressed in ltl: guiding search via reward shaping, Tenth Annual Symposium
  on Combinatorial Search.

\bibitem{ramasubramanian2019secure}
B.~Ramasubramanian, A.~Clark, L.~Bushnell, R.~Poovendran, Secure control under
  partial observability with temporal logic constraints, American Control
  Conference (ACC) (2019) 1181--1188.

\bibitem{zhang2015learning}
X.~Zhang, B.~Wu, H.~Lin, Learning based supervisor synthesis of pomdp for pctl
  specifications, 54th IEEE Conference on Decision and Control (CDC) (2015)
  7470--7475.

\bibitem{furelos2020induction}
D.~Furelos-Blanco, M.~Law, A.~Russo, K.~Broda, A.~Jonsson, Induction of subgoal
  automata for reinforcement learning, in: Proceedings of the AAAI Conference
  on Artificial Intelligence, Vol.~34, 2020, pp. 3890--3897.

\bibitem{hasanbeig2021deepsynth}
M.~Hasanbeig, N.~Y. Jeppu, A.~Abate, T.~Melham, D.~Kroening, Deepsynth:
  Automata synthesis for automatic task segmentation in deep reinforcement
  learning, in: Proceedings of the AAAI Conference on Artificial Intelligence,
  Vol.~35, 2021, pp. 7647--7656.

\bibitem{abate2022learning}
A.~Abate, Y.~Almulla, J.~Fox, D.~Hyland, M.~Wooldridge, Learning task automata
  for reinforcement learning using hidden markov models, arXiv preprint
  arXiv:2208.11838.

\bibitem{toro2019learning}
R.~Toro~Icarte, E.~Waldie, T.~Klassen, R.~Valenzano, M.~Castro, S.~McIlraith,
  Learning reward machines for partially observable reinforcement learning,
  Advances in Neural Information Processing Systems 32 (2019) 15523--15534.

\bibitem{rens2020online}
G.~Rens, J.-F. Raskin, R.~Reynouad, G.~Marra, Online learning of non-markovian
  reward models, arXiv preprint arXiv:2009.12600.

\bibitem{dohmen2022inferring}
T.~Dohmen, N.~Topper, G.~Atia, A.~Beckus, A.~Trivedi, A.~Velasquez, Inferring
  probabilistic reward machines from non-markovian reward signals for
  reinforcement learning, in: Proceedings of the International Conference on
  Automated Planning and Scheduling, Vol.~32, 2022, pp. 574--582.

\bibitem{bellman1957markovian}
R.~Bellman, A markovian decision process, Journal of mathematics and mechanics
  (1957) 679--684.

\bibitem{puterman2014markov}
M.~L. Puterman, Markov decision processes: discrete stochastic dynamic
  programming, John Wiley \& Sons, 2014.

\bibitem{gold1978complexity}
E.~M. Gold, Complexity of automaton identification from given data, Information
  and control 37~(3) (1978) 302--320.

\bibitem{neider2013regular}
D.~Neider, N.~Jansen, Regular model checking using solver technologies and
  automata learning, NASA Formal Methods Symposium (2013) 16--31.

\bibitem{heule2010exact}
M.~J. Heule, S.~Verwer, Exact dfa identification using sat solvers,
  International Colloquium on Grammatical Inference (2010) 66--79.

\bibitem{neider2014applications}
D.~Neider, Applications of automata learning in verification and synthesis,
  Ph.D. thesis, Hochschulbibliothek der Rheinisch-Westf{\"a}lischen Technischen
  Hochschule Aachen (2014).

\bibitem{jin2022creativity}
M.~Jin, Z.~Ma, K.~Jin, H.~H. Zhuo, C.~Chen, C.~Yu, Creativity of ai: Automatic
  symbolic option discovery for facilitating deep reinforcement learning, in:
  Proceedings of the AAAI Conference on Artificial Intelligence, Vol.~36, 2022,
  pp. 7042--7050.

\bibitem{li2019houseexpo}
L.~Tingguang, H.~Danny, L.~Chenming, Z.~Delong, W.~Chaoqun, M.~Q.-H. Meng,
  Houseexpo: A large-scale 2d indoor layout dataset for learning-based
  algorithms on mobile robots, arXiv preprint arXiv:1903.09845.

\bibitem{morgado2014maxsat}
A.~Morgado, C.~Dodaro, J.~Marques-Silva, Core-guided maxsat with soft
  cardinality constraints (2014) 564--573.

\bibitem{imms-sat18}
A.~Ignatiev, A.~Morgado, J.~Marques{-}Silva,
  \href{https://doi.org/10.1007/978-3-319-94144-8_26}{{PySAT:} {A} {Python}
  toolkit for prototyping with {SAT} oracles} (2018) 428--437\href
  {http://dx.doi.org/10.1007/978-3-319-94144-8_26}
  {\path{doi:10.1007/978-3-319-94144-8_26}}.
\newline\urlprefix\url{https://doi.org/10.1007/978-3-319-94144-8_26}

\bibitem{wen2018flipout}
Y.~Wen, P.~Vicol, J.~Ba, D.~Tran, R.~Grosse,
  \href{https://openreview.net/forum?id=rJNpifWAb}{Flipout: Efficient
  pseudo-independent weight perturbations on mini-batches}.
\newline\urlprefix\url{https://openreview.net/forum?id=rJNpifWAb}

\bibitem{watkins1992q}
C.~J. Watkins, P.~Dayan, Q-learning, Machine learning 8~(3-4) (1992) 279--292.

\end{thebibliography}

\appendix

\textcolor{blue}{
\section{} \label{sec:appendix}
}

We provide here the proof of Theorem \ref{th:main}.
We first need some necessary concepts, starting with the \textit{attainable trajectory}:

\begin{definition} \label{def:m-attainable traj}
Let $\mathcal{M} = (S, s_I,A, \mathcal{T}, R, \mathcal{AP}, \hat{L}, \gamma)$ be an \textcolor{blue}{l-MDP} and $m \in \mathbb{N}$ a natural number. We call a trajectory
$\zeta = s_0 a_0 s_1 \dots s_k a_k s_{k+1} \in (S\times A)^\ast \times S$ $m$-attainable if (i) $k\leq m$ and (ii) $\mathcal{T}(s_i,  a_i, s_{i+1}) > 0$ for each $i \in \{0, \dots, k\}$.
Moreover, we say that a trajectory $\zeta$ is attainable if there exists an $m \in \mathbb{N}$ such that $\zeta$ is $m$-attainable.

\end{definition}

Since the function $\mathsf{GetEpsilonGreedyAction}()$ (line 3 of Algorithm \ref{algo:QRM_mod}) follows an $\varepsilon_a$-greedy policy, 
we can show that the proposed algorithm almost surely explores every attainable trajectory in the limit, i.e., with probability
1 as the number of episodes goes to infinity.

\begin{lemma}\cite[Lemma 1]{xu2020joint} \label{lem:m attainable traj}
Let $m \in \mathbb{N}$ be a natural number. Then, Algorithm \ref{algo:JIRP ext}, with $eplength \geq m$,  almost surely explores every $m$-attainable
trajectory at least once in the limit.
\end{lemma}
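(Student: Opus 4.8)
The plan is to show that each fixed $m$-attainable trajectory is realized within a single episode with a probability that is bounded below by a positive constant uniform across episodes, and then to invoke a Borel--Cantelli argument to conclude that it is realized infinitely often (hence at least once) almost surely. Since the environment is finite, there are only finitely many $m$-attainable trajectories, so a union bound over them finishes the argument.

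First I would fix an $m$-attainable trajectory $\zeta = s_0 a_0 s_1 \dots s_k a_k s_{k+1}$ with $s_0 = s_I$ and $k \leq m \leq eplength$. Because every episode is initialized at $s_I$ via $\mathsf{InitialState}()$ (line 1 of Alg. \ref{algo:QRM_mod}) and runs for at least $m$ steps, $\zeta$ fits within the opening steps of any episode. I would then lower-bound the probability that a given episode follows $\zeta$: at each step the function $\mathsf{GetEpsilonGreedyAction}$ selects a uniformly random action with probability $\epsilon$, so every action---in particular $a_i$---is chosen with probability at least $\epsilon/|A| > 0$, irrespective of the current $q$-functions or hypothesis reward machine; given $a_i$, the MDP transitions to $s_{i+1}$ with probability $\mathcal{T}(s_i, a_i, s_{i+1}) > 0$, which is positive precisely because $\zeta$ is $m$-attainable. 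Multiplying over $i = 0, \dots, k$ yields the bound
\begin{align*}
p_\zeta := \prod_{i=0}^{k} \frac{\epsilon}{|A|}\, \mathcal{T}(s_i, a_i, s_{i+1}) > 0
\end{align*}
on the probability that an episode realizes $\zeta$, and crucially $p_\zeta$ does not depend on the episode index $n$.

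The main obstacle is that the episodes are not independent: the $q$-functions, the hypothesis reward machine $\mathcal{H}$, the counterexample set $X$, and the belief $\hat{\mathcal{L}}_h$ all carry over (and are occasionally re-initialized) across episodes, so the classical second Borel--Cantelli lemma for independent events does not apply directly. To handle this I would let $\mathcal{F}_{n-1}$ denote the $\sigma$-algebra generated by everything up to the start of episode $n$ and let $E_n$ be the event that episode $n$ realizes $\zeta$. The key observation is that the bound above is in fact a bound on the \emph{conditional} probability, $\Pr(E_n \mid \mathcal{F}_{n-1}) \geq p_\zeta$ almost surely, since the $\epsilon$-greedy floor $\epsilon/|A|$ and the Markov transition probabilities are unaffected by the past history once the episode is reset to $s_I$. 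Consequently $\sum_{n} \Pr(E_n \mid \mathcal{F}_{n-1}) \geq \sum_n p_\zeta = \infty$, and the conditional (L\'evy) extension of the second Borel--Cantelli lemma gives $\Pr(E_n \text{ infinitely often}) = 1$; in particular $\zeta$ is explored at least once with probability one.

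Finally, I would note that $S$ and $A$ are finite and $k \leq m$, so there are only finitely many $m$-attainable trajectories. Intersecting the probability-one events ``$\zeta$ is explored at least once'' over this finite collection---equivalently, applying a union bound to the complementary null events---shows that almost surely \emph{every} $m$-attainable trajectory is explored at least once in the limit, which is the claim.
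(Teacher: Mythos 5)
Your proof is correct and follows essentially the same route as the proof this paper relies on (the paper omits the argument and defers to Lemma 1 of \cite{xu2020joint}): a uniform positive lower bound on the per-episode probability of realizing any fixed $m$-attainable trajectory, obtained from the $\epsilon$-greedy exploration floor $\epsilon/|A|$ and the positivity of the transition probabilities, followed by a Borel--Cantelli argument and a union bound over the finitely many such trajectories. Your explicit handling of the dependence between episodes via the conditional (L\'evy) extension of the second Borel--Cantelli lemma is a welcome point of rigor that the cited proof treats only implicitly.
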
 


Similar to  Def. \ref{def:m-attainable traj}, we call an observed label sequence  $\hat{\lambda} = \hat{\ell}_0,\dots,\hat{\ell}_k$ ($m$-)attainable on $\hat{\mathcal{L}}_h$ if there exists an ($m$-)attainable
trajectory $s_0 a_0 s_1 \dots s_k a_k s_{k+1}$ such that $\hat{\ell}_j = \hat{L}( \hat{\mathcal{L}}_h, s)$  for each $j \in \{0, \dots, k\}$. 

Consider now Algorithm \ref{algo:JIRP ext} and assume that the condition of $\mathsf{SignifChange}(\hat{\mathcal{L}}_h, \hat{\mathcal{L}}_j)$ (line 13) is not satisfied after a certain number of episodes, i.e., $\hat{\mathcal{L}}_h$ remains fixed. Then, Lemma \ref{lem:m attainable traj} implies that Algorithm \ref{algo:JIRP ext} almost surely explores every $m$-attainable label sequences on $\hat{\mathcal{L}}_h$ in the limit, formalized in the following corollary.

\begin{corollary}
Assume that there exists \textcolor{blue}{an} $n_r > 0$ such that $\mathsf{SignifChange}(\hat{\mathcal{L}}_h, \hat{\mathcal{L}}_j) = \mathsf{False}$  (line 13 of Algorithm \ref{algo:JIRP ext}), for all episodes $n > n_r$. Then Algorithm \ref{algo:JIRP ext}, with $eplength \geq m$, explores almost surely every $m$-attainable label sequence on $\hat{\mathcal{L}}_h$ in the limit.
\end{corollary}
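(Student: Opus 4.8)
The plan is to use the hypothesis to freeze the belief $\hat{\mathcal{L}}_h$ and then reduce the statement to Lemma~\ref{lem:m attainable traj} applied to the tail of episodes. First I would observe that the assumption $\mathsf{SignifChange}(\hat{\mathcal{L}}_h, \hat{\mathcal{L}}_j) = \mathsf{False}$ for all $n > n_r$ means that lines 14--17 of Algorithm~\ref{algo:JIRP ext} are never executed after episode $n_r$, so the belief $\hat{\mathcal{L}}_h$ is never replaced and remains equal to some fixed distribution, say $\hat{\mathcal{L}}_h^\ast$, for every episode $n > n_r$. Consequently, the map $s \mapsto \hat{L}(\hat{\mathcal{L}}_h^\ast, s)$ is a single, deterministic labelling function throughout the ``frozen'' regime $n > n_r$. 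This is the key structural fact: during this regime, the observed label sequence appended by QRM\_episode\_mod (line 16 of Algorithm~\ref{algo:QRM_mod}) along any explored trajectory $s_0 a_0 s_1 \dots s_k a_k s_{k+1}$ is precisely $\hat{L}(\hat{\mathcal{L}}_h^\ast, s_0) \dots \hat{L}(\hat{\mathcal{L}}_h^\ast, s_k)$, i.e., exactly the label sequence that the definition of ``$m$-attainable on $\hat{\mathcal{L}}_h$'' associates with that trajectory.

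Given this, the claim reduces to showing that every $m$-attainable \emph{trajectory} is explored at least once in some episode $n > n_r$, almost surely: if that holds, then its recorded trace is the corresponding $m$-attainable label sequence on $\hat{\mathcal{L}}_h$, and since by definition every $m$-attainable label sequence on $\hat{\mathcal{L}}_h$ arises from at least one $m$-attainable trajectory, each such label sequence is explored. To obtain the trajectory-level guarantee I would re-run the Borel--Cantelli argument underlying Lemma~\ref{lem:m attainable traj}, but restricted to the tail of episodes $n > n_r$. Concretely, fix an $m$-attainable trajectory $\zeta = s_0 a_0 \dots s_k a_k s_{k+1}$ with $s_0 = s_I$ and $k \le m \le eplength$. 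In any single episode, the $\epsilon$-greedy selection in $\mathsf{GetEpsilonGreedyAction}$ chooses each action with probability at least $\epsilon/|A|$ regardless of the current $q$-functions, and each required transition has $\mathcal{T}(s_i, a_i, s_{i+1}) > 0$ by attainability; multiplying over the $k+1 \le eplength$ steps gives a probability bounded below by a fixed constant $\delta > 0$ that the episode traces out $\zeta$, uniformly over episodes and over the (history-dependent) $q$-values.

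Summing these uniform lower bounds over the infinitely many episodes $n > n_r$ diverges, so by the conditional (second) Borel--Cantelli lemma $\zeta$ is explored infinitely often, and in particular at least once, among episodes $n > n_r$, almost surely. Finally, since $S$ and $A$ are finite and $k \le m$, there are only finitely many $m$-attainable trajectories, hence finitely many $m$-attainable label sequences on $\hat{\mathcal{L}}_h$; a finite intersection of probability-one events has probability one, which yields the corollary.

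The step I expect to be the main obstacle is the transfer of Lemma~\ref{lem:m attainable traj} from the full run to the tail $n > n_r$: Lemma~\ref{lem:m attainable traj} is phrased over all episodes, whereas the freezing hypothesis only guarantees a fixed labelling map eventually, so one must argue that discarding the first $n_r$ episodes does not destroy the exploration guarantee. This is resolved precisely by the observation that dropping finitely many initial terms leaves the Borel--Cantelli sum $\sum_{n > n_r} \delta = \infty$ unchanged, together with the uniform-in-history lower bound $\delta$ that makes the $\epsilon$-greedy exploration probabilities independent of how the $q$-functions have evolved up to episode $n_r$.
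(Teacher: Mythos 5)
Your proposal is correct and follows essentially the same route as the paper, which presents this corollary as an immediate consequence of Lemma~\ref{lem:m attainable traj} once the assumption guarantees that $\hat{\mathcal{L}}_h$ is frozen (so that $\hat{L}(\hat{\mathcal{L}}_h,\cdot)$ is a single fixed labelling map) for all episodes $n > n_r$. Your additional care in transferring the exploration guarantee to the tail of episodes $n > n_r$ via a uniform $\epsilon$-greedy lower bound and the second Borel--Cantelli lemma fills in a detail the paper leaves implicit, but it does not constitute a different approach.
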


Therefore, if Algorithm \ref{algo:JIRP ext} explores sufficiently many $m$-attainable label sequences on some distribution $\hat{\mathcal{L}}_h$ and for a large enough value of $m$, it is guaranteed to infer a reward machine that is ``good enough" in the sense that it is equivalent to the reward machine encoding the
reward function $R$ on $\hat{\mathcal{L}}_h$ and on all attainable label sequences on $\hat{\mathcal{L}}_h$, assuming that such a reward machine exists (see Example \ref{ex:office}). This is formalized in the next lemma.

\begin{lemma}\cite[Lemma 2]{xu2020joint} \label{lem:RM learn}
Let $\mathcal{M} = (S, s_I,A, \mathcal{T}, R, \mathcal{AP}, \hat{L}, \gamma)$ be an \textcolor{blue}{l-MDP} and assume that there exists a reward machine $\mathcal{A}_h$ that encodes the reward function $R$ on $\hat{L}(\hat{\mathcal{L}}_h,\cdot)$ for some belief $\hat{\mathcal{L}}_h$.  Assume that there exists \textcolor{blue}{an} $n_r > 0$ such that $\mathsf{SignifChange}(\hat{\mathcal{L}}_h, \hat{\mathcal{L}}_j)$ $=$ $\mathsf{False}$  (line 13 of Alg. \ref{algo:JIRP ext}), for all episodes $n>n_r$. Then, Algorithm \ref{algo:JIRP ext}, with $\mathsf{eplength} \geq 2^{| \mathcal{M} |+1}(|\mathcal{A}_h| + 1)-1$, almost surely learns a reward machine in the limit that is equivalent to $\mathcal{A}_h$ on all attainable label sequences on $\hat{\mathcal{L}}_h$. 
\end{lemma}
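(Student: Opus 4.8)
The plan is to replay, in a reduced deterministic setting, the convergence-in-the-limit argument of JIRP \cite{xu2020joint}. First I would exploit the standing assumption that $\mathsf{SignifChange}(\hat{\mathcal{L}}_h,\hat{\mathcal{L}}_j)=\mathsf{False}$ for all $n>n_r$: this freezes the belief $\hat{\mathcal{L}}_h$, so the observed labelling map $\hat{L}(\hat{\mathcal{L}}_h,\cdot):S\to 2^{\mathcal{AP}}$ becomes a fixed deterministic function, and every trajectory produces a unique observed label sequence. Consequently, after episode $n_r$ the problem is exactly the known-labelling JIRP problem in which $\hat{L}(\hat{\mathcal{L}}_h,\cdot)$ plays the role of the labelling function and $\mathcal{A}_h$ is the target reward machine encoding $R$ on these observed labels; all subsequent steps are carried out in this reduced setting.

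Next I would establish stabilization of the hypothesis. Because $\mathcal{A}_h$ encodes $R$ on $\hat{\mathcal{L}}_h$, every trace $(\lambda,\rho)$ collected by QRM\_episode\_mod satisfies $\mathcal{A}_h(\lambda)=\rho$, so $\mathcal{A}_h$ is consistent with the sample $X$ at all times; therefore $\mathsf{Infer}(X)$ always returns a minimal consistent reward machine with at most $|\mathcal{A}_h|$ states. Since the input alphabet $2^{\mathcal{AP}}$ is finite, the state count is bounded, and only finitely many distinct reward values occur along attainable trajectories of bounded length, the hypotheses are drawn from a finite pool. Each hypothesis change is triggered by a counterexample that is added to the sample $X$ — which is no longer reset once $n>n_r$ — and that the replaced machine fails; hence no discarded machine can ever be re-inferred, and the finite pool forces only finitely many changes. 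Thus the hypothesis stabilizes almost surely to some $\mathcal{H}_\infty$.

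I would then use exploration to pin $\mathcal{H}_\infty$ down up to depth $m:=2^{|\mathcal{M}|+1}(|\mathcal{A}_h|+1)-1$. With $\mathsf{eplength}\geq m$, the corollary above guarantees that almost surely every $m$-attainable label sequence on $\hat{\mathcal{L}}_h$ is explored in the limit. If any explored $m$-attainable trace $(\lambda,\rho)$ satisfied $\mathcal{H}_\infty(\lambda)\neq\rho$ after stabilization, it would be recorded as a counterexample and change the hypothesis, contradicting stabilization. Hence $\mathcal{H}_\infty(\lambda)=\rho=\mathcal{A}_h(\lambda)$ for every $m$-attainable observed label sequence, i.e.\ $\mathcal{H}_\infty$ and $\mathcal{A}_h$ agree on all $m$-attainable label sequences on $\hat{\mathcal{L}}_h$.

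The main obstacle is the final step: upgrading agreement on $m$-attainable sequences to agreement on all attainable sequences, which is where the precise value of $m$ is consumed. I would argue by contradiction using a shortest attainable label sequence on which $\mathcal{H}_\infty$ and $\mathcal{A}_h$ disagree. To bound its length I would run, in parallel, a subset construction over $S$ that tracks the set of MDP states reachable by attainable trajectories realizing the current label prefix — contributing the $2^{|\mathcal{M}|}$ factor — together with the states of $\mathcal{H}_\infty$ and $\mathcal{A}_h$, using $|\mathcal{H}_\infty|\leq|\mathcal{A}_h|$ to contribute the $(|\mathcal{A}_h|+1)$ factor. A pigeonhole argument on any disagreeing attainable path longer than $m$ produces a strictly shorter disagreeing attainable path, so the shortest such sequence has length at most $m$ and is therefore $m$-attainable, contradicting the agreement established above. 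Hence $\mathcal{H}_\infty$ is equivalent to $\mathcal{A}_h$ on all attainable label sequences on $\hat{\mathcal{L}}_h$, which is the claim. I expect the delicate, error-prone part to be the exact product/pumping bookkeeping needed to match the stated constant $2^{|\mathcal{M}|+1}(|\mathcal{A}_h|+1)-1$ rather than merely a looser bound.
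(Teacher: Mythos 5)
Your overall architecture is exactly that of the proof the paper points to (the paper's ``proof'' of this lemma is a one-line deferral to Lemma 2 of the JIRP paper \cite{xu2020joint}): freeze the belief so that $\hat{L}(\hat{\mathcal{L}}_h,\cdot)$ becomes a fixed labelling function and the problem reduces to known-labelling JIRP; stabilize the hypothesis using the facts that $\mathcal{A}_h$ is consistent with the no-longer-reset sample $X$, that $\mathsf{Infer}$ returns minimal consistent machines with at most $|\mathcal{A}_h|$ states over finite input/output alphabets, and that each counterexample permanently excludes the machine it refutes; then use the exploration corollary to force agreement of the stabilized hypothesis $\mathcal{H}_\infty$ with $\mathcal{A}_h$ on all $m$-attainable label sequences. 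These three steps are sound, with two small points worth making explicit: exploration of each $m$-attainable trajectory occurs infinitely often (not merely once), so a hypothetical disagreeing trace is indeed met \emph{after} stabilization; and the reset of $X$ in Algorithm \ref{algo:JIRP ext} is synchronized with the replacement of $\hat{\mathcal{L}}_h$, so after the last reset every trace in $X$ really is labelled under the frozen belief.

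The genuine gap is the final step --- precisely the one you flagged. A pigeonhole on the triple product (subset of $S$, state of $\mathcal{H}_\infty$, state of $\mathcal{A}_h$) bounds the shortest disagreeing attainable sequence by the number of such triples, which is $2^{|\mathcal{M}|}\cdot|\mathcal{H}_\infty|\cdot|\mathcal{A}_h| \leq 2^{|\mathcal{M}|}|\mathcal{A}_h|^2$; your claim that the two machine components together contribute only a factor $(|\mathcal{A}_h|+1)$ is unsupported, and nothing in your construction produces the extra factor of $2$ in $2^{|\mathcal{M}|+1}$. Since $2^{|\mathcal{M}|}|\mathcal{A}_h|^2 > 2^{|\mathcal{M}|+1}(|\mathcal{A}_h|+1)-1$ already for $|\mathcal{A}_h|\geq 3$, agreement on all $m$-attainable sequences with the stated $m=2^{|\mathcal{M}|+1}(|\mathcal{A}_h|+1)-1$ does not close your contradiction: the shortest disagreement could have length strictly between $m$ and $2^{|\mathcal{M}|}|\mathcal{A}_h|^2$, so your proof only establishes the lemma for a larger episode length. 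What actually yields the stated constant is an \emph{additive}, not multiplicative, bound: let $D$ be the subset automaton (at most $2^{|\mathcal{M}|}$ states) recognizing attainable label sequences on $\hat{\mathcal{L}}_h$, and form the two sink-completed Mealy machines $D\times\mathcal{H}_\infty$ and $D\times\mathcal{A}_h$, each with at most $2^{|\mathcal{M}|}|\mathcal{A}_h|+1 \leq 2^{|\mathcal{M}|}(|\mathcal{A}_h|+1)$ states, where the sink emits a fixed dummy output once the input prefix ceases to be attainable. Then invoke the classical Moore bound: two inequivalent Mealy machines with $n_1$ and $n_2$ states are distinguished by an input word of length at most $n_1+n_2-1$. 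Because both products enter their sinks at the same positions (sink entry depends only on $D$), any distinguishing word must already differ while the prefix is attainable, yielding an attainable disagreeing sequence of length at most $2^{|\mathcal{M}|}(|\mathcal{A}_h|+1)+2^{|\mathcal{M}|}(|\mathcal{A}_h|+1)-1 = 2^{|\mathcal{M}|+1}(|\mathcal{A}_h|+1)-1$. Substituting this short-witness lemma for your pumping step makes your argument match the cited proof.
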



Following Lemma \ref{lem:RM learn}, Algorithm \ref{algo:JIRP ext} will eventually learn the reward machine encoding the reward function on $\hat{\mathcal{L}}_h$, when $\hat{\mathcal{L}}_h$ is fixed.
Intuitively, Lemma \ref{lem:RM learn} suggests that, eventually, the algorithm is able to learn a reward machine that encodes the reward function on a \textit{fixed} estimate $\hat{L}(\hat{\mathcal{L}}_h,\cdot)$, i.e., without the updates from the new observations.  
Nevertheless, the potential perception updates (line 14 of Algorithm \ref{algo:QRM_mod}) might prevent the algorithm \textcolor{blue}{from learning a reward machine}, since $\hat{\mathcal{L}}_h$ might be changing after a finite number of episodes. However, if the observation model is accurate (or inaccurate) enough and the agent explores sufficiently many label sequences, intuition suggests that $\hat{\mathcal{L}}_j$ will be constantly improving, leading to less frequent updates with respect to the divergence test (line 13 of Algorithm \ref{algo:JIRP ext}). Therefore, there exists a finite number of episodes, after which $\hat{\mathcal{L}}_h$ will be fixed to some belief $\hat{\mathcal{L}}_f$ and $\hat{L}(\hat{\mathcal{L}}_f,\cdot)$ will be ``close'' to the ground-truth function $L_G$. By applying then Lemma \ref{lem:RM learn}, we conclude that Algorithm \ref{algo:JIRP ext} will learn the reward machine that encodes the reward function on $\hat{\mathcal{L}}_f$, and consequently converge to the $q$-function that defines an optimal policy. This is used in the proof of Theorem \ref{th:main} that follows.

\begin{proof}[Proof of Theorem \ref{th:main}]
Note first that the $\varepsilon_a$-greedy action policy, imposed by the function  $\mathsf{GetEpsilonGreedyAction}()$, and $eplength \geq |\mathcal{M}|$ imply that every action-pair of $\mathcal{M}$ will be visited infinitely often. 
Moreover, $\mathsf{SignifChange}(\hat{\mathcal{L}}_h, \hat{\mathcal{L}}_j)$ will be always false for all episodes $n  > n_f$, since $\hat{\mathcal{L}}_h$ will be an accurate enough estimate.  By definition, the reward machine $\mathcal{A}$ that encodes the reward function on the ground truth encodes  the reward function on $\hat{\mathcal{L}}_h$ as well.  Therefore, Lemma \ref{lem:RM learn} guarantees that Algorithm \ref{algo:JIRP ext} eventually learns a reward machine $\mathcal{H}$ that is equivalent to $\mathcal{A}$ on all attainable label sequences.   

According to Observation 1 of \cite{icarte2018using}, given the \textcolor{blue}{l-MDP} $\mathcal{M} = (S, s_I,A, \mathcal{T}, R, \mathcal{AP}, \hat{L}, \gamma)$ and the reward machine $\mathcal{A} = (V, v_I, 2^\mathcal{AP}, \mathcal{R}, \delta, \sigma)$, one can construct an \textcolor{blue}{l-MDP} $\mathcal{M}_\mathcal{H} = ( S\times V, (s_I,v_I), A, \mathcal{T}', R', \mathcal{AP}, \hat{L}, \gamma)$ with 
\begin{align*}
&\mathcal{T}'((s,v),a,(s',v')) = \left\{ 
\begin{matrix} 
\mathcal{T}(s,a,s'), & \text{if } v' = \delta(v,L(s')) \\
0 & \text{otherwise}
\end{matrix} 
\right. \\ 
&R'((s,v),a,(s',v')) = \sigma(v, L(s'))
\end{align*}
and a 
Markovian reward function such that every attainable label sequence of $\mathcal{M}_\mathcal{H}$ receives the same reward as in $\mathcal{M}$. Thus, an optimal policy for $\mathcal{M}_\mathcal{H}$ will be also optimal for $\mathcal{M}$. Due to the $\varepsilon_a$-greedy action policy, the episode length being $eplength \geq |\mathcal{M}|$, and the fact that the updates are done in parallel for all states of the reward machine $\mathcal{H}$,
every state-action pair of the \textcolor{blue}{l-MDP} $\mathcal{M}_H$ will be visited infinitely often. Therefore, convergence of q-learning for $\mathcal{M}_\mathcal{H}$ is guaranteed \cite{watkins1992q}. Finally, since an optimal policy for $\mathcal{M}_\mathcal{H}$ is optimal for $\mathcal{M}$, Algorithm \ref{algo:JIRP ext} converges to an optimal policy too. 

\end{proof}

	\end{document}